\newcommand{\argmin}{\operatornamewithlimits{argmin}}
\newcommand{\Rea}{\mathbb{R}}
\newcommand{\R}{\mathbb{R}}
\newcommand{\set}[1]{\left \{ #1\right \}}
\newcommand{\norm}[1]{\left\lVert#1\right\rVert}
\newcommand{\innerprod}[2]{\left \langle#1,#2\right\rangle}
\DeclareMathOperator{\rank}{rank}
\newcommand{\Nat}{\mathbb{N}}
\newcommand{\wasstwo}[2]{\wasstwoop \left( #1, #2\right)}
\newcommand{\wasstwoop}{\operatorname{W}_2}
\newcommand{\relu}{\operatorname{ReLU}}
\newcommand{\bmat}[1]{\begin{bmatrix} #1 \end{bmatrix}}
\newcommand{\paren}[1]{\left ( #1\right)}
\newcommand{\bracketed}[1]{\left [ #1\right]}
\newcommand{\pushf}[2]{{#1}{}_{\#}{#2}}
\def\mathcolor#1#{\@mathcolor{#1}}
\def\@mathcolor#1#2#3{%
  \protect\leavevmode
  \begingroup
    \color#1{#2}#3%
  \endgroup
}
\renewcommand{\eqref}[1]{{Eq.~\ref{#1}}}
\DeclareMathOperator*{\esssup}{ess\,sup}
\DeclareMathOperator*{\range}{Range}
\newtheorem{assumption}{Assumption}
\newcommand\restr[2]{{
  \left.\kern-\nulldelimiterspace 
  #1 
  \vphantom{
  \big|
  } 
  \right|_{#2} 
  }}
    \newtheorem{lemma}{Lemma}
    \newtheorem{theorem}{Theorem}
    \newtheorem{corollary}{Corollary}
    \theoremstyle{definition}
    \newtheorem{definition}{Definition}
    \newtheorem{example}{Example}
    \theoremstyle{remark}
    \newtheorem{remark}{Remark}
\def\bszero{\boldsymbol{0}}
\DeclareMathOperator{\GL}{GL}
\definecolor{orange}{RGB}{255,127,0}
\definecolor{blackchocolate}{HTML}{191102}
\definecolor{rosewood}{HTML}{510d0a}
\definecolor{rufous}{HTML}{A31B14}
\definecolor{citron}{HTML}{a29f15}
\definecolor{orangeyellow}{HTML}{f3b61f}
\definecolor{teagreen}{HTML}{bbd8b3}
\definecolor{gray}{RGB}{128,128,128}
\definecolor{lightcitron}{RGB}{189,187,91}
\definecolor{lightrufous}{RGB}{190,95,90}
\definecolor{steelblue}{HTML}{4C86A8}
\newcommand{\hlblack}[1]{{\color{blackchocolate}#1}}
\newcommand{\hlred}[1]{{\color{rufous}#1}}
\newcommand{\hlgray}[1]{{\color{gray}#1}}
\newcommand{\hlorange}[1]{{\color{orange}#1}}
\newlist{todolist}{itemize}{2}
\setlist[todolist]{label=$\square$}
\DeclareMathOperator*{\emb}{emb}
\newcommand{\lip}{\operatornamewithlimits{Lip}}
\renewcommand{\boxed}[1]{\text{\fboxsep=.2em\fbox{\m@th$\displaystyle#1$}}}
\title{Universal Joint Approximation of Manifolds and Densities by Simple Injective Flows}
\author{
  Michael Puthawala \\
  Rice University \\
  \texttt{map19@rice.edu} \\
  \And
  Matti Lassas \\
  University of Helsinki \\
  \texttt{matti.lassas@helsinki.fi} \\
  \And
  Ivan Dokmani\'c \\
  University of Basel \\
  \texttt{ivan.dokmanic@unibas.ch} \\
  \And
  Maarten de Hoop \\
  Rice University \\
  \texttt{mdehoop@rice.edu} \\
}
\begin{document}

\maketitle

\begin{abstract}
    We study approximation of probability measures supported on $n$-dimensional manifolds embedded in $\R^m$ by \emph{injective flows}---neural networks composed of invertible flows and injective layers. We show that in general, injective flows between $\R^n$ and $\R^m$ universally approximate measures supported on images of \emph{extendable} embeddings, which are a subset of standard embeddings: when the embedding dimension $m$ is small, topological obstructions may preclude certain manifolds as admissible targets. When the embedding dimension is sufficiently large, $m \geq 3n+1$, we use an argument from algebraic topology known as the clean trick to prove that the topological obstructions vanish and injective flows universally approximate any differentiable embedding. Along the way we show that the studied injective flows admit efficient projections on the range, and that their optimality can be established "in reverse," resolving a conjecture made in \citep{brehmer2020flows}
\end{abstract}

\section{Introduction}

Invertible flow networks emerged as powerful deep learning models to learn maps between distributions \citep{durkan2019cubic,grathwohl2018ffjord,huang2018neural,jaini2019sum,kingma2016improving,kingma2018glow,kobyzev2020normalizing,kruse2019hint,papamakarios2019normalizing}. They generate high-quality samples \citep{kingma2018glow} and facilitate solving scientific inference problems \citep{brehmer2020flows,kruse2021benchmarking}. 

By design, however, invertible flows  are bijective and may not be a natural choice when the target distribution has low-dimensional support. This problem can be overcome by combining bijective flows with expansive, injective layers, which map to higher dimensions \citep{brehmer2020flows,cunningham2020normalizing,kothari2021trumpets}. Despite their empirical success, the theoretical aspects of such globally injective architectures are not well understood.






In this work, we address approximation-theoretic properties of injective flows. We prove that under mild conditions these networks universally approximate probability measures supported on low-dimensional manifolds and describe how their design enables applications to inference and inverse problems.

\subsection{Prior Work}
The idea to combine invertible (coupling) layers with expansive layers has been explored by \citep{brehmer2020flows} and \citep{kothari2021trumpets}.
\citet{brehmer2020flows} combine two flow networks with a simple expansive element (in the sense made precise in Section \ref{sec:expans}) and obtain a network that parameterizes probability distributions supported on manifolds.\footnote{More precisely, distributions on manifolds are parameterized by the pushforward (via their network) of a simple probability measure in the latent space.}

\citet{kothari2021trumpets} propose expansive coupling layers and build networks similar to that of \citet{brehmer2020flows} but with an arbitrary number of expressive and expansive elements. They observe that the resulting network trains very fast with a small memory footprint, while producing high-quality samples on a variety of benchmark datasets. 

While (to the best of our knowledge) there are no approximation-theoretic results for injective flows, there exists a body of work on universality of invertible flows; see \citet{kobyzev2020normalizing} for an overview. Several works show that certain bijective flow architectures are distributionally universal. This was proved for autoregressive flows with sigmoidal activations by  \citet{huang2018neural} and for sum-of-squares polynomial flows \citep{jaini2019sum}. \citet{teshima2020coupling} show that several flow networks including those from \citet{huang2018neural} and \citet{jaini2019sum} are also universal approximators of diffeomorphisms.

The injective flows considered here have key applications in inference and inverse problems; for an overview of deep learning approaches to  inverse problems, see \citep{arridge2019solving}.
\citet{bora2017compressed} proposed to regularize compressed sensing problems by constraining the recovery to the range of (pre-trained) generative models. Injective flows with efficient inverses as generative models give an efficient algorithmic projection\footnote{Idempotent but in general not orthogonal.} on the range, which facilitates implementation of reconstruction algorithms.
An alternative approach is Bayesian, where flows are used to obtain tractable variational approximations of posterior distributions over parameters of interest, via supervised training on labeled input-output data pairs.
\citet{ardizzone2018analyzing} encode the dimension-reducing forward process by an invertible neural network (INN), with additional outputs used to encode posterior variability. Invertibility guarantees that a model of the  inverse process is learned implicitly. For a given measurement, the inverse pass of the INN approximates the posterior over parameters. 
\citet{sun2020deep} propose variational approximations of the posterior using an untrained deep generative model. They train a normalizing flow which produces samples from the posterior, with the prior and the noise model given implicitly by the regularized misfit functional. In \citet{kothari2021trumpets} this procedure is adapted to priors specified by injective flows which yields significant improvements in computational efficiency.

\vspace{-2.5mm}
\subsection{Our Contribution}

We derive new approximation results for neural networks composed of bijective flows and injective expansive layers, including those introduced by \citep{brehmer2020flows} and \citep{kothari2021trumpets}. We show that these networks universally jointly approximate a large class of manifolds and densities supported on them.

We build on the results of \citet{teshima2020coupling} and develop a new theoretical device which we refer to as the \textit{embedding gap}. This gap is a measure of how nearly a mapping from $\Rea^o \to \Rea^m$ embeds an $n$-dimensional manifold in $\Rea^m$, where $n \leq o$. We find a natural relationship between the embedding gap and the problem of approximating probability measures with low-dimensional support. 

We then relate the embedding gap to a relaxation of universality we call the \textit{manifold embedding property}. We show that this property captures the essential geometric aspects of universality and uncover important topological restrictions on the approximation power of these networks, to our knowledge, heretofore unknown in the literature. We give an example of an absolutely continuous measure $\mu$ and embedding $f \colon \Rea^2 \to \Rea^3$ such that $\pushf{f}{\mu}$ can not be approximated with combinations of flow layers and linear expansive layers. This may be surprising since it was previously conjectured that networks such as those of \citet{brehmer2020flows} can approximate any ``nice'' density supported on a ``nice'' manifold. We establish universality for manifolds with suitable topology, described in terms of \textit{extendable embeddings}. We find that the set of extendable embeddings is a proper subset of all embeddings, but when $m \geq 3n + 1$, via an application of the \textit{clean trick} from algebraic topology, we show that all diffeomorphisms are extendable and thus injective flows approximate distributions on arbitrary manifolds. Our universality proof also implies that optimality of the approximating network can be established in reverse: optimality of a given layer can be established without optimality of preceding layers. This settles a (generalization of a) conjecture posed for a three-part network (composed of two flow networks and zero padding) 
in \citep{brehmer2020flows}.
Finally, we show that these universal architectures are also practical and admit exact layer-wise projections, as well as other properties discussed in Section \ref{sec:additional-properties}.

\section{Architectures Considered}
\label{sec:desc-of-arch}

Let $C(X,Y)$ denote the space of continuous functions $X \to Y$. Our goal is to make statements about networks in $\cF \subset C(X, Y)$ that are of the form:
\begin{align}
    \label{eqn:network-def}
    \cF = \cT^{n_{L}}_{L} \circ \cR_{L}^{n_{L-1},n_{L}} \circ \dots \circ \cT_1^{n_{1}} \circ \cR^{n_0,n_1}_1 \circ \cT^{n_{0}}_0
\end{align}
where $\cR^{n_{\ell-1},n_\ell}_{\ell} \subset C(\Rea^{n_{\ell-1}}, \Rea^{n_{\ell}})$,  $\cT^{n_\ell}_{\ell} \subset C(\Rea^{n_{\ell}},\Rea^{n_{\ell}}) \nonumber$, $L \in \Nat$, $n_0 = n$, $n_{L} = m$, and $n_{\ell} \geq n_{\ell-1}$ for $\ell = 1,\dots,L$. We introduce a well-tuned shorthand notation and write $\cH \circ \cG \coloneqq \set{h \circ g\colon h \in \cH, g \in \cG}$ throughout the paper.

We identify $\cR$ with the expansive layers and $\cT$ with the bijective flows. 
Loosely speaking, the purpose of the expansive layers is to allow the network to parameterize high-dimensional functions by low-dimensional coordinates in an injective way. The flow networks give the network the expressivity necessary for universal approximation of manifold-supported distributions. 

\subsection{Expansive Layers}
\label{sec:expans}

The expansive elements  transform an $n$-dimensional manifold $\cM$ embedded in $\Rea^{n_{\ell-1}}$, and embed it in a higher dimensional space $\Rea^{n_{\ell}}$. To preserve the topology of the manifold they are injective. We thus make the following assumptions about the expansive elements:

\begin{definition}[Expansive Element]
    A family of functions $\cR \subset C(\R^n,\R^m)$ is called an family of expansive elements if $m > n$, and each $R \in \cR$ is both injective and Lipschitz.
\end{definition}
Examples of expansive elements include
\begin{enumerate}
    \item[(R1)] Zero padding: $R(x) = \bmat{x^T,\bszero^{(m-n)}}^T$ where $\bszero^{(m-n)}$ is the zero vector \citep{brehmer2020flows}.
        

    \item[(R2)] Multiplication by an arbitrary full-rank matrix, or one-by-one convolution:
    \begin{align}
        R(x) = W x, \quad \text{or}\quad R(x) = w \star x
    \end{align}
    where $W \in \Rea^{m\times n}$ and $\rank(W) = n$ \citep{cunningham2020normalizing}, and $w$ is a convolution kernel $\star$ denotes convolution \citep{kingma2018glow}.
    \item[(R3)] Injective $\relu$ layers: $R{(x)} = \relu(W x)$, $W = \bmat{B^T,-D B^T ,M^T}^T$, {or }$R(x) = \relu\paren{\bmat{w^T, -w^T} \star x}$
    for matrix $B  \in \GL_n(\Rea)$, positive diagonal matrix $D  \in \Rea^{n \times n}$, and arbitrary matrix $M \in \Rea^{(m-2n)\times n}$  \citep{puthawala2020globally}.
    \item[(R4)] Injective $\relu$ networks \citep[Theorem 5]{puthawala2020globally}. These are functions  $R:\R^n\to \R^m$ of the form $R(x) = W_{L+1}\relu( \dots \relu( W_1 x + b_1)\dots ) + b_L$ where $W_\ell$ are $n_{\ell+1} \times n_{\ell}$ matrices and $b_{\ell}$ are the bias vectors in $\Rea^{n_{\ell+1}}$. The weight matrices $W_L$ satisfy the Directed Spanning Set (DSS) condition for $\ell\leq L$ (that make all layers injective) and $W_{L+1}$ is a generic matrix which makes the map $R:\R^n\to \R^m$ injective {where $m \geq 2n+1$}. Note that the DSS condition requires that $n_{\ell }\geq 2n_{\ell-1}$ for $\ell\le L$ and we have $n_1=n$ and $n_{L+1}=m$.
\end{enumerate}

{Continuous piecewise-differentiable functions with bounded gradients are always Lipschitz. Thus, the Lipschitzness assumption is automatically satisfied by feed-forward networks with piecewise-differentiable activation functions with bounded gradients. This includes compositions of $\relu$ and sigmoid layers.}

\subsection{Bijective Flow Networks}
\label{sec:expres}
The bulk of our theoretical analysis is devoted to the bijective flow networks, which bend the range of the expansive elements into the correct shape. We make the following assumptions about the expressive elements:
\begin{definition}[Bijective Flow Network]
    \label{def:express-element}
    Let $\cT \subset C(\Rea^{n}, \Rea^{n})$ for $n \in \Nat$. We call $\cT$ a family of bijective flow networks if every $T \in \cT$ is Lipschitz and bijective. 
\end{definition}

Examples of bijective flow networks include

\begin{enumerate}
    \item[(T1)] \textit{Coupling flows}, introduced by \citep{dinh2014nice} consider $R(\bx) = H_{k} \circ \dots \circ H_{1}(\bx)$ where
    \begin{align}
        \label{eqn:coupling-flow}
        H_{i}(\bx) = \bmat{h_{i}\paren{\bracketed{\bx}_{1:d},g_{i}\paren{\bracketed{\bx}_{d+1:n}}} \\ \bracketed{\bx}_{d+1:n}}.
    \end{align}
    In Eqn. \ref{eqn:coupling-flow}, $h_{i}\colon \Rea^{d} \times \Rea^{e} \rightarrow \Rea^d$ is invertible w.r.t. the first argument given the second, and $g_{i} \colon \Rea^{n - d} \rightarrow \Rea^e$ is arbitrary. Typically in practice the operation in Eqn. \ref{eqn:coupling-flow} is combined with additional invertible operations such as permutations, masking or convolutions \citep{dinh2014nice,dinh2016density,kingma2018glow}.
    \item[(T2)] \textit{Autoregressive flows}, introduced by \citet{kingma2016improving} are generalizations of triangular flows {$A \colon \Rea^n \to \Rea^n$ where for $i = 1,\dots,n$ the $i$'th value of $A$ is given by } of the form 
    \begin{align}
        \label{eqn:autoregressive-def}
        \bracketed{A}_i(\bx) = h_{i}\paren{\bracketed{\bx}_i, g_{i}\paren{\bracketed{\bx}_{1:i-1}}}
    \end{align}
     In Eqn. \ref{eqn:autoregressive-def}, $h_{i}\colon \Rea \times \Rea^m \rightarrow \Rea$ where again $h_{i}$ is invertible w.r.t. the first argument given the second, and $g_{i}\colon \Rea^{i-1} \rightarrow \Rea^m$ is arbitrary {except for $g_1 = \bszero$}. In \citet{huang2018neural}, the authors choose $h_{i}(x,\by)$, where $\by \in \Rea^m$, to be a multi-layer perceptron (MLP) of the form
     \begin{align}
         h_{i}(x,\by) = \phi \circ W_{p,\by}\circ\dots\circ \phi \circ W_{1,\by}(x)
     \end{align}
     where $\phi$ is a sigmoidal increasing non-linear activation function.
\end{enumerate}

\section{Main Results}
\label{sec:main-results}

\subsection{Embedding Gap}
\label{sec:main-results:preliminaries}


We call a function $f$ an embedding and denote it by  $f\in \emb(X,Y)$ if $f : X \to Y$ is continuous, injective, and $f^{-1}\colon f(X) \to X$ is continuous\footnote{Note that if $X$ is a compact set, then continuity of the of $f^{-1}\colon f(X) \to X$ is automatic, and need not be assumed \citep[Cor.\  13.27]{MR2548039}. Moreover, if $f:\R^n\to \R^m$ is a continuous injective map that satisfies $|f(x)|\to \infty$ as $|x|\to \infty$, then by
\citep[Cor.\ 2.1.23]{Mukherjee} the map $f^{-1}\colon f(\R^n) \to \R^n$ is continuous.}. { Also 
we denote by $ \emb^k(\R^n,\R^m)$ the set of maps $f\in \emb(\R^n,\R^m)\cap  C^k(\R^n,\R^m)$ which differential $df|_x:\R^n\to \R^m$ is injective at all points $x\in \R^n$.}
We now introduce the \textit{embedding gap}, a non-symmetric notion of distance between $f$ and $g$. This quantifies the degree to which a mapping $g \in \emb(\Rea^o,\Rea^m)$ fails to embed a manifold $\cM = f(K)$ for compact $K\subset \Rea^n$ where $f \in \emb(K,\Rea^m)$. Later in the paper, $f$ will be the function to be approximated, and $g$ an approximating flow-network. 

\begin{definition}[Embedding Gap]
    \label{def:b-k}
    Let {$n \leq p \leq o \leq m$,} $K \subset \Rea^n$ be compact and non-empty, $W \subset \Rea^o$ be compact and contain the closure of set $U$ which is open in the subspace topology of some vector subspace $V$ of dimension $p$, where $f \in \emb(K,\Rea^m)$ and $g\in \emb(W, \Rea^m)$. The Embedding Gap between $f$ and $g$ on $K$ and $W$ is
    \begin{align}
            \label{eqn:b-k-def}
            B_{K,W}(f,g) = \inf_{r \in \emb(f(K),g(W))}
            \norm{I - r}_{L^\infty(f(K))}
    \end{align}
    where $I\colon f(K) \to f(K)$ is the identity function and $\norm{h}_{L^\infty(X)} = \esssup_{x \in X} \norm{h{(x)}}_2$ for $h\colon X \to Y${, where $Y$ is some $L^\infty$ space}. We refer to the embedding gap between $f$ and $g$ without specifying $K$ and $W$ when it is clear from context.
\end{definition}

\begin{remark}
    As $W\subset \R^o$ contains  {$U$, }an open set {in $V$}, there is an affine map $A:\R^n\to V$ such that $A(K)\subset W$. Thus, the map $r_0=g\circ A\circ f^{-1}:f(K)\to g(W)$ is an injective continuous map from a compact set to its range and hence $r_0\in \emb(f(K),g(W))$. This proves that the infimum in \ref{eqn:b-k-def} is non-empty.
\end{remark}

Before giving properties of $B_{K,W}(f,g)$, we briefly describe its interpretation and meaning. We denote by $\cP(X)$ the set of probability measures over $X$. If the embedding gap between $f$ and $g$ is small, then $g^{-1}\circ r$ embeds the range of $f$ for an $r$ that is nearly the identity. Hence $g^{-1}$ nearly embeds the range of $f$ into $\R^o$. $B_{K,W}(f,g)$ also serves as an upper bound 
\begin{align*}
    \label{eqn:wasserstein-bound:1}
    \inf_{\mu_o \in \cP(W)}\  \wasstwo{\pushf{f}{\mu_n}}{\pushf{g}{\mu_o}} \leq B_{K,W}(f,g)
\end{align*}
where $\mu_n \in \cP(K)$ is given, and $\wasstwo{\nu_1}{\nu_2}$ denotes the Wasserstein-2 distance with $\ell^2$ ground metric \citep{villani2008optimal}. This is proven in Lemma \ref{lem:b-k-helper} part 9. The above result has a simple meaning in the context of machine learning. Suppose we want to learn a generative model $g$ to (approximately) sample from a probability measure $\nu$ with low-dimensional support, by applying $g$ to samples from a \textit{base} distribution $\mu_o$. Suppose further that $\nu$ is a pushforward of some (known or unknown) distribution $\mu_n$ via $f$. The embedding gap $B_{K, W}(f, g)$ then upper bounds the 2-Wasserstein distance between $\nu$ and $g_\# \mu_0$ for the best possible choice of $\mu_o$.\footnote{The choice of $p$-Wasserstein distance is suitable for measures with mismatched low-dimensional support; this has been widely exploited in training generative models \citep{arjovsky2017wasserstein}.}



In the context of optimal transport, the embedding $r$ can be interpreted as a candidate transport map from any measure pushed forward by $f$, that can be pulled back through $g$. Loosely speaking, for $\mu_o' = \pushf{g^{-1}\circ r \circ f}{\mu_n}$, $r$ transports $\pushf{f}{\mu_n}$ to $\pushf{g}{\mu'_o}$ with cost no more than $\norm{I - r}_{L^\infty(f(K))}$. See Fig. \ref{fig:b-k-visualization} for a visualization of the embedding gap between two toy functions. The embedding gap satisfies inequalities useful for studying networks of the form of Eqn. \ref{eqn:network-def}, see Lemma \ref{lem:b-k-helper}.

{In the remainder of this section we use the embedding gap to prove universality of neural networks. The set $f(K)$ will be a target manifold to approximate, and $g$ will be a neural network of the form \eqref{eqn:network-def}. The embedding gap requires $g$ to be a proper embedding and so, in particular, injective. This is why we require injectivity of both the expansive and bijective flow layers.}

\begin{figure}
    \centering
    \begin{subfigure}{.30\linewidth}
        \centering
        \includegraphics[width=\linewidth]{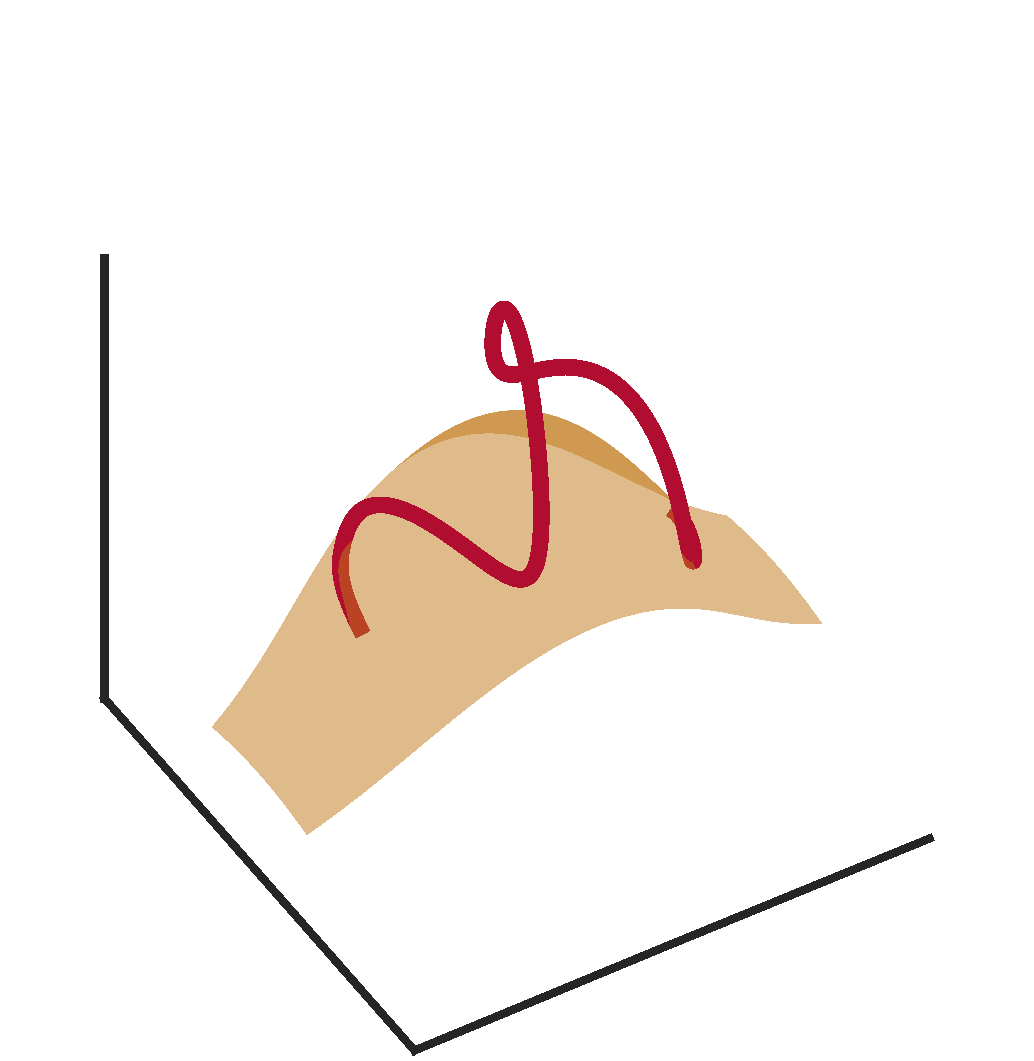}
        \label{fig:b-k-visualization:step-1}
    \end{subfigure}
    \begin{subfigure}{.30\linewidth}
        \centering
        \includegraphics[width=\linewidth]{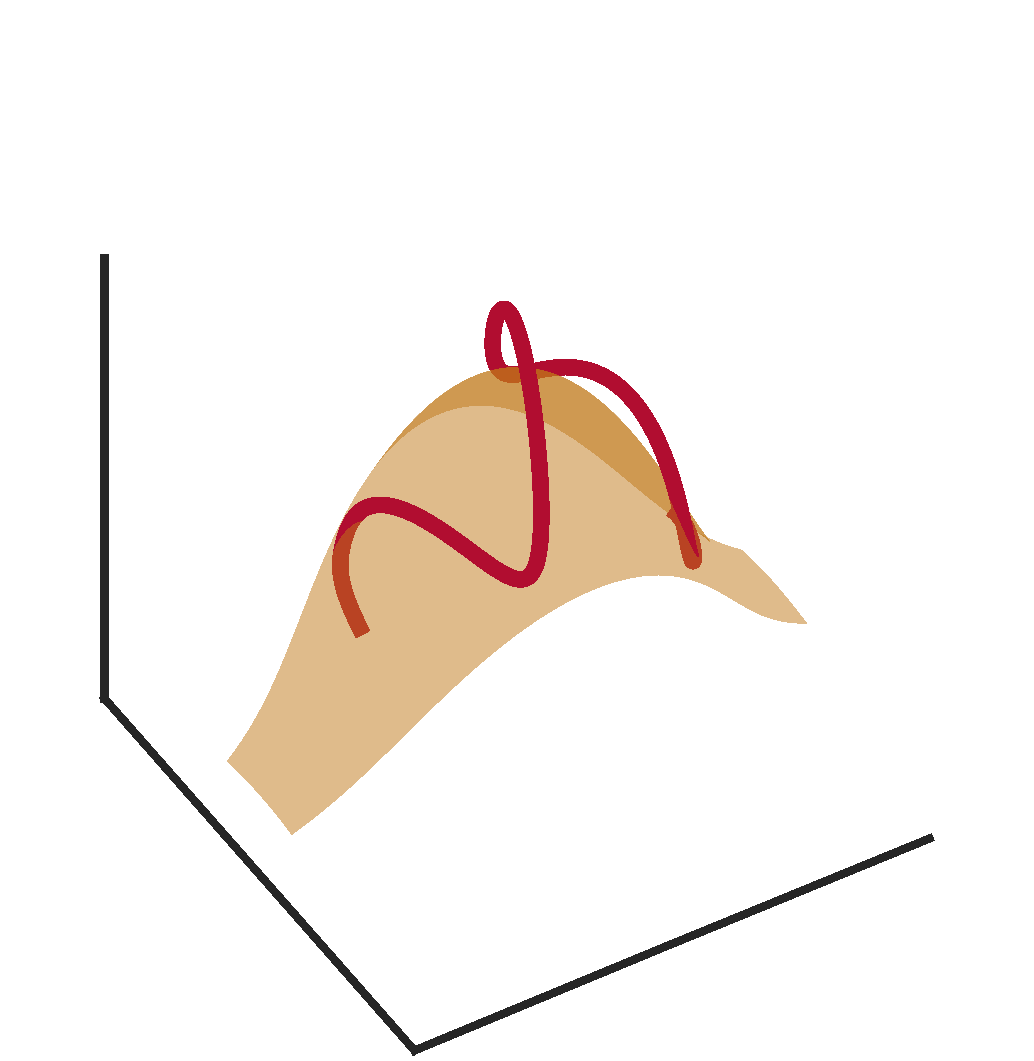}
        \label{fig:b-k-visualization:step-2}
    \end{subfigure}
    \begin{subfigure}{.30\linewidth}
        \centering
        \includegraphics[width=\linewidth]{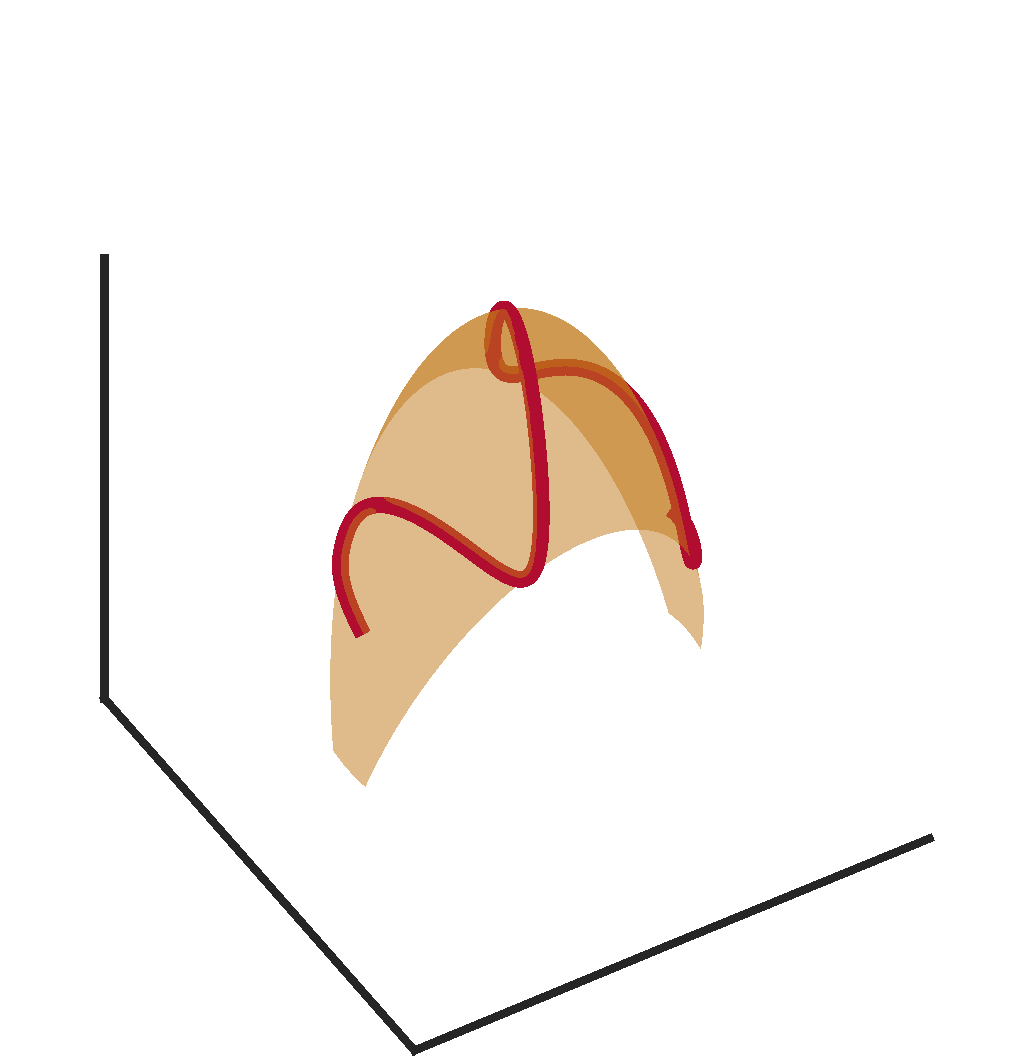}
        \label{fig:b-k-visualization:step-3}
    \end{subfigure} 
    \caption{A visualization of the embedding gap. In all three figures we plot $\hlred{f(K)}$ and $\hlorange{g_i(W)}$ for Left: $i = 1$, Center: $i = 2$ and Right: $i = 3$. Visually, we see that $\hlorange{g_i(W)}$ approaches $\hlred{f(W)}$ as {$i$} increases, and we compute $B_{K,W}(\hlred{f},\hlorange{g_1}) > B_{K,W}(\hlred{f},\hlorange{g_2}) > B_{K,W}(\hlred{f},\hlorange{g_3}) = 0$.}
    \label{fig:b-k-visualization}
\end{figure}

\subsection{Manifold Embedding Property}
\label{sec:main-results:manifold-embedding}

We now introduce a central concept, the manifold embedding property (MEP). A family of networks has the MEP if it can, as measured by the embedding gap, nearly embed a large class of manifolds of certain dimension and regularity. The MEP is a property of a family of functions $\cE \subset \emb({W},\Rea^m)$ where $W \subset \Rea^o$. In this manuscript, $\cE$ will always be formed by taking $\cE \coloneqq \cT \circ \cR$, where $\cR$ and $\cT$ are the expansive layers and bijective flow networks described in sections \ref{sec:expans} and \ref{sec:expres} respectively.


We note here that $\cE$ having the MEP is closely related to the question of whether or not a given $n$-dimensional manifold $\cM= f(K)$ for $f \in \emb(K,\R^m)$, $K\subset \Rea^n$, can be approximated by an $E \in \cE$. This choice of first applying (possibly non-universal) expansive layers, and then universal layers puts some topological restrictions on the expressivity, which we discuss in great detail in Section \ref{sec:top-obstructions}.

In anticipation of these topological difficulties, when we refer to the MEP, we consider it with respect to a class of functions $\cF \subset \emb(\Rea^n,\Rea^m)$. The MEP can be interpreted as a density statement, saying that our networks $\cE$ are dense in some set $\cF \subset \emb(\Rea^n,\Rea^m)$ in the topology induced by the `$B_{K,W}$ distance.' 
Two examples of $\cF$ that we are particularly interested in are the following. When $\cF = \emb(\Rea^n,\Rea^m)$, and also when each $f \in \cF$ can be written as $f = D \circ L$ where $L:\R^{m\times n}$ is a linear map of rank $n$ and $D:\R^m\to \R^m$ is a $C^k$ diffeomorphism with $k\ge 1$.

\begin{definition}[Manifold Embedding Property]
    \label{def:manif-embedding-problem}
    Let $\cE \subset \emb(\Rea^o,\Rea^m)$ and $\cF\subset \emb(\Rea^n,\Rea^m)$ be two families of functions. We say that $\cE$ has the $m,n,o$ Manifold Embedding Property (MEP) w.r.t. $\cF$ if for every compact {non-empty} set $K \subset \Rea^n$, $f \in \cF$, and $\epsilon > 0$, there is an $E \in \cE$ and a compact set $W \subset \Rea^o$
    such that the restriction of $f$ to $K$ and the restriction of $E$ to $W$ satisfies
    \begin{align}
        B_{K,W}(f,E) < \epsilon.
    \end{align}
    When it is clear from the context, we abbreviate the $m,n,o$ MEP w.r.t. $\cF$ simply by the $m,n,o$ MEP, or simply the MEP.
\end{definition}

We also present the following two lemmas which relate to the algebra of the MEP. 

\begin{lemma}
    \label{lem:comp-mep}
    Let $\cE^{p,o}_1 \subset \emb(\Rea^p,\Rea^o)$ have the $o, n, p$ {MEP} w.r.t. $\cF^{n,o}_1 \subset \emb(\Rea^n, \Rea^o)$, and likewise let $\cE^{o,m}_2 \subset \emb(\Rea^o,\Rea^m)$ have the $m, o, o$ MEP w.r.t. $\cF^{o,m} _2\subset \emb(\Rea^o, \Rea^m)$. If each $E^{o,m}_2 \in \cE^{o,m}_2$ is locally Lipschitz, then $\cE^{o,m}_2 \circ \cE^{p,o}_1$ has the $m, n, p$ MEP w.r.t. $\cF^{o,m}\circ \cF^{n,o}$.
\end{lemma}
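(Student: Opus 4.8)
The plan is to approximate the composite target $f = f_2 \circ f_1$ (with $f_1 \in \cF^{n,o}_1$, $f_2 \in \cF^{o,m}_2$) by first invoking the outer MEP to fix the outer layer $E_2$ together with its latent set $W_2$, and then \emph{pulling the target manifold back} through $E_2$ to obtain the manifold that the inner layer must reproduce. The key observation making this work is that the embedding gap only sees the images $f(K)$ and $E(W)$, not the parametrizations, so once $E_2$ is chosen it suffices to make $E_1(W)$ reproduce the $E_2$-preimage of the near-identity copy of $f_2(f_1(K))$ that sits inside $E_2(W_2)$. Approximating $f_1$ itself directly does \emph{not} suffice, since the outer MEP gives only a manifold-level (not pointwise) closeness of $E_2$ to $f_2$, which is why the pullback is essential.

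Concretely, fix a compact $K \subset \Rea^n$ and $\epsilon > 0$. First apply the $m,o,o$ MEP of $\cE^{o,m}_2$ to $f_2$ on the compact set $f_1(K) \subset \Rea^o$, obtaining $E_2 \in \cE^{o,m}_2$, a compact $W_2 \subset \Rea^o$, and an embedding $r_2 \colon f_2(f_1(K)) \to E_2(W_2)$ with $\norm{I - r_2}_{L^\infty} < \epsilon/2$. Since $E_2$ is injective and locally Lipschitz, it has a finite Lipschitz constant $L \coloneqq \lip(E_2)$ on the compact closed unit neighborhood of $W_2$, and $E_2^{-1}$ is defined on $E_2(W_2)$. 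Set
\begin{align*}
    g_1 \coloneqq E_2^{-1} \circ r_2 \circ f_2 \circ f_1 \colon K \to \Rea^o ,
\end{align*}
a composition of embeddings with image $g_1(K) \subseteq W_2$. Apply the $o,n,p$ MEP of $\cE^{p,o}_1$ to $g_1$ on $K$ with tolerance $\epsilon/(2L)$, producing $E_1 \in \cE^{p,o}_1$ and a compact $W \subset \Rea^p$ with $B_{K,W}(g_1, E_1) < \epsilon/(2L)$. I claim $E_2 \circ E_1$ together with $W$ witnesses the $m,n,p$ MEP at $f$.

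To finish I would introduce the intermediate map $g \coloneqq r_2 \circ f_2 \circ f_1 = E_2 \circ g_1$ and chain two inequalities from Lemma \ref{lem:b-k-helper}. The triangle inequality for $B$ gives $B_{K,W}(f,\, E_2 \circ E_1) \le B_{K,K}(f, g) + B_{K,W}(g,\, E_2 \circ E_1)$; the first term is at most $\norm{I - r_2}_{L^\infty} < \epsilon/2$ because $r_2$ embeds $f(K)$ onto $g(K)$ near the identity, and for the second term the Lipschitz-rescaling inequality applied to $g = E_2 \circ g_1$ gives $B_{K,W}(E_2 \circ g_1,\, E_2 \circ E_1) \le L\, B_{K,W}(g_1, E_1) < \epsilon/2$. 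Summing yields $B_{K,W}(f, E_2 \circ E_1) < \epsilon$. It is important to fix the outer tolerance $\epsilon/2$ first, as $L$ depends on $E_2$ and $W_2$, and only then choose the inner tolerance $\epsilon/(2L)$, which removes any circular dependence.

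The main obstacle is the legitimacy of handing $g_1$ to the inner MEP: one must know $g_1 \in \cF^{n,o}_1$. Since $g_1$ is manifestly an embedding, this is automatic in the principal case $\cF^{n,o}_1 = \emb(\Rea^n,\Rea^o)$ underlying the universality theorems (extending $g_1$ off $K$ to a global embedding if the MEP is phrased for maps on all of $\Rea^n$); for a restricted class one instead needs $\cF^{n,o}_1$ to be closed under the pullback $h \mapsto E_2^{-1}\circ r_2 \circ f_2 \circ h$, which is precisely what makes $\cF^{o,m}\circ\cF^{n,o}$ the natural target class in the conclusion. I would also verify the set bookkeeping in the triangle inequality, namely that $g(K) = E_2(g_1(K))$ is the common intermediate manifold and that $E_2^{-1}\circ r_2 \circ f_2$ carries $f_1(K)$ exactly onto $g_1(K) \subseteq W_2$, so that $E_2^{-1}$ and the inner witness are applied only where they are defined; these checks are routine once the pullback construction is set up.
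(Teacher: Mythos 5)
Your proposal is correct and follows essentially the same route as the paper's proof: the paper likewise applies the outer MEP to get $r \in \emb(f(K),E_2(W_2))$ with $\norm{I-r}_{L^\infty(f(K))}$ small, pulls the target back to $E_2^{-1}\circ r \circ f$, hands that map to the inner MEP, and closes the estimate via Lemma~\ref{lem:b-k-helper} point 8 --- which is exactly your chain of points 7 and 4, since point 8 is proved by precisely that combination. If anything you are more careful than the paper, which silently assumes the pulled-back map lies in $\cF^{n,o}_1$ and does not specify the compact set on which $\lip(E_2)$ is taken before $E_1$ is chosen, both issues you flag and resolve.
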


The proof of Lemma \ref{lem:comp-mep} is in Appendix \ref{sec:lem:comp-mep}. 

We note that when the elements of $\cE^{o,m}_2$ are differentiable, local Lipschitzness is automatic, and need not be assumed, see e.g. \citep[Ex. 10.2.6]{tao2009analysis}. We also record the following lemma, proved in \ref{sec:lem:comp-mep-requires-mep}, which is a weak-converse of Lemma \ref{lem:comp-mep}. It states that if $\cE^{o,m}_2 \circ \cE^{p,o}_1$ has the $m,n,p$ MEP, then $\cE^{o,m}_2$ has the $m,n,o$ MEP.

\begin{lemma}
    \label{lem:comp-mep-requires-mep}
    Let $\cE^{p,o}_1 \subset \emb(\Rea^p,\Rea^o)$ and $\cE^{o,m}_2 \subset \emb(\Rea^o,\Rea^m)$ be such that $\cE^{o,m}_2 \circ \cE^{p,o}_1$ has the $m,n,p$ MEP with respect to family $\cF\subset  \emb(\R^n,\R^m)$. Then $\cE^{o,m}_2$ has the $m,n,o$ MEP with respect to family $\cF$.
\end{lemma}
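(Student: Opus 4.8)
The plan is to leverage a structural feature of the embedding gap: in \eqref{eqn:b-k-def}, the second map $g$ and its domain $W$ enter \emph{only} through the image set $g(W)$, since the competitor embeddings $r$ range over $\emb(f(K),g(W))$. Two consequences will do all the work. First, for a composition one has $(E_2\circ E_1)(W)=E_2\paren{E_1(W)}$, so the given quantity $B_{K,W}(f,E_2\circ E_1)$ is literally an embedding gap of the outer map $E_2$, read with domain set $E_1(W)$. Second, the gap is monotone non-increasing in its image set: if $S\subseteq S'$ then $\emb(f(K),S)\subseteq\emb(f(K),S')$, since an embedding into $S$ is still an embedding into $S'$, and infimizing over a larger family can only lower the value.

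Concretely, I would fix a compact nonempty $K\subset\Rea^n$, a target $f\in\cF$, and $\epsilon>0$. The $m,n,p$ MEP of $\cE^{o,m}_2\circ\cE^{p,o}_1$ supplies a map $E_2\circ E_1$, with $E_2\in\cE^{o,m}_2$ and $E_1\in\cE^{p,o}_1$, together with an admissible compact $W\subset\Rea^p$ satisfying $B_{K,W}(f,E_2\circ E_1)<\epsilon$. The goal is to produce the witness $(E_2,W')$ demanded by the $m,n,o$ MEP. I would take $W'\subset\Rea^o$ to be a closed Euclidean ball of radius $R$ large enough to contain the (compact) image $E_1(W)$. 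Then $W'$ is compact; it is admissible for Definition \ref{def:b-k}, since taking $V$ to be the span of the first $n$ coordinates and $U$ an open ball of radius $R/2$ in $V$ gives $\overline U\subset W'$ with $n\le\dim V\le o$; and $E_2$ restricts to an element of $\emb(W',\Rea^m)$ because $E_2\in\cE^{o,m}_2\subset\emb(\Rea^o,\Rea^m)$.

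Applying $E_2$ to the inclusion $E_1(W)\subseteq W'$ yields $(E_2\circ E_1)(W)\subseteq E_2(W')$, so the two observations chain into
\begin{align*}
    B_{K,W'}(f,E_2)
    &=\inf_{r\in\emb(f(K),E_2(W'))}\norm{I-r}_{L^\infty(f(K))}\\
    &\le\inf_{r\in\emb(f(K),(E_2\circ E_1)(W))}\norm{I-r}_{L^\infty(f(K))}\\
    &=B_{K,W}(f,E_2\circ E_1)<\epsilon.
\end{align*}
As $K$, $f$, and $\epsilon$ were arbitrary, this is exactly the $m,n,o$ MEP for $\cE^{o,m}_2$ with respect to $\cF$.

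The argument is short, and its only delicate point is the admissibility bookkeeping for $W'$: because $E_1$ is a general, possibly nonlinear, embedding, the raw image $E_1(W)$ need not contain the closure of an open subset of any linear subspace, so one cannot set $W'=E_1(W)$ directly. Enlarging to a ball repairs this while preserving the image inclusion $E_1(W)\subseteq W'$ that powers the monotonicity estimate; that both infima are over nonempty families is then immediate from the remark following Definition \ref{def:b-k}.
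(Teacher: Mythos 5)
Your proposal is correct and is in substance the same argument as the paper's: the monotonicity you derive inline (the gap depends on the second map only through its image, and a larger image can only lower the infimum) is precisely Lemma \ref{lem:b-k-helper} point 6, $B_{K,W}(f,g)\le B_{K,X}(f,g\circ h)$, which the paper invokes in a contrapositive formulation of the same enlargement argument. Your direct phrasing, with the explicit choice of an admissible ball $W'\supseteq E_1(W)$ (handling the point that $E_1(W)$ itself need not be admissible), is if anything a slightly cleaner write-up of the same route, not a different one.
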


\begin{definition}[Uniform Universal Approximator]
    \label{defn_uniform_universal_approximator}
    For a non-empty subset $\cF^{n,m} \subset C(\Rea^n,\Rea^m)$,  a family $\cE^{n,m} \subset  C(\Rea^n,\Rea^m)$ is said to be a uniform universal approximator of $\cF^{n,m}$ if for every $f \in \cF^{n,m}$, every non-empty compact $K \subset \Rea^n$, and each $\epsilon > 0$, there is an $E \in \cE^{n,m}$ satisfying:
    \begin{align}
        \sup_{x \in K} \norm{f(x) - E(x)}_{2} < \epsilon.
    \end{align}
\end{definition}

If $\cE \subset \emb(\R^o,\R^m)$ is a uniform universal approximator of $\cF^{o,m} = C^0(\Rea^o,\Rea^m)$ on compact sets, then it has the $m,n,o$ MEP w.r.t  $C^0(\Rea^n,\Rea^m)$ for any $n \leq o$, see  Lemma \ref{lem:mep-implied-from-univ}. As an example, when $m\ge 2o+1$ injective ReLU networks $E:\Rea^o\to \Rea^m$ (i.e., mappings of the form (R4)) are uniform universal approximator of $ C^0(\Rea^o,\Rea^m)$ on compact sets, see e.g. \citep{puthawala2020globally} and \citep{yarotsky2017error,yarotsky2018optimal}. 
Thus, networks that are uniform universal approximators automatically possess the MEP. Generalizations of this are considered in Lemma \ref{lem:mep-implied-from-univ}. 

With the definition of the MEP and uniform universal approximator established, we now discuss in detail the nature of the topological obstructions to approximating all one-chart manifolds. 

\subsection{Topological Obstructions to Manifold Learning with Neural Networks}
\label{sec:top-obstructions}

We show that using non-universal expansive layers and flow layers imposes some topological restrictions on what can be approximated. Let $n=2$, $m=3$, and $K = S^1\subset \R^2$ be the circle, and let 
\begin{align*}
    \cE = \set{T \circ R \in C(\R^2,\R^3) \colon R\in \R^{3\times 2}, T \in \hom(\R^3,\R^3)}.
\end{align*}
That is, $\cE$ is the set of maps that can be written as compositions of linear maps {from $\R^2$ to $\R^3$} and homeomorphisms {on all of $\R^3$}. Let $f \in \emb(K,\Rea^3)$ be an embedding that maps $K$ to a trefoil knot $\cM=f(S^1)$, see Fig. \ref{fig:knot}. Such a function $f$ can not be written as a restriction of an $E\in \cE$ to $S^1$. In Sec. \ref{sec:mapping-s-1-to-trefoil} we prove this fact and build a related example where a measure, $\mu \in \cP(\Rea^2)$, supported on an annulus is pushed forward to a measure supported on a knotted ribbon in $\Rea^3$ by an embedding $g \colon \Rea^2 \to \Rea^3$. For this measure, there are no $E \in \cE$ such that $\pushf{g}{\mu} = \pushf{E}{\mu}$. {We note that the counterexample is still valid if $\cE$ is replaced with $\hat \cE = \cT \circ \cD$ where $\cT = \hom(\R^3,\R^3)$ and $\cD = \hom(\R^3,\R^3) \circ \R^{3 \times 2}.$  
See \ref{sec:linear-homeomorphism-composition} for a proof. The point here is not that $R$ is linear, but rather that it embeds all of $\R^2$ into $\R^3$, rather than only $S^1$ into $\R^3$.
}

\begin{figure}
    \centering
    \begin{subfigure}{.45\linewidth}
        \centering
        \includegraphics[width=\linewidth]{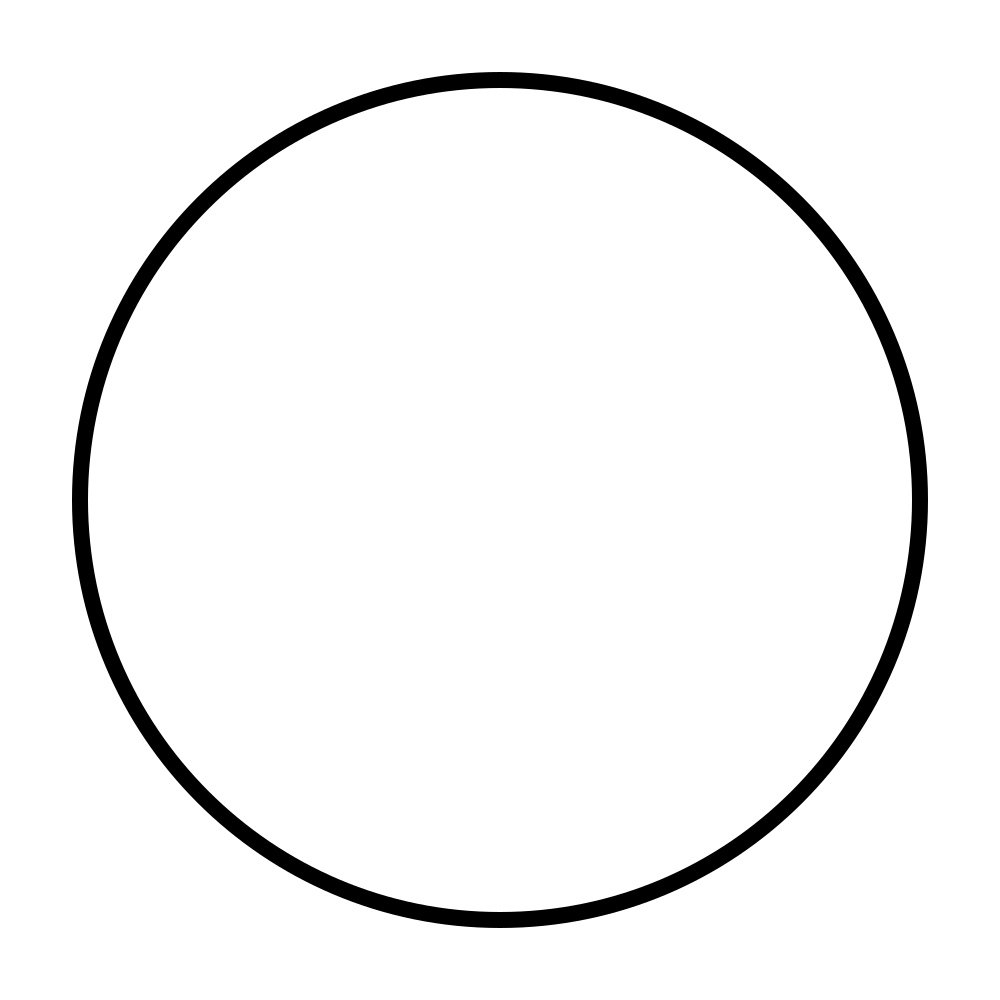}
    \end{subfigure}
    \begin{subfigure}{.45\linewidth}
        \centering
        \includegraphics[width=\linewidth]{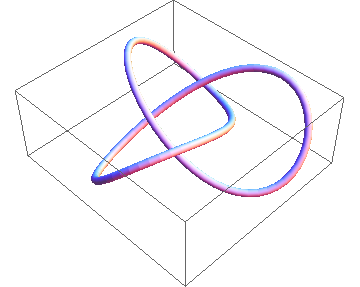}
    \end{subfigure}
    \caption{An illustration of the case when $n=2$, $m=3$, and $K = S^1$ is the circle. Here $f:S^1\to \Rea^3$ is an embedding such that the curve $\cM=f(S^1)$ is a trefoil knot. Due to knot theoretical reasons, there are no map $E=T\circ R:\R^2\to \R^3$ such that $E(S^1)=\cM$, where  $R:\R^2\to \R^3$ is a full rank linear map and $T:\R^3\to \R^3$ is a homeomorphism. This shows that a combination of linear maps and coupling flow maps can not represent all embedded manifolds. For this reason, we define the class ${\mathcal I}(\R^n,\R^m)$ of extendable embeddings $f$ in Definition \ref{def:extendable-embeddings}. A similar 2-dimensional example can be obtained to a knotted ribbon, see Sec.\ \ref{sec:mapping-s-1-to-trefoil}.
    }
    \label{fig:knot}
\end{figure}

With this difficulty in mind, we define the MEP property with respect to a certain subclass of manifolds $\{f(K):\ f\in \cF\}$.
%
%
Additionally, when considering flow networks which are universal approximators of $C^2$ diffeomorphisms, we restrict the class of manifolds to be approximated even further. This is necessary because manifolds that are homeomorphic are not necessarily diffeomorphic\footnote{A classic example are the exotic spheres. These are topological structures that are homeomorphic, but not diffeomorphic, to the sphere \citep{milnor1956manifolds}.}. Moreover, it is known that $C^2$-smooth diffeomorphisms can not approximate general homeomorphisms in the $C^0$ topology, see \citep{muller2014uniform} for a precise statement. All $C^1$-smooth diffeomorphisms $f:\R^m\to \R^m$, however, can be approximated in the strong topology of $C^1$ by $C^2$-smooth diffeomorphism $\tilde f:\R^m\to \R^m$, $\ell\ge k$, see \citep[Ch.\ 2, Theorem 2.7]{hirsch2012differential}. Because of this, we have to pay attention to the smoothness of the maps in the subset $\cF\subset \emb(K,\R^m).$

\begin{definition}[Extendable Embeddings]
    \label{def:extendable-embeddings}
    We define the set of Extendable Embeddings as
    \begin{align*}
        {\mathcal I}(\R^n,\R^m) &\coloneqq \cD \circ \cL \\
        \cD &= \hbox{Diff}^1(\R^m,\R^m)\\
        \cL &= \set{L \in \R^{m\times n}\colon \hbox{rank}(L) = n},
    \end{align*}
    where $\hbox{Diff}^k(\R^m,\R^m)$ is the set of $C^k$-smooth diffeomorphisms from $\R^m$ to itself. Note that ${\mathcal I}(\R^n,\R^m)\subset\emb(\R^n,\R^m)$.
\end{definition}

The word extendable in the name extendable embeddings refers to the fact that the family $\cD$ in Definition \ref{def:extendable-embeddings} is a proper subset of $\emb(L(K),\R^m)$ for some compact $K \subset \R^n$ and linear $L \in \R^{m\times n}$. Mappings in the set $\cD$ are embeddings $D:L(K)\to \R^m$ that extend to diffeomorphisms from all of $\R^m$ to itself. Said differently, a $D \in \cD$ is a map in $\emb^1(L(K),\R^m)$ that can be extended to a map $\tilde D \in \hbox{Diff}^1(\R^m,\R^m) $ such that $\restr{\tilde D}{L(K)} = D$. This distinction is important, as there are maps in $\emb^1(L(K),\R^m)$ that can not be extended to diffeomorphisms on all of $\R^m$, as can be seen from the counterexample developed at the beginning of this section.

We also present here a theorem that states that when $m$ is more than three times larger than $n$, any differentiable embedding from compact $K\subset \R^n$ to $\R^m$ is necessarily extendable.

\begin{theorem} \label{thm:emb-and-extens-coincide}
    When $m\ge 3n+1$ and $k\ge 1$,  for any $C^k$
    embedding ${f}\in {\emb}^{k}(\R^n,\R^m)$ and compact set $K\subset \R^n$, there is a map  ${E}\in {\mathcal I}^k(\R^n,\R^m)$ (that is, $E$ is in the closure of the set of flow type neural 
    networks) such that $E(K) = f(K)$. Moreover,
    \begin{align}\label{I-formula B}
        {\mathcal I}^k(K,\R^m)={\emb}^{k}(K,\R^m)
    \end{align}
\end{theorem}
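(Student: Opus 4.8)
The plan is to realize the image $f(K)$ as $E(K)$ for a map of the special form $E = D \circ L$, where $L \in \cL$ is the standard linear inclusion $\R^n \hookrightarrow \R^m$, $x \mapsto (x,0)$, and $D \in \hbox{Diff}^k(\R^m,\R^m)$. In fact I will arrange the stronger conclusion $E|_K = f|_K$, which simultaneously yields $E(K) = f(K)$ and, since $f \in \emb^k(\R^n,\R^m)$ is arbitrary, the set equality ${\mathcal I}^k(K,\R^m) = \emb^k(K,\R^m)$: the inclusion ${\mathcal I}^k(K,\R^m) \subseteq \emb^k(K,\R^m)$ is immediate from ${\mathcal I}(\R^n,\R^m) \subseteq \emb(\R^n,\R^m)$, while the reverse inclusion is precisely the assertion that every restriction $f|_K$ agrees with some $(D\circ L)|_K$. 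Thus the whole theorem reduces to one geometric fact: the restriction of $f$ to the flat disk $L(K)$ extends to a global $C^k$ diffeomorphism of $\R^m$.

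First I would fix a closed ball $B \subset \R^n$ with $K \subset \mathrm{int}\,B$ and consider the two embeddings $L|_B$ and $f|_B$ of the disk $B$ into $\R^m$. Because $B$ is contractible, $L|_B$ and $f|_B$ are homotopic as maps into $\R^m$, so there is no homotopy-theoretic obstruction to joining them; the entire content is to upgrade such a homotopy to an \emph{isotopy}, i.e.\ a $C^k$ family $H \colon B \times [0,1] \to \R^m$ with $H_0 = L|_B$, $H_1 = f|_B$, and each $H_t$ an embedding with injective differential. This is exactly where the hypothesis $m \ge 3n+1$ enters, through the clean trick / a general-position argument: writing $\R^m \cong \R^n \times \R^n \times \R^n \times \R^{m-3n}$ (using $m - 3n \ge 1$) supplies enough ambient room to push a generic interpolation off its self-intersections at every level simultaneously while keeping the first derivative injective. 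Concretely, the level-wise double-point locus of a homotopy is cut out in $B \times B \times [0,1]$ (dimension $2n+1$) and the rank-degeneracy locus in $B \times [0,1]$ (dimension $n+1$); controlling both the value and the $1$-jet of the family in general position forces these loci to be empty in the stated codimension range, so a generic such $H$ is an isotopy.

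With the isotopy in hand I would invoke the isotopy extension theorem \citep[Ch.\ 8]{hirsch2012differential}: since $B$ is compact and $H$ is a $C^k$ isotopy of embeddings, it extends to an ambient isotopy $\Phi \colon \R^m \times [0,1] \to \R^m$ with $\Phi_0 = \mathrm{id}$, each $\Phi_t \in \hbox{Diff}^k(\R^m,\R^m)$, and $\Phi_t \circ L = H_t$ on $B$ for all $t$. Setting $D := \Phi_1$ gives $D \in \hbox{Diff}^k(\R^m,\R^m)$ with $D \circ L = H_1 = f$ on $B \supseteq K$. Hence $E := D \circ L \in {\mathcal I}^k(\R^n,\R^m)$ satisfies $E|_K = f|_K$, and in particular $E(K) = f(K)$, establishing both claims.

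The main obstacle is the isotopy construction of the second paragraph: turning the obstruction-free homotopy between $L|_B$ and $f|_B$ into an honest isotopy through $C^k$ embeddings with injective differentials. The sharp Haefliger--Zeeman unknotting threshold for this is only $2m \ge 3(n+1)$, so the clean, self-contained general-position argument yielding $m \ge 3n+1$ is deliberately wasteful; making the transversality count keep track of the derivative (so as to preserve immersivity and $C^k$ regularity, not merely injectivity) is the delicate point. The remaining bookkeeping—verifying that isotopy extension returns a genuine global $C^k$ diffeomorphism rather than merely a homeomorphism—is routine given \citep{hirsch2012differential}.
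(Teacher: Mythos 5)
Your proposal is correct, but it takes a genuinely different route from the paper's proof. You argue softly: contractibility of $\R^m$ gives a homotopy from $L|_B$ to $f|_B$, parametric general position rel endpoints (with $1$-jet control) upgrades it to a $C^k$ isotopy through embeddings, and the isotopy extension theorem converts that into an ambient diffeotopy whose time-one map is $D$. The paper is instead entirely constructive and never invokes isotopy extension: it iteratively flattens $\cM = f(\R^n)$ by choosing, as in Whitney's embedding theorem, a direction $w \in S^{m-1}$ outside a bad set of Hausdorff dimension at most $2n$, and applying the explicit shear $\Phi_1(x) = x - G(P_w(x))w$; repeating this $n$ times (this is where $m \ge 3n+1$, i.e.\ $m-n \ge 2n+1$, is genuinely consumed) lands $\cM$ in an $(m-n)$-dimensional coordinate subspace, after which the diffeomorphism between the flat copy of the domain and the flattened image---now lying in \emph{complementary} coordinate subspaces---is extended globally by what the paper calls the clean trick: $h = h_2^{-1}\circ h_1$ with the two globally invertible shears $h_1(x,y) = (x, y+f_1(x))$ and $h_2(x,y) = (x+f_2(y),y)$, following \citep[Lemma 7.6]{madsen1997calculus}. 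Note a terminological mismatch: in the paper the ``clean trick'' is precisely this shear extension, not the general-position step to which you attach the name; in your argument its role is played by isotopy extension. Both proofs deliver the stronger conclusion $E = f$ on a set containing $K$ (the paper gets it on all of $\R^n$). What your route buys is modularity and, by your own codimension counts (level-wise double points live in dimension $2n+1 < m$, and the rank-degeneracy stratum has codimension $m-n+1 > n+1$), the sharper threshold $m \ge 2n+2$, so in principle a stronger theorem; what it costs is reliance on two nontrivial black boxes---transversality rel endpoints, and an isotopy extension theorem that you should quote in a version valid for a compact disk \emph{with boundary} embedded in the interior of $\R^m$ (or first pass to a slightly larger open ball and restrict), in the $C^k$, $k\ge 1$, form given in \citep[Ch.\ 8]{hirsch2012differential}, keeping in mind that your generic perturbations are smooth while the $t=1$ endpoint is only $C^k$. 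The paper's shears, by contrast, are explicit formulas requiring no vector-field integration, which fits its interpretation of $E$ as a limit of flow-type networks.
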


The proof of Theorem \ref{thm:emb-and-extens-coincide} in Appendix \ref{sec:thm:emb-and-extens-coincide}. We also remark here that the proof of the above theorem relies on the so called `clean trick' from differential topology. This trick is related to fact that in $\R^4$, all knots can be reduced to the simple knot continuously.

\subsection{Universality}
\label{sec:main-results:universality-results}

We now combine the notions of universality and extendable embeddings to produce a result stating that many commonly used networks of the form studied in Section \ref{sec:desc-of-arch} have the MEP.

\begin{lemma}
    \label{lem:mep-implied-from-univ}
    \begin{enumerate}
        \item[(i)] If $\cR\subset \emb(\Rea^n,\Rea^m)$ is a uniform universal approximator of $C(\Rea^n,\Rea^m)$ and $I \in \cT$ where $I$ is the identity map, then $\cE \coloneqq \cT\circ \cR$ has the MEP w.r.t. $\emb(\Rea^n,\Rea^m)$.
        \item[(ii)] If $\cR$ is such that there is an injective $R \in \cR$ and open set $U\subset \Rea^o$ such that $\restr{R}{\overline U}$ is linear, and $\cT$ is a $\sup$ universal approximator in the space of $\hbox{Diff}^2(\R^m,\R^m)${, in the sense of \citep{teshima2020coupling},} of the $C^2$-smooth diffeomorphisms, then $\cE\coloneqq \cT\circ\cR$ has the MEP w.r.t. $\cI(\Rea^n,\Rea^m)$.
    \end{enumerate}
\end{lemma}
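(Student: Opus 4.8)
The plan is to prove both parts by exhibiting, for each target $f$, an explicit near-identity embedding $r$ that witnesses a small embedding gap, thereby reducing the statement to the assumed sup-norm approximation hypotheses. The underlying observation is this: if $E\in\cE$ and $\phi$ is an injective continuous map with $\phi(K)\subset W$, then $r\coloneqq E\circ\phi\circ f^{-1}\colon f(K)\to E(W)$ is a composition of injective continuous maps on the compact set $f(K)$, hence lies in $\emb(f(K),E(W))$, and $\norm{I-r}_{L^\infty(f(K))}=\sup_{x\in K}\norm{f(x)-E(\phi(x))}$. Consequently any uniform bound on $\norm{f-E\circ\phi}$ over $K$ is an upper bound for $B_{K,W}(f,E)$.

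For part (i) the input dimension of $\cR$ is $o=n$. I would take $T=I\in\cT$, so that $E=R\in\cR$, let $\phi$ be the identity on $\Rea^n$, and let $W$ be a closed ball containing $K$ (which contains an open subset of $V=\Rea^n$, meeting the requirements of the embedding gap with $p=n$). Since $f\in\emb(\Rea^n,\Rea^m)\subset C(\Rea^n,\Rea^m)$ and $\cR$ is a uniform universal approximator, choose $R$ with $\sup_{x\in K}\norm{f(x)-R(x)}<\epsilon$; the observation with $r=R\circ f^{-1}$ then gives $B_{K,W}(f,R)<\epsilon$.

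For part (ii), given $f\in\cI(\Rea^n,\Rea^m)$ write $f=D\circ L$ with $D\in\hbox{Diff}^1(\Rea^m,\Rea^m)$ and $L$ of rank $n$, and fix the injective $R\in\cR$ with $\restr{R}{\overline U}$ affine, say $R(u)=Au+c$ on $\overline U$; injectivity forces $\rank A=o$. First I would embed $K$ linearly into $U$: choose an injective affine $\phi\colon\Rea^n\to\Rea^o$ and a bounded open $U''$ with $\phi(K)\subset U''$ and $\overline{U''}\subset U$ (possible by rescaling a linear injection into a small ball inside $U$, using $o\ge n$), and set $W=\overline{U''}$. Then $R\circ\phi$ is an affine embedding $x\mapsto Px+q$ with $\rank P=n$. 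Since $P$ and $L$ are both rank-$n$ linear maps into $\Rea^m$, elementary linear algebra produces an invertible $\Gamma$ with $\Gamma P=L$ (match bases of the two column spaces, extend to full bases of $\Rea^m$, and flip one basis vector if needed so that $\det\Gamma>0$); setting $\Psi(z)=\Gamma z-\Gamma q$ gives an invertible affine map with $\Psi\circ R\circ\phi=L$ on $K$. Writing $\tilde D\coloneqq D\circ\Psi\in\hbox{Diff}^1(\Rea^m,\Rea^m)$ we obtain the exact identity $\tilde D\circ R\circ\phi=D\circ L=f$ on $K$.

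It then remains to replace $\tilde D$ by some $T\in\cT$, and this is where the main obstacle lies: $\tilde D$ is only $C^1$, whereas the universality of $\cT$ is stated for $C^2$ diffeomorphisms. I would bridge this on the compact set $R(\phi(K))$ in two steps: first use density of $\hbox{Diff}^2$ in $\hbox{Diff}^1$ in the strong $C^1$ (hence uniform-on-compacts) topology, cited above from \citep[Ch.\ 2, Theorem 2.7]{hirsch2012differential}, to choose $\hat D\in\hbox{Diff}^2$ with $\sup_{z\in R(\phi(K))}\norm{\tilde D(z)-\hat D(z)}<\epsilon/2$; then use that $\cT$ is a sup universal approximator of $\hbox{Diff}^2$ in the sense of \citep{teshima2020coupling} to choose $T\in\cT$ with $\sup_{z\in R(\phi(K))}\norm{\hat D(z)-T(z)}<\epsilon/2$. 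With $E=T\circ R$ and $r=E\circ\phi\circ f^{-1}$, the opening observation together with $f=\tilde D\circ R\circ\phi$ gives $\norm{I-r}_{L^\infty(f(K))}=\sup_{x\in K}\norm{\tilde D(R(\phi(x)))-T(R(\phi(x)))}<\epsilon$, so $B_{K,W}(f,E)<\epsilon$. Note that $r$ is automatically an embedding because $T$ is bijective, so the smoothness reduction never threatens injectivity; the one point I would verify carefully is that the cited universality indeed delivers uniform approximation on the specific compact set $R(\phi(K))$ and is compatible with the orientation of $\tilde D$, which the freedom in choosing $\Gamma$ can be used to control.
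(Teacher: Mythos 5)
Your proposal is correct and follows essentially the same route as the paper: part (i) reduces the embedding gap to uniform approximation via the explicit witness $r=E\circ\phi\circ f^{-1}$ (which is exactly Lemma \ref{lem:b-k-helper} point 5), and part (ii) matches the two rank-$n$ linear parts by an invertible affine map, smooths the $C^1$ diffeomorphism to a $C^2$ one via \citep[Ch.\ 2, Theorem 2.7]{hirsch2012differential}, and then invokes the Teshima-style sup-universality of $\cT$ with a triangle inequality, just as the paper does. The only differences are cosmetic: you compose the affine correction into the diffeomorphism before smoothing rather than after, and you spell out the $o>n$ case (and a valid choice of $W$) slightly more explicitly than the paper's ``minor modification'' remark.
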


For uniform universal approximators that satisfy the assumptions of (i), see e.g. \citep{puthawala2020globally}. The proof of Lemma \ref{lem:mep-implied-from-univ} is in Appendix \ref{sec:lem:mep-implied-from-univ}.
It has the following implications for the architectures studied in Section \ref{sec:desc-of-arch}.

\begin{example}
    \label{examp:lots-of-archs-have-mep}
    Let $\cE \coloneqq \cT\circ\cR$ and (T1), (T2), (R1), \ldots, (R4) be as described in Section \ref{sec:desc-of-arch}. Then
    \begin{enumerate}
        \item[(i)] If $\cT$ is either (T1) or (T2) and $\cR$ is (R4), then $\cE$ has the $m,n,o$ MEP w.r.t. $\emb(\Rea^n,\Rea^m)$.
        \item[(ii)] If $\cT$ is (T2) with sigmoidal activations \citep{huang2018neural}, then if $\cR$ is any of (R1), ..., (R4), then $\cE$ has the $m,n,o$ MEP w.r.t. $\cI(\Rea^n,\Rea^m)$.
    \end{enumerate}
\end{example}

The proof of Example \ref{examp:lots-of-archs-have-mep} is in Appendix \ref{sec:examp:lots-of-archs-have-mep}.

We now present our universal approximation result for networks given in Eqn. \ref{eqn:network-def} and a decoupling property. Below, we say that a measure $\mu$ in $\mathbb R^n$ is absolutely continuous if it is absolutely continuous w.r.t. the Lebesgue measure.

\begin{theorem}
    \label{thm:univ-qual}
    Let $n_0 = n, n_L = m$ $K \subset \R^n$ be compact, $\mu \in \cP(K)$ be an absolutely continuous measure. Further let, for each $\ell = 1,\dots,L$, $\cE_\ell^{n_{\ell-1}, n_{\ell}} \coloneqq \cT^{n_\ell}_\ell \circ \cR^{n_{\ell-1}, n_\ell}_\ell$ where $\cR^{n_{\ell - 1}, n_\ell}_\ell$ is a family of injective expansive elements that contains a linear map, and $\cT^{n_\ell}_\ell$ is a family of bijective family networks. Finally let $\cT^{n}_{0}$ be distributionally universal, i.e. for any absolutely continuous $\mu\in\cP(\Rea^n)$ and $\nu\in\cP(\Rea^n)$, there is a $\set{T_i}_{i=1,2,\dots}$ such that $\pushf{T_i}{\mu} \to \nu$ in distribution. Let one of the following two cases hold:
    
    (i) $f \in \cF_{L}^{n_{L-1}, m}\circ\dots\circ\cF_{1}^{n, n_{1}}$ and $\cE_\ell^{n_{\ell-1}, n_{\ell}}$ have the the $n_{\ell}$, $n_{\ell-1}$,$ n_{\ell-1}$ MEP for $\ell = 1,\dots,L$ with respect to $\cF_\ell^{n_{\ell-1}, n_{\ell}}$.
    
    (ii) $f\in \emb^1(\R^n, \R^m)$ be a $C^1$-smooth embedding, for $\ell = 1,\dots,L$ $n_\ell\ge 3n_{\ell - 1}+1$ and the families $\cT^{n_\ell}_\ell$ are dense in $\hbox{Diff}^2(\R^{n_\ell})$.
    
    Then, there is a sequence of {$\set{E_{i}}_{i = 1,2,\dots} \subset \cE_L^{n_{L-1}, m} \circ \dots \circ \cE_1^{n_1, n} \circ \cT^{n}_0$} such that
    \begin{align}
        \label{eqn:qual-univ:qual}
        \lim_{i \to \infty}
        \wasstwo{\pushf{f}{\mu}}{\pushf{E_i}{\mu}} = 0.
    \end{align}

\end{theorem}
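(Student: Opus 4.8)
The plan is to reduce the pushforward-approximation claim \eqref{eqn:qual-univ:qual} to three ingredients already in hand: the composition law for the MEP (Lemma \ref{lem:comp-mep}), the fact that the embedding gap upper-bounds a Wasserstein infimum over base measures (Lemma \ref{lem:b-k-helper}, part 9), and the distributional universality of $\cT^n_0$. First I would collapse the layer-wise MEP hypotheses into a single MEP for the whole injective part of the network. Starting from the innermost pair and applying Lemma \ref{lem:comp-mep} inductively---each expansive/flow layer being Lipschitz, so the local-Lipschitz hypothesis of that lemma is automatic---the composition $\cE_L^{n_{L-1},m}\circ\dots\circ\cE_1^{n_1,n}$ inherits the $m,n,n$ MEP with respect to $\cF_L^{n_{L-1},m}\circ\dots\circ\cF_1^{n_1,n}$, since the dimension triples telescope $(n_\ell,n_{\ell-1},n_{\ell-1})$ down to $(m,n,n)$. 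In case (i) the target $f$ belongs to exactly this family, so the MEP yields, for each $\epsilon>0$, a map $E$ in the composed injective network and a compact $W\subset\R^n$ with $B_{K,W}(f,E)<\epsilon$.

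Next I would convert this small embedding gap into a small Wasserstein distance and then realize the required base measure through $\cT^n_0$. By Lemma \ref{lem:b-k-helper} (part 9) a gap below $\epsilon$ produces a base measure $\mu_o\in\cP(W)$ with $\wasstwo{\pushf{f}{\mu}}{\pushf{E}{\mu_o}}<\epsilon$ (up to arbitrarily small slack, since the bound is on an infimum). Because $\mu$ is absolutely continuous and $\cT^n_0$ is distributionally universal, I can choose $T_j\in\cT^n_0$ with $\pushf{T_j}{\mu}\to\mu_o$; after upgrading this to $\wasstwo{\pushf{T_j}{\mu}}{\mu_o}\to 0$ (see below), the Lipschitz pushforward bound $\wasstwo{\pushf{E}{\alpha}}{\pushf{E}{\beta}}\le\lip(E)\,\wasstwo{\alpha}{\beta}$, valid for any Lipschitz $E$ by pushing forward an optimal coupling, gives
\begin{align*}
    \wasstwo{\pushf{(E\circ T_j)}{\mu}}{\pushf{E}{\mu_o}}\le\lip(E)\,\wasstwo{\pushf{T_j}{\mu}}{\mu_o}\longrightarrow 0.
\end{align*}
The triangle inequality then bounds $\limsup_j\wasstwo{\pushf{f}{\mu}}{\pushf{(E\circ T_j)}{\mu}}$ by $\epsilon$, and since $E\circ T_j$ lies in $\cE_L^{n_{L-1},m}\circ\dots\circ\cE_1^{n_1,n}\circ\cT^n_0$, a diagonal extraction over $\epsilon=1/i$ produces the required sequence $\set{E_i}$ with $\wasstwo{\pushf{f}{\mu}}{\pushf{E_i}{\mu}}\to 0$.

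For case (ii) I would first reduce to the hypotheses of case (i). The per-layer bound $n_\ell\ge 3n_{\ell-1}+1$ lets me invoke Theorem \ref{thm:emb-and-extens-coincide} at each stage, so every $C^1$ embedding into the next dimension is extendable; consequently the image $f(K)$ is realized inside $\cI(\R^n,\R^m)$ factored through the intermediate dimensions, and density of $\cT^{n_\ell}_\ell$ in $\hbox{Diff}^2$ lets Lemma \ref{lem:mep-implied-from-univ}(ii) supply the $n_\ell,n_{\ell-1},n_{\ell-1}$ MEP with respect to $\cI(\R^{n_{\ell-1}},\R^{n_\ell})$ for each layer. With these MEPs and $\cF_\ell=\cI(\R^{n_{\ell-1}},\R^{n_\ell})$ in place, the argument above applies verbatim.

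I expect the main obstacle to be the upgrade from the distributional (weak) convergence $\pushf{T_j}{\mu}\to\mu_o$ that distributional universality provides, to the Wasserstein-$2$ convergence the Lipschitz pushforward bound requires. Although $\mu_o$ is supported on the compact set $W$, the flows $T_j$ may transport a vanishing fraction of the mass of $\mu$ arbitrarily far, so weak convergence alone does not control the second moments. I would close this gap with a tightness/uniform-integrability argument---for instance truncating each $T_j$ outside a large ball so that $\pushf{T_j}{\mu}$ stays supported in a fixed compact set while perturbing the pushforward negligibly in $\wasstwo{\cdot}{\cdot}$---after which weak convergence to the compactly supported $\mu_o$ metrizes $W_2$ convergence; if the universality statement additionally demands an absolutely continuous target, I would first approximate $\mu_o$ by an absolutely continuous measure in $W_2$. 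A secondary, purely bookkeeping point is that $\lip(E)$ depends on the $\epsilon$-dependent $E$, which is harmless because $E$ is fixed before $T_j$ is chosen, so each inner limit still vanishes.
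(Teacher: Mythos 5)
Your proposal follows essentially the same route as the paper's proof: collapse the layer-wise MEP hypotheses with Lemma \ref{lem:comp-mep} into an $m,n,n$ MEP for $\cE_L^{n_{L-1},m}\circ\dots\circ\cE_1^{n_1,n}$, convert the small embedding gap into a small $\wasstwoop$ distance via Lemma \ref{lem:b-k-helper} part 9, realize the resulting base measure through the distributional universality of $\cT^n_0$ combined with the Lipschitz pushforward bound $\wasstwo{\pushf{E}{\alpha}}{\pushf{E}{\beta}}\le \lip(E)\,\wasstwo{\alpha}{\beta}$, diagonalize over $\epsilon = 1/i$, and in case (ii) reduce to case (i) by Theorem \ref{thm:emb-and-extens-coincide} and Lemma \ref{lem:mep-implied-from-univ}(ii). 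One minor caveat: your ``truncate $T_j$ outside a large ball'' fix for the weak-to-$\wasstwoop$ upgrade would take the map outside the family $\cT^n_0$ and so cannot be used as stated; the paper instead handles the regularity of the base measure exactly as in your alternative suggestion, by mollifying $\mu'$ to an absolutely continuous $\mu''$ with $\wasstwo{\mu'}{\mu''}<\epsilon_1/(1+\lip(\tilde E))$ before invoking universality of $\cT^n_0$.
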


The proof of Theorem \ref{thm:univ-qual} is in Appendix \ref{sec:proofs:thm:univ-qual}.
{The results of Theorems \ref{thm:emb-and-extens-coincide} and \ref{thm:univ-qual} have a simple interpretation, omitting some technical details. Densities on `nice' manifolds embedded in high-dimensional spaces can always be approximated by neural networks of the form \eqref{eqn:network-def}. Here, `nice' manifolds are smooth and homeomorphic to $\R^n$. This proves that networks like Eqn. \ref{eqn:network-def} are `up to task' of solving generation problems.
}

As discussed in the above and in Figure \ref{fig:knot}, there are topological obstructions to obtaining the results of Theorem \ref{thm:univ-qual} with a general embedding $f:\R^n\to \R^m$. When $n=2$, $m=3$, $L=1$, and $\mu$ is the uniform measure on an annulus $K\subset \R^2$ target measure $F_\#\mu$ is the uniform measure on a knotted ribbon $\cM=f(K)\subset \R^3$. There are no injective linear maps $R:\R^2\to \R^3$  and diffeomorphisms $T:\R^3\to \R^3$ such that $E=T\circ R$ would satisfy $\cM=E(K)$ and $E_\#\mu=F_\#\mu$.   
%

We note that our networks are designed expressly to approximate manifolds, and hence injectivity is key. This separates our results from, e.g. \citep[Theorem 3.1]{lee2017ability} or \citep[Theorem 2.1]{lu2020universal}, where universality results of $\relu$ networks are also obtained. 

The previous theorem states that the entire network is universal if it can be broken into pieces that have the MEP. The following lemma, proved in Appendix \ref{sec:proofs:mep-is-nesc-for-universality}, shows that if $\cE^{n,m} = \cH^{o,m}\circ \cG^{n,o}$, then $\cH^{o,m}$ must have the $m,n,o$ MEP if $\cE^{n,m}$ is universal. 

\begin{lemma}
    \label{lem:mep-is-nesc-for-universality}
    Suppose that $\cE^{n,m} = \cH^{o,m} \circ \cG^{n,o}$ where $\cE^{n,m} \subset \emb(\Rea^n,\Rea^m)$, $\cH^{o,m} \subset \emb(\Rea^o,\Rea^m)$, and $\cG^{n,o} \subset \emb(\Rea^n,\Rea^o)$. If $\cH^{o,m}$ does not have the $m,n,o$ MEP w.r.t. $\cF$, then there exists a $f \in \cF$, compact $K \subset \Rea^n$ and $\epsilon > 0$ such that for all $E \in \cE^{n,m}$, and $r \in \emb(f(K),E(W))$
    \begin{align}
        \norm{I - r}_{L^{\infty}(K)} \geq \epsilon.
    \end{align}
\end{lemma}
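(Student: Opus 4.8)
The plan is to prove the contrapositive in a form convenient for composition. I want to show: if $\cH^{o,m}$ fails the $m,n,o$ MEP with respect to $\cF$, then the composite family $\cE^{n,m} = \cH^{o,m}\circ\cG^{n,o}$ is bounded away (in the embedding-gap sense) from some target $f\in\cF$. The failure of the MEP for $\cH^{o,m}$ hands me, by negating Definition \ref{def:manif-embedding-problem}, a specific $f\in\cF$, a compact $K\subset\Rea^n$, and an $\epsilon>0$ such that for \emph{every} $H\in\cH^{o,m}$ and every compact $W\subset\Rea^o$ we have $B_{K,W}(f,H)\ge\epsilon$. The whole game is then to transfer this lower bound from the factor $\cH^{o,m}$ up to the composite $\cE^{n,m}$, so that no $E=H\circ G$ and no admissible reparametrizing embedding $r$ can beat $\epsilon$.

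The key step is a monotonicity property of the embedding gap under precomposition of the approximant by a further embedding. Concretely, for $E = H\circ G$ with $G\in\cG^{n,o}$ and $H\in\cH^{o,m}$, I claim that any competitor $r\in\emb(f(K),E(W))$ for the composite is also a legitimate competitor witnessing a small gap for $H$ alone, on the pushed-forward domain $W' := G(W)\subset\Rea^o$. The point is that $E(W) = H(G(W)) = H(W')$, so the target set over which $r$ maps is literally a set of the form $H(W')$; thus the same $r$ lies in $\emb(f(K),H(W'))$ and achieves the identical value $\norm{I-r}_{L^\infty(f(K))}$. Since $W'$ is compact (continuous image of a compact set) and contains the required relatively-open piece inherited through $G$, the infimum defining $B_{K,W'}(f,H)$ is over a superset of competitors, giving $B_{K,W'}(f,H)\le \norm{I-r}_{L^\infty(f(K))}$. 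Combining with the MEP-failure bound $B_{K,W'}(f,H)\ge\epsilon$ yields $\norm{I-r}_{L^\infty(f(K))}\ge\epsilon$ for every such $r$, which is exactly the desired conclusion. This is essentially the inequality $B_{K,W}(f,H\circ G)\ge B_{K,G(W)}(f,H)$, and I expect it to be one of the parts recorded in Lemma \ref{lem:b-k-helper}, so I can cite it rather than re-deriving it.

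\emph{The main obstacle} I anticipate is bookkeeping around the structural hypotheses on the domain sets $W$ in Definition \ref{def:b-k}: the embedding gap is only defined when $W$ contains the closure of a set $U$ relatively open in a $p$-dimensional subspace. I must check that $G(W)$ still satisfies this requirement so that $B_{K,G(W)}(f,H)$ is well-defined and the failure-of-MEP bound legitimately applies to it. Since $G\in\emb(\Rea^n,\Rea^o)$ is an embedding and hence a homeomorphism onto its image, the relatively-open structure is preserved up to the ambient-dimension matching, and the remark following Definition \ref{def:b-k} (guaranteeing the competitor set is non-empty via an affine map) can be invoked to keep everything non-vacuous. The one subtlety is matching the domain of the $L^\infty$ norm in the statement ($L^\infty(K)$ versus $L^\infty(f(K))$): these agree because $r$ is defined on $f(K)$ and $I-r$ is measured there, so I would note the identification at the outset. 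Once these domain conditions are verified, the proof reduces to the single inequality above plus the quantifier manipulation from negating the MEP, so I would present it in two short movements: first unpack the MEP failure to extract $(f,K,\epsilon)$, then apply the precomposition monotonicity of $B_{K,W}$ to every $E\in\cE^{n,m}$.
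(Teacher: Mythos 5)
Your route is the same as the paper's: negate Definition \ref{def:manif-embedding-problem} to extract $(f,K,\epsilon)$ with $B_{K,W_1}(f,H)\ge\epsilon$ for every $H\in\cH^{o,m}$ and every admissible compact $W_1\subset\Rea^o$, then transfer the bound to the composite via precomposition monotonicity of the embedding gap --- which is indeed recorded as point 6 of Lemma \ref{lem:b-k-helper}, and is the mechanism behind the paper's Lemma \ref{lem:comp-mep-requires-mep}, on which its one-line proof of this statement implicitly leans. However, there is one concretely wrong step, at exactly the place you flagged as the main obstacle. You set $W'=G(W)$ and assert that $W'$ inherits the structural hypothesis of Definition \ref{def:b-k} (containing the closure of a set open in a $p$-dimensional vector subspace) because $G$ is an embedding. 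A topological embedding $G\in\emb(\Rea^n,\Rea^o)$ does not preserve affine structure: for $G(x)=(x,x^2)$ from $\Rea$ to $\Rea^2$ and $W=[0,1]$, the arc $G(W)$ contains no straight segment, hence contains the closure of no nonempty set open in any one-dimensional subspace of $\Rea^2$. So $B_{K,G(W)}(f,H)$ is not an instance of Definition \ref{def:b-k}, and --- more to the point --- the MEP-failure bound you extracted quantifies only over compacta for which the embedding gap is defined, so $B_{K,G(W)}(f,H)\ge\epsilon$ does not follow from it.

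The patch is short and is what the paper's proof of Lemma \ref{lem:comp-mep-requires-mep} actually does: never form $B_{K,G(W)}$ at all. Choose any compact $W_1\subset\Rea^o$ with $G(W)\subset W_1$ that satisfies the structural hypothesis (a closed ball containing $G(W)$ works, since it contains the closure of an open subset of a $p$-dimensional subspace). Since $E(W)=H(G(W))\subset H(W_1)$, every $r\in\emb(f(K),E(W))$ is also an element of $\emb(f(K),H(W_1))$ --- this is precisely your competitor-containment observation, now applied to the enlarged set --- whence $\norm{I-r}_{L^\infty(f(K))}\ge B_{K,W_1}(f,H)\ge\epsilon$; if $\emb(f(K),E(W))$ is empty, the conclusion holds vacuously. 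With this substitution your two ``movements'' go through verbatim, and your identification of $L^\infty(K)$ with $L^\infty(f(K))$ is correct: that is a notational slip already present in the paper's statement.
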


\begin{figure}[t]
    \centering
    \begin{subfigure}{.30\linewidth}
        \centering
        \includegraphics[width=\linewidth]{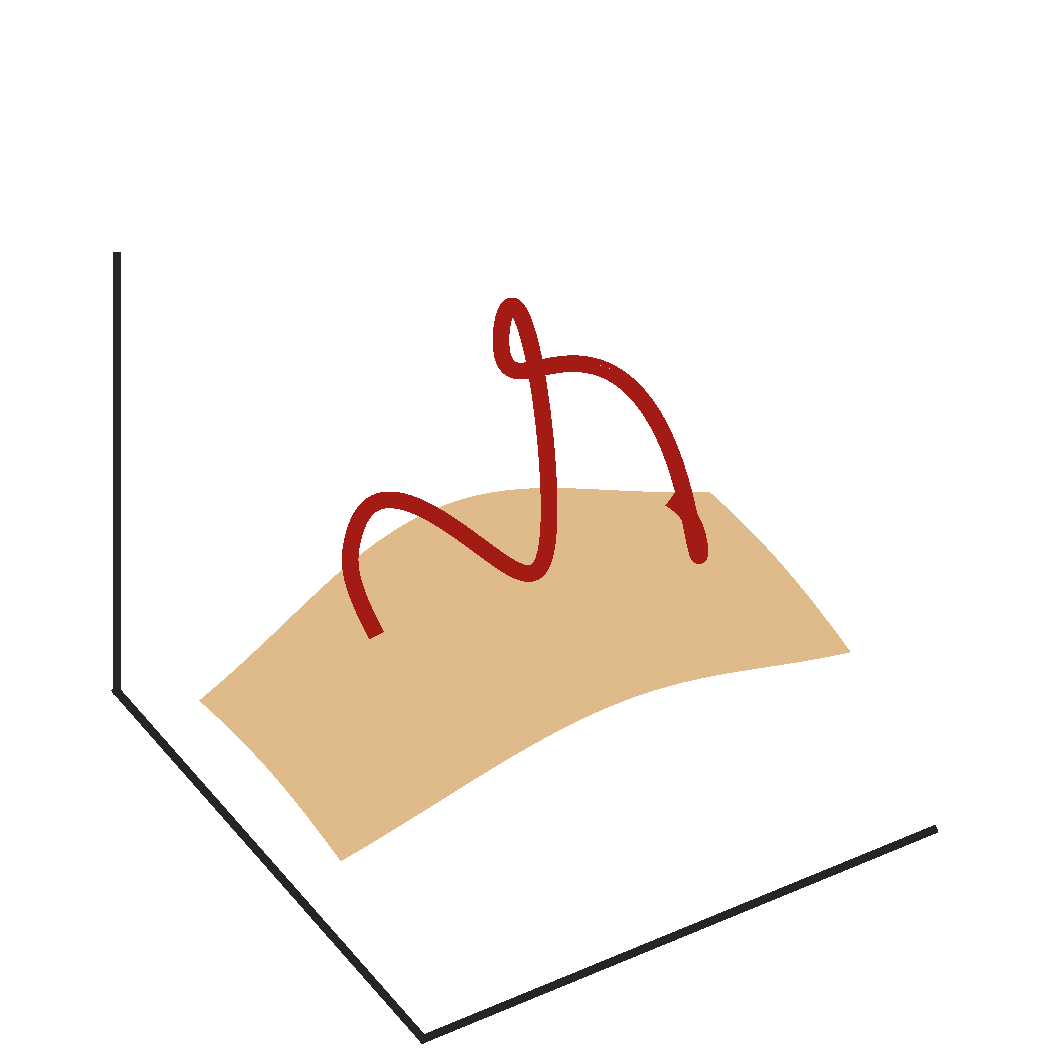}
        \subcaption{$f$ and $E_1$}
    \end{subfigure}
    \begin{subfigure}{.30\linewidth}
        \centering
        \includegraphics[width=\linewidth]{Approxing_first_Manifold__Step_3.png}
        \subcaption{$f$ and $E_2$}
    \end{subfigure}
    \begin{subfigure}{.30\linewidth}
        \centering
        \includegraphics[width=\linewidth]{Approxing_first_Manifold__Step_4.png}
        \subcaption{$f$ and $E_3$}
    \end{subfigure}
    \\
    \begin{subfigure}{.30\linewidth}
        \centering
        \includegraphics[width=\linewidth]{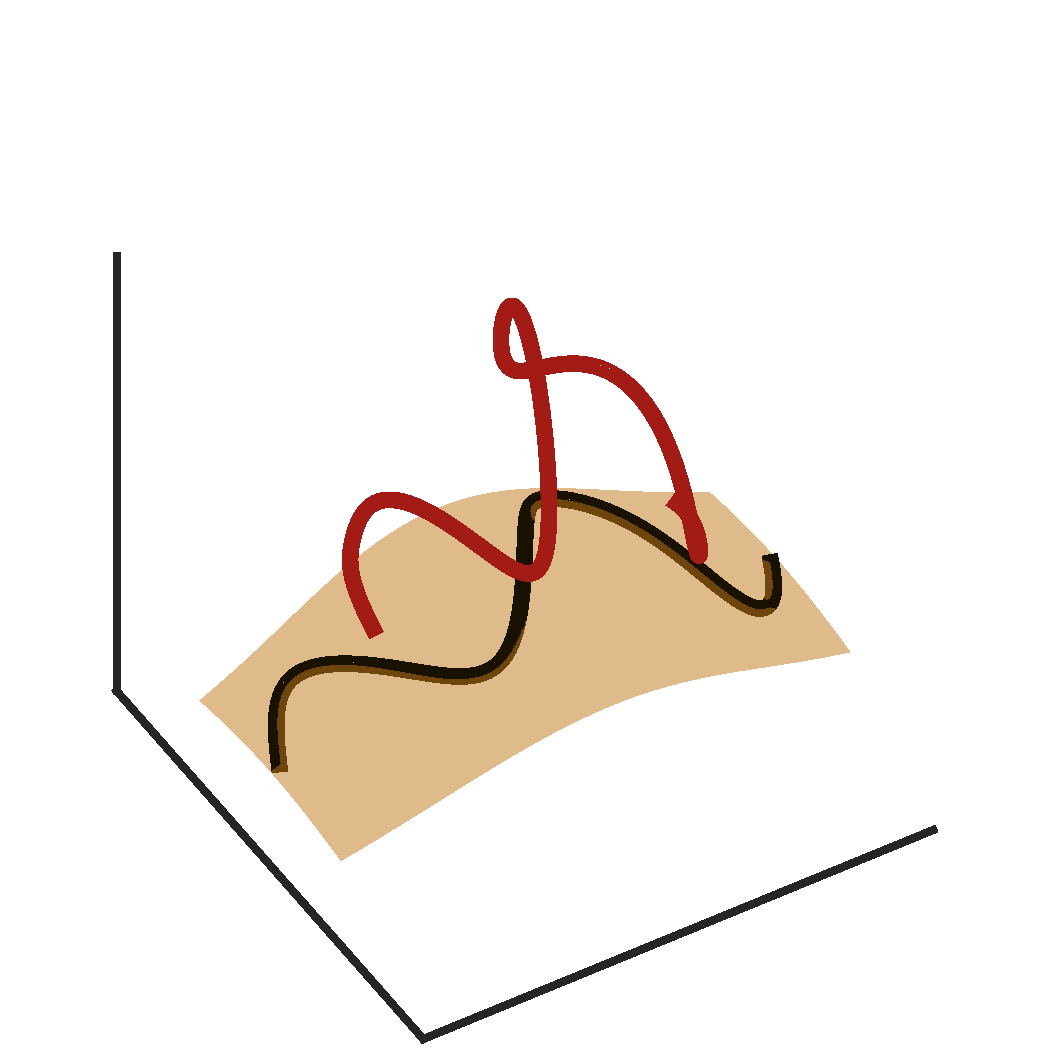}
        \subcaption{$E_1\circ E'_1$}
        \label{fig:parallel-convergence:step-1}
    \end{subfigure}
    \begin{subfigure}{.30\linewidth}
        \centering
        \includegraphics[width=\linewidth]{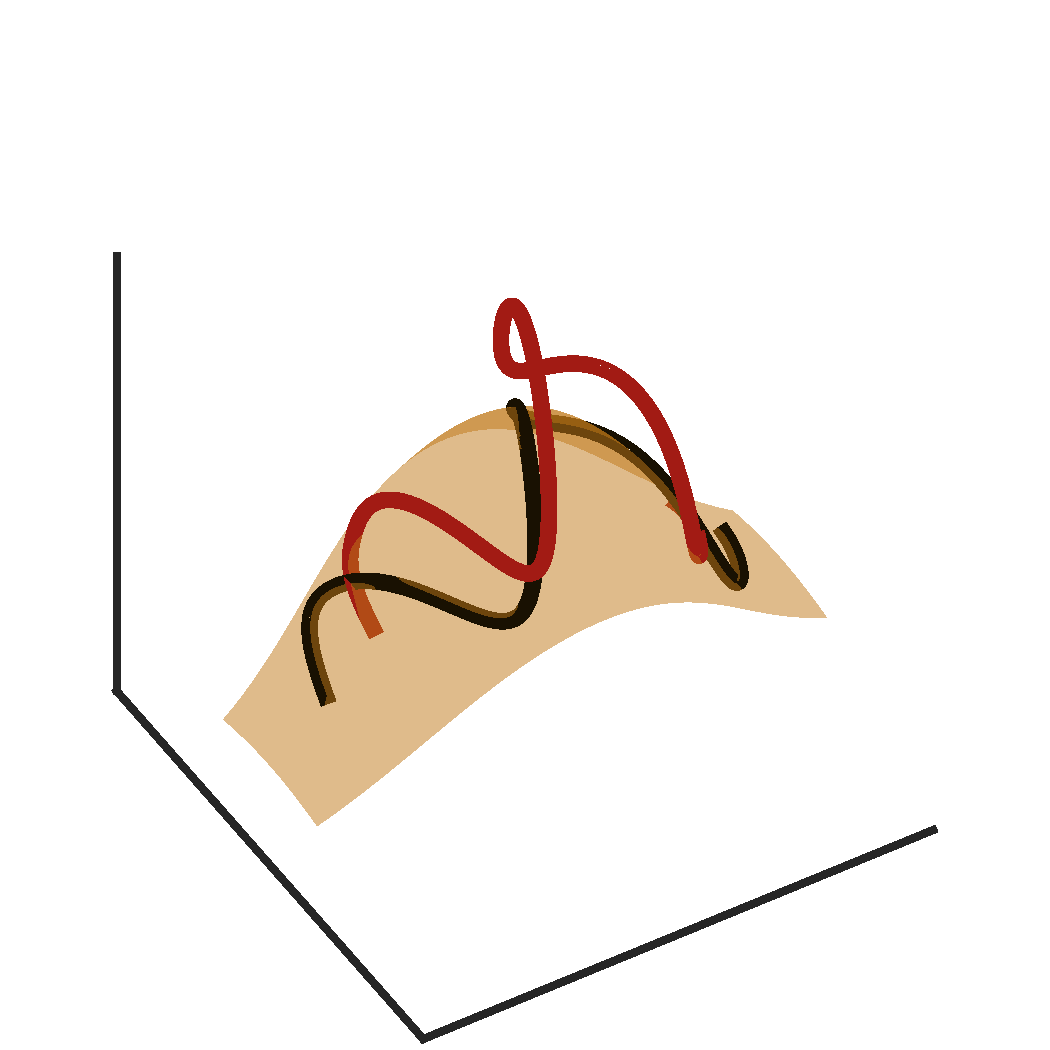}
        \subcaption{$E_2\circ E'_2$}
        \label{fig:parallel-convergence:step-2}
    \end{subfigure}
    \begin{subfigure}{.30\linewidth}
        \centering
        \includegraphics[width=\linewidth]{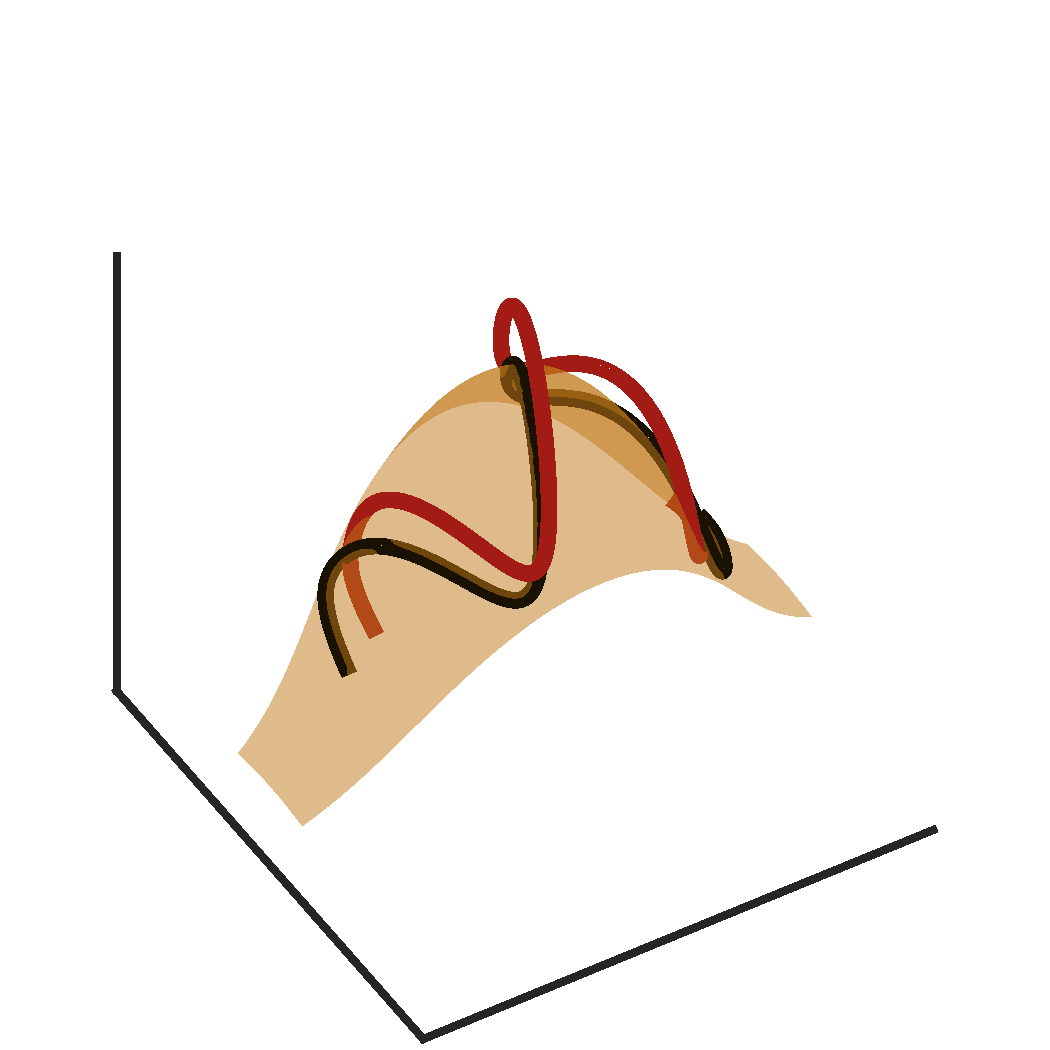}
        \subcaption{$E_3\circ E'_3$}
        \label{fig:parallel-convergence:step-3}
    \end{subfigure} 
    \caption{A visualization of the construction described in Corollary \ref{cor:decoupling} applied to a toy example when $m = 3, o = 2$ and $n = 1$. In all figures, \hlred{$f(K)$} is the \hlred{red} curve, \hlorange{$E_i(W)$} are the \hlorange{orange} surfaces, \hlblack{$E_i\circ E_i'(W')$} are the \hlblack{black} curves, $T_i$ and $\mu$ are not pictured. (a) - (c) The \hlorange{orange} surfaces approach the \hlred{red} curves. This means that the sequence of \hlorange{$E_1$}, \hlorange{$E_2$} and \hlorange{$E_3$} send $B_{K,W}(\hlred{f},\hlorange{E_i})$ to zero as $i$ increases. (d) - (f) The \hlblack{black} curves, a subset of the \hlorange{orange} surfaces, approach the \hlred{red} curves. This means that given \hlorange{$E_1$}, \hlorange{$E_2$} and \hlorange{$E_3$} we can always find another sequence $E_1'$, $E_2'$ and $E_3'$ that sends $B_{K,W}(\hlred{f},E_i\circ E_i')$ to zero as $i$ increases too. This as a consequence, sends $\wasstwo{\pushf{\hlred{f}}{\mu}}{\pushf{E_i\circ E'_i\circ T_i}{\mu}}$ to zero as $i$ increases too for some choice of $T_1$, $T_2$ and $T_3$.}
    \label{fig:parallel_convergence}
\end{figure}

{Lemma \ref{lem:mep-is-nesc-for-universality} has a simple takeaway: If a bijective neural network is universal, then the last layer, last two layers, etc., must have the MEP. In other words, a network is only as universal as its last layer. Earlier layers, on the other hand, need not satisfy the MEP. `Strong' layers close to the output can compensate for `weak' layers closer to the input, but not the other way around.}

{There is a gap between the negation of Theorem \ref{thm:univ-qual} and Lemma \ref{lem:mep-is-nesc-for-universality}. That is, it is possible for a family of functions $\cE$ to satisfy Lemma \ref{lem:mep-is-nesc-for-universality} but nevertheless satisfy the conclusion of Theorem \ref{thm:univ-qual}; these functions approximate measures without matching manifolds. Theorem \ref{thm:univ-qual} considers approximating measures, whereas Lemma \ref{lem:mep-is-nesc-for-universality} refers to matching manifolds exactly. As discussed in Section \ref{sec:top-obstructions}, there are no extendable embeddings that map $S^1$ to the trefoil knot in $\R^3$. Nevertheless, it is possible to construct a sequence of functions $\paren{E_i}_{i = 1,\dots}$ so that $\wasstwo{\nu}{\pushf{E_i}{\mu}} = 0$ where $\mu$ and $\nu$ are the uniform distributions on $S^1$ and trefoil knot respectively. Such a construction is given in \ref{sec:approximate-vs-exact-matching}. 

Although there are sequences of functions that approximate measure without matching manifolds, these sequences are never uniformly Lipschitz. This is proven in \ref{sec:approximate-vs-exact-matching}. Under an idealization of training, we may consider a network undergoing training as successively better and better approximators of a target mapping. If the target mapping does not match the topology, then training necessarily leads to gradient blowup.
}

The proof of Theorem \ref{thm:univ-qual} also implies the following result which, loosely speaking, says that optimality of later layers can be determined without requiring optimality of earlier layers, while still having a network that is end-to-end optimal. The conditions and result of this is visualized on a toy example in Figure \ref{fig:parallel_convergence}. 

\begin{corollary}
    \label{cor:decoupling}
    Let $\cF^{n,o} \subset \emb(\Rea^n,\Rea^o)$, $\cF^{o,m}\subset \emb(\Rea^o,\Rea^m)$, and let $\cE^{o,m} \subset \emb(\Rea^o,\Rea^m)$ have the $m,n,o$ MEP w.r.t. $\cF^{o, m}\circ \cF^{n,o}$. Then for every $f \in \cF^{o,m}\circ \cF^{n,o}$ and compact sets $K \subset \Rea^n$ and $W \subset \R^o$ there is a sequence $\set{E_i}_{i = 1,2,\dots} \subset \cE^{o,m}$ such that
    \begin{align}
        \label{eqn:decoupling:later-layers-min:1}
        \lim_{i\to\infty}B_{K,W}(f,E_i)
        = 0.
    \end{align}
    Further,if there is a compact $W' \subset \R^n$ and $\cE^{n,o} \subset \emb(W',\Rea^o)$ has the $o,n,n$ MEP w.r.t. $\cF^{n,o}$, and a $\cT^n$ is a universal approximator for distributions, then for any absolutely continuous $\mu\in\cP(K)$ where $K\subset \Rea^n$ is compact,  there is a sequence $\set{E_{i}'}_{i = 1,2,\dots} \subset \cE^{n,o}$ and $\set{T_{i}}_{i = 1,2,\dots,} \subset \cT^n$ so that
    \begin{align}
        \label{eqn:decoupling:later-layers-min:2}
        \lim_{i\to\infty}\wasstwo{\pushf{f}{\mu}}{\pushf{E_i\circ E'_i \circ T_i}{\mu}} = 0.
    \end{align}
\end{corollary}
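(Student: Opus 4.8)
The plan is to treat the two conclusions separately: \eqref{eqn:decoupling:later-layers-min:1} is almost a restatement of the MEP, while \eqref{eqn:decoupling:later-layers-min:2} is bootstrapped from it by repairing the density at the $\R^n$ bottleneck with the distributionally universal family $\cT^n$. For \eqref{eqn:decoupling:later-layers-min:1} I would simply unwind the $m,n,o$ MEP of $\cE^{o,m}$ (with respect to $\cF^{o,m}\circ\cF^{n,o}$, which contains $f$) at tolerance $\epsilon=1/i$: this produces $E_i\in\cE^{o,m}$ with $B_{K,W}(f,E_i)<1/i$, hence $\lim_i B_{K,W}(f,E_i)=0$. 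Matching the prescribed $W$ is a harmless normalization, since the embedding gap in \eqref{eqn:b-k-def} depends on the pair $(E_i,W)$ only through the image $E_i(W)$.

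For \eqref{eqn:decoupling:later-layers-min:2} the governing idea is that once the composite outer map $E_i\circ E_i'$ approximates the target manifold in embedding gap, the bottleneck family $\cT^n$ can correct the density. I would first establish the composition estimate $B_{K,W'}(f,E_i\circ E_i')\to 0$ and then exploit it as follows. A small gap yields embeddings $\rho_i\in\emb(f(K),(E_i\circ E_i')(W'))$ with $\norm{I-\rho_i}_{L^\infty(f(K))}\to 0$; because $W'\subset\R^n$, the pullback $\tilde g_i\coloneqq (E_i\circ E_i')^{-1}\circ\rho_i\circ f\colon K\to W'$ lands in $\R^n$, so $\pushf{\tilde g_i}{\mu}$ is a full-dimensional measure there. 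Distributional universality of $\cT^n$ then provides $T_i$ with $\pushf{T_i}{\mu}\to\pushf{\tilde g_i}{\mu}$, and since $(E_i\circ E_i')\circ\tilde g_i=\rho_i\circ f$ the triangle inequality for $W_2$ gives
\[
\wasstwo{\pushf{f}{\mu}}{\pushf{(E_i\circ E_i'\circ T_i)}{\mu}}\leq \norm{I-\rho_i}_{L^\infty(f(K))}+\lip(E_i\circ E_i')\,\wasstwo{\pushf{\tilde g_i}{\mu}}{\pushf{T_i}{\mu}},
\]
where the first term uses the sup-norm transport bound and the second uses Lipschitz contractivity of $W_2$; both facts are among those collected in Lemma~\ref{lem:b-k-helper}. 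For each fixed $i$ the last factor can be driven below $1/(i\,\lip(E_i\circ E_i'))$ by choosing $T_i$ fine enough, so a diagonal argument sends the bound to $0$.

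It remains to furnish $E_i'\in\cE^{n,o}$ with $B_{K,W'}(f,E_i\circ E_i')\to 0$. Writing $g_i\coloneqq E_i^{-1}\circ r_i\circ f\colon K\to W\subset\R^o$ for the near-identity embedding $r_i$ underlying the first part, one has that $E_i(g_i(K))=r_i(f(K))$ is within $\norm{I-r_i}$ of $f(K)$; hence it suffices that $E_i'(W')$ approximately contain the pulled-back manifold $g_i(K)$, which I would arrange by applying the $o,n,n$ MEP of $\cE^{n,o}$ to $g_i(K)$ and diagonalising so that $\lip(E_i)\,B_{K,W'}(g_i,E_i')\to 0$. This is precisely the composition-of-gaps estimate underlying Lemma~\ref{lem:comp-mep}, specialised here to an $n$-dimensional target, which is the reason the outer hypothesis is the $m,n,o$ rather than the $m,o,o$ MEP.

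The hard part is exactly this interface. The MEP of $\cE^{n,o}$ is stated relative to $\cF^{n,o}$, whereas the manifold one must approximate, $g_i(K)=E_i^{-1}(r_i(f(K)))$, is synthesized from the outer layer and is only Lipschitz a priori; making the inner MEP applicable requires that $g_i(K)$ be Hausdorff-approximable by manifolds $\tilde f(K)$ with $\tilde f\in\cF^{n,o}$. The freedom in choosing the near-identity embeddings $r_i$, together with the fact that $\cT^n$ can absorb any residual reparametrization of the base, is what I would use to stay within scope. The remaining obstacle is bookkeeping: the constants $\lip(E_i)$ and $\lip(E_i\circ E_i')$ may grow with $i$, so the two diagonalisations, first over the inner gap and then over $T_i$, must be performed innermost-first to produce a single sequence realising the iterated limit.
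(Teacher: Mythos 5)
Your proposal is correct and follows essentially the same route as the paper's proof: part one unwinds the MEP at tolerance $1/i$, and part two uses the near-identity embedding $r_i$, the composition estimate of Lemma \ref{lem:b-k-helper} point 8, the inner $o,n,n$ MEP invoked with the Lipschitz-weighted tolerance $\epsilon_i/(1+\lip(E_i))$, and the pullback measure $\pushf{\paren{(E_i\circ E_i')^{-1}\circ \rho_i \circ f}}{\mu}$ (which is precisely the construction in the proof of Lemma \ref{lem:b-k-helper} point 9) repaired by the distributional universality of $\cT^n$. The interface issue you flag---that the inner MEP is stated w.r.t.\ $\cF^{n,o}$ yet must be applied to the synthesized map $E_i^{-1}\circ r_i\circ f$---is genuine, but the paper's own proof glosses it in exactly the same way, so your attempt matches the paper both in structure and in its level of rigor.
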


The proof of Corollary \ref{cor:decoupling} is in Appendix \ref{sec:cor:decoupling}. Approximation results for neural networks are typically given in terms of the network end-to-end. Corollary \ref{cor:decoupling} shows that the layers of approximating networks can in fact be built one at a time. This is related to an observation made in \citep[Section B]{brehmer2020flows} about training strategies, where the authors remark that they `expect faster and more robust training of a network' of the form in Eqn. \ref{eqn:network-def} when $L = 1$, that is $\cF = \cT^{m}_1\circ \cR^{n,m}_1 \circ \cT^n_0$. Corollary \ref{cor:decoupling} shows that there exists a minimizing sequence in $\cT^{m}_1$ that need only minimize Eqn. \ref{eqn:decoupling:later-layers-min:1}; the $\cT^n_0$ layers can be minimized after. 
We can further combine Lemma \ref{lem:mep-is-nesc-for-universality} and Cor. \ref{cor:decoupling} to prove that not only can the network from \citep{brehmer2020flows} be trained layerwise, but that \textit{any} universal network can \textit{necessarily} be trained layerwise, provided that it can be written as a composition of two smaller layers.

\subsection{Layer-wise Inversion and Recovery of Weights}
\label{sec:additional-properties}

In this subsection, we describe how our network can be augmented with more useful properties if the architecture satisfies a few more assumptions without affecting universal approximation. We focus on a new layerwise projection result, with a further discussion of black-box recovery of our network's weights in Appendix \ref{sec:additional-prop-details:black-box-recovery}.

\begin{figure}
    \centering
    \includegraphics[width=.5\linewidth]{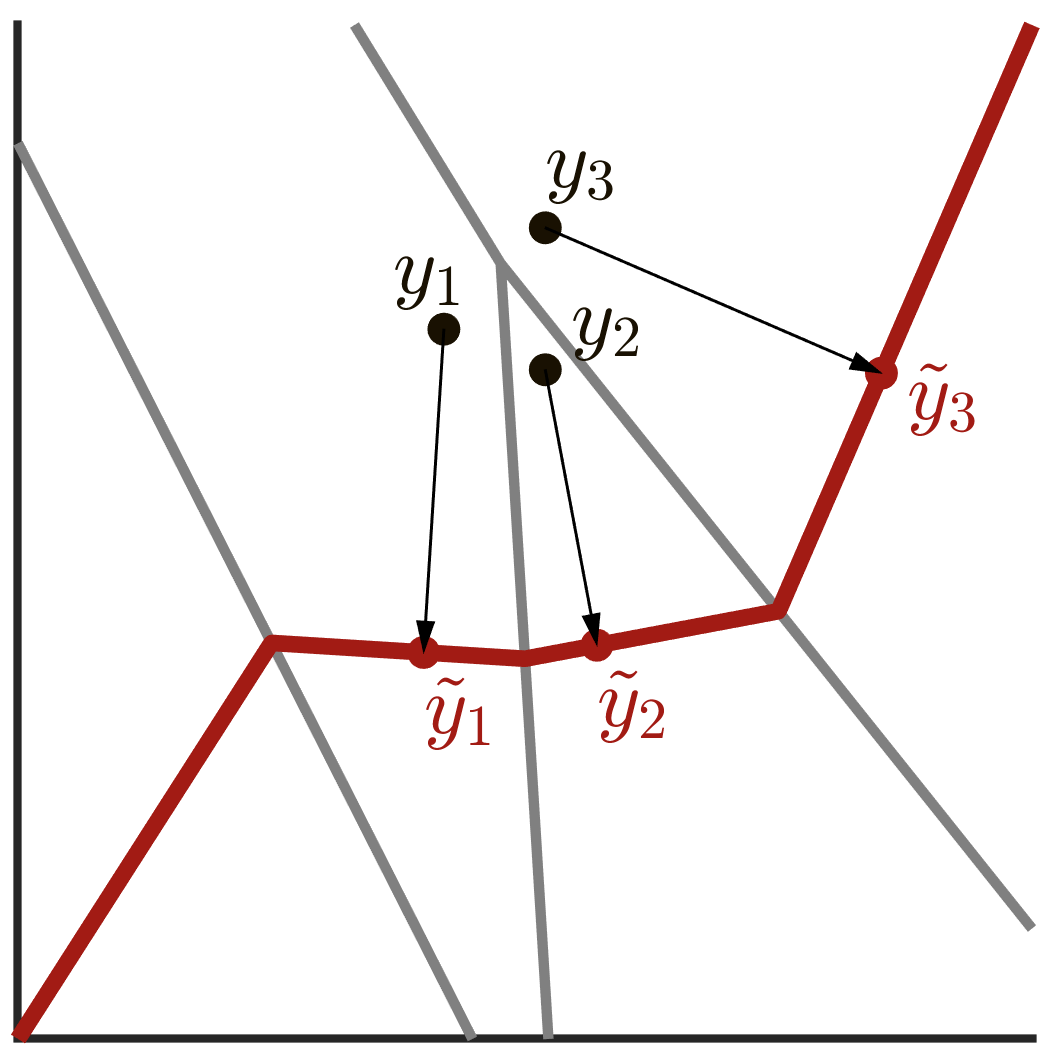}
    \caption{A schematic showing that, for a toy problem, the least-squares projection to a piecewise affine range can be discontinuous. Left: A partitioning of $\Rea^2$ into classes with \hlgray{gray} boundaries. Two points $y, y'$ are in the same class if they are both closest to the same affine piece of \hlred{$R(\Rea)$}, the range of $\hlred{R}$. The three points $y_1, y_2$ and $y_3$ are each projected to the closest three points on \hlred{$R(\Rea)$} yielding \hlred{$\tilde y_1$}, \hlred{$\tilde y_2$} and \hlred{$\tilde y_3$}. Note that the projection operation is continuous within each section, but discontinuous across \hlgray{gray} boundaries between section.
    }
    \label{fig:least-squares-proj}
\end{figure}

Given a point $y \in \Rea^{m}$ that does \ul{not} lie in the range of the network, projecting $y$ onto the range of the network is a practical problem without an obvious answer.
The crux of the problem is inverting the injective (but non-invertible) $\cR$ layers when $\cR$ contains only full-rank matrices as in (R1) or (R2) then we can compute a least-squares solution. If, however, $\cR$ contains layers which are only piecewise linear, as in (R3), then the problem of computing a least squares solution is more difficult, see Fig. \ref{fig:least-squares-proj}. Nevertheless, we find that if $\cR$ is (R3) we can still compute a least-squares solution.



\begin{assumption}
    Let $\cR$ be given by one of (R1) or (R2), or else (R3) when $m = 2n$.
\end{assumption}

If $\cR$ only contains linear operators, then the least-squares problem can be computed by solving the normal equations (see \citep[Section 5.3]{golub1996matrix}.) This includes cases (R1) or (R2). For (R3) we have the following result when $D = I^{n\times n}$ and $M \in \R^{0\times n}$.

\begin{definition}
    \label{def:one-layer-proj}
    Let $W = \bmat{B^t&-DB^t}^t \in \Rea^{2n\times n}$  and $y\in \R^{2n}$ be given, and let $R(x) = \relu(Wx)$. Then define $c(y) \in \Rea^{2n}, \Delta_{y}\in \Rea^{n\times n}, M_{y}\in \Rea^{n\times 2n}$ where
    \begin{align}
        \label{eqn:projection:c-y}
        c(y) &\coloneqq \max\paren{\bmat{I^{n\times n}&-I^{n\times n} \\ -I^{n\times n} & I^{n\times n}}y, 0}\\
        \bracketed{\Delta_{y}}_{i,j} &\coloneqq \begin{cases}
                0 &\text{if } i \neq j\\
                0 &\text{if } \bracketed{c(y)}_{i + n} = 0\\
                1 &\text{if } \bracketed{c(y)}_{i + n} > 0
        \end{cases}\\
        M_y &\coloneqq \bmat{(I^{n\times n} -\Delta_{y}) & \Delta_{y}}
    \end{align}
    where the max in Eqn. \ref{eqn:projection:c-y} is taken element-wise.
\end{definition}

\begin{theorem}
    \label{thm:proj-lemma}
    Let $y \in \Rea^{2n}$. If for $i = 1,\dots,n$,  $\bracketed{y}_{i} \neq \bracketed{y}_{i + n}$ then
    \begin{align}
        \label{eqn:thm:proj-lemma}
        R^{\dagger}(y) \coloneqq \paren{M_yW}^{-1}M_y y = \argmin_{x \in \Rea^n}\norm{y - R(x)}_2.
    \end{align}
    Further, if there is a $i \in \set{1,\dots,n}$ such that $\bracketed{y}_{i} = \bracketed{y}_{i + n}$, then there are multiple minimizers of $\norm{y - R(x)}_2$, one of which is $R^{\dagger}(y)$.
\end{theorem}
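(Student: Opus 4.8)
The plan is to exploit the special two-block structure of $W$ and reduce the $2n$-dimensional least-squares problem to $n$ independent scalar problems. First I would pass to the variable $u = Bx$; since $B \in \GL_n(\Rea)$ this is a bijective change of variables, so minimizing over $x$ is equivalent to minimizing over $u \in \Rea^n$ with $x = B^{-1}u$. With $D = I^{n\times n}$ and $M$ empty we have $W = \bmat{B \\ -B}$, hence $R(x) = \relu(Wx) = \bmat{\relu(Bx) \\ \relu(-Bx)} = \bmat{u^+ \\ u^-}$, where $u^+ = \max(u,0)$ and $u^- = \max(-u,0)$ componentwise. The objective then splits as $\sum_{i=1}^n (\bracketed{y}_i - u_i^+)^2 + (\bracketed{y}_{i+n} - u_i^-)^2$, so each $u_i$ may be optimized separately.

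Next I would solve the scalar subproblem. For fixed $i$ the set of reachable pairs $(u_i^+, u_i^-)$ is the union of the two nonnegative half-axes $\set{(t,0)\colon t\ge 0} \cup \set{(0,s)\colon s\ge 0}$, because at most one of $u_i^+, u_i^-$ is nonzero. Projecting $(\bracketed{y}_i, \bracketed{y}_{i+n})$ onto each half-axis and comparing the two residuals shows the minimizer either selects the first block, setting $u_i = \bracketed{y}_i$, or selects the second block, setting $u_i = -\bracketed{y}_{i+n}$, according to the sign of $\bracketed{y}_i - \bracketed{y}_{i+n}$. I would then check that this is exactly the branch encoded by $\Delta_y$ from Definition~\ref{def:one-layer-proj}: its $i$-th diagonal entry is $1$ precisely when $\bracketed{y}_{i+n} > \bracketed{y}_i$ (equivalently $[c(y)]_{i+n} > 0$), which is the second-block branch, and $0$ otherwise.

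I would then reassemble the coordinatewise solutions into the matrix formula. With $M_y = \bmat{I^{n\times n} - \Delta_y & \Delta_y}$ the vector $M_y y$ has $i$-th entry equal to whichever of $\bracketed{y}_i, \bracketed{y}_{i+n}$ the branch selects, and $M_y W = (I - \Delta_y)B - \Delta_y B = (I - 2\Delta_y)B$. Since $\Delta_y$ is diagonal with entries in $\set{0,1}$, the matrix $I - 2\Delta_y$ is a diagonal sign matrix, hence its own inverse, and together with $\det B \neq 0$ this gives invertibility of $M_y W$; thus $R^{\dagger}(y) = (M_yW)^{-1}M_y y = B^{-1}(I - 2\Delta_y)M_y y$ is well defined. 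A direct computation of $u^\ast = B\,R^{\dagger}(y) = (I - 2\Delta_y)M_y y$ then recovers, coordinate by coordinate, the scalar minimizer found above, proving the first claim. For the boundary case $\bracketed{y}_i = \bracketed{y}_{i+n}$ the two half-axis projections yield equal residual, so that coordinate admits two minimizers; I would observe that the convention $\bracketed{\Delta_y}_{ii} = 0$ returns one of them, which gives the second claim.

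The main obstacle I anticipate is not the linear algebra but the exhaustive case analysis matching the discrete selector $\Delta_y$ to the true geometric minimizer across all sign patterns of $(\bracketed{y}_i, \bracketed{y}_{i+n})$. The delicate configuration is when both entries are negative: there the optimal reachable point is the origin ($u_i = 0$), and I would need to verify carefully that the sign rule defining $\Delta_y$ still returns the correct branch, or else confirm that the working hypotheses confine $y$ to the region where the stated closed form is exact. Pinning down precisely this case, and confirming that the genericity assumption $\bracketed{y}_i \neq \bracketed{y}_{i+n}$ is exactly what forces a unique branch everywhere else, is the crux of the argument.
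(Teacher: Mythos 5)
Your plan is, up to notation, the same as the paper's proof: the change of variables $u = Bx$ is the paper's expansion of $x$ in the basis dual to the rows of $B$, the reduction to $n$ independent scalar problems over the union of two half-axes matches its step 2, and your reassembly via $M_yW = (I - 2\Delta_y)B$ (diagonal sign matrix times invertible $B$) is a cleaner version of its steps 1, 3, 4. The obstacle you flag at the end, however, is not a verification you could have completed, because the first horn of your dichotomy is false: when $\bracketed{y}_i < 0$ and $\bracketed{y}_{i+n} < 0$, the projection onto \emph{either} half-axis is the origin, so the unique minimizer has $u_i = 0$, while the closed form returns $u_i = \bracketed{y}_i$ or $u_i = -\bracketed{y}_{i+n}$, both nonzero. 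Concretely, take $n = 1$, $B = D = 1$, $y = (-1,-2)$: then $\bracketed{c(y)}_2 = \max(-1,0) = 0$, so $\Delta_y = 0$, $M_y = \bmat{1 & 0}$, and $R^\dagger(y) = -1$, giving $\norm{y - R(-1)}_2^2 = 1 + 9 = 10$, whereas $x = 0$ is the unique minimizer with $\norm{y - R(0)}_2^2 = 1 + 4 = 5$. The hypothesis $\bracketed{y}_1 \neq \bracketed{y}_2$ holds, so this is a genuine counterexample to \eqref{eqn:thm:proj-lemma} as stated. The identity is exact precisely on the region where $\max\paren{\bracketed{y}_i, \bracketed{y}_{i+n}} \geq 0$ for every $i$ (in particular on the nonnegative orthant, where the range of $R$ lives); the true per-branch minimizers are $u_i = \max(\bracketed{y}_i,0)$ and $u_i = -\max(\bracketed{y}_{i+n},0)$, the selector must compare positive parts rather than raw values, and equivalently one can repair all cases by replacing $y$ with $\relu(y)$ in Definition \ref{def:one-layer-proj} and in $M_y y$. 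Note also that your intermediate claim that the half-axis projection "sets $u_i = \bracketed{y}_i$" is already the unconstrained affine solution, valid only when it lands in the correct sign region, i.e.\ when $\bracketed{y}_i \geq 0$.

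For comparison, the paper's own proof commits the same oversight silently, whereas you at least isolate it: its scalar case analysis evaluates $\min_{\alpha_i \in \Rea^+} \paren{\bracketed{y}_i - \alpha_i}^2 + \bracketed{y}_{i+n}^2 = \bracketed{y}_{i+n}^2$, which presupposes $\bracketed{y}_i \geq 0$, and symmetrically its evaluation $\min_{\alpha_i \in \Rea^-} f(\alpha_i) = \bracketed{y}_i^2$ presupposes $\bracketed{y}_{i+n} \geq 0$; the asserted minimizers $\alpha_i = \bracketed{y}_i > 0$, resp.\ $\alpha_i < 0$, assume signs that are never hypothesized. So your instinct that the both-negative quadrant is the crux was exactly right, and the honest resolution is your second alternative: the statement needs the additional hypothesis $\max\paren{\bracketed{y}_i, \bracketed{y}_{i+n}} \geq 0$ for each $i$ (or the $\relu(y)$ modification above). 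As submitted, your proof is incomplete at its self-identified crux — you pose the dichotomy but resolve neither horn — and no completion exists without amending the theorem.
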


The proof of Theorem \ref{thm:proj-lemma} is given in Appendix \ref{sec:additional-prop-details:layer-wise-projection}.

\begin{remark}
    We note that Theorem \ref{thm:proj-lemma} is different from many of the existing work on inverting expansive layers, e.g. \citep{aberdam2020and,bora2017compressed,lei2019inverting}, our result gives a direct inversion algorithm that is provably the least-squares minimizer. Further, if each expansive layer is any combination of (R1), (R2), or (R3) then the entire network can be inverted end-to-end by using either the above result or solving the normal equations directly.
\end{remark}

\section{Conclusion}
\label{sec:conclusion}

Bijective flow networks are a powerful tool for learning push-forward mappings in a space of fixed dimension. Increasingly, these flow networks have been used in combination with networks that increase dimension in order to produce networks which are purportedly universal. 

In this work, we have studied the theory underpinning these flow and expansive networks by introducing two new notions, the embedding gap and the manifold embedding property. We show that these notions are both necessary and sufficient for proving universality, but require important topological and geometrical considerations which are, heretofore, under-explored in the literature. We also find that optimality of the studied networks can be established `in reverse,' by minimizing the embedding gap, which we expect opens the door to convergence of layer-wise training schemes. Without compromising universality, we can also use specific expansive layers with a new layerwise projection result. 

\section{Acknowledgements}
We would like to thank Anastasis Kratsios for his editorial input and mathematical discussions that helped us refine and trim our presentation; Pekka Pankka for his suggestion of the `clean trick,' which was crucial to the development of the proof of Lemma \ref{thm:emb-and-extens-coincide}; and Reviewer for supplying \ref{fig:unknot-to-trefoil-knot} and suggesting the addition of \ref{sec:approximate-vs-exact-matching}.

I.D. was supported by the European Research Council Starting Grant 852821---SWING. M.L. was  supported by Academy of Finland, grants 284715, 312110. M.V.dH. gratefully acknowledges support from the Department of Energy under grant DE-SC0020345, the Simons Foundation under the MATH + X program, and the corporate members of the Geo-Mathematical Imaging Group at Rice University.


\bibliography{imports/references}
\bibliographystyle{icml2022}
\newpage
\appendix
\onecolumn

\section{Summary of Notation}
\label{sec:summary-of-notation}

Throughout the paper we make heavy use of the following notation.

\begin{enumerate}
    \item Unless otherwise stated, $X$ and $Y$ always refer to subsets of Euclidean space, and $K$ and $W$ always refer to compact subsets of Euclidean space.
    \item $f \in C(X,Y)$ means that $f \colon X \to Y$ is continuous.
    \item For families of functions $\cF$ and $\cG$ where each $\cF\ni f \colon X \to Y$ and $\cG \ni g \colon Y \to Z$, then we define $\cG\circ\cF = \set{g\circ f \colon X \to Z \colon f \in \cF, \ g \in \cG}$.
    \item $f \in \emb(X,Y)$ means that $f\in C(X,Y)$ is continuous and injective on the range of $f$, i.e. an embedding, and furthermore that $f^{-1}\colon f(X) \to X$ is continuous.
    \item $\mu \in \cP(X)$ means that $\mu$ is a probability measure over $X$.
    \item $\wasstwo{\mu}{\nu}$ for $\mu,\nu \in \cP(X)$ refers to the Wasserstein-2 distance, always with $\ell_2$ ground metric.
    \item $\norm{\cdot}_{L^p(X)}$ refers to the $L^p$ norm of functions, from $X$ to $\Rea$.
    \item For vector-valued $f\colon X \to Y$,  $\norm{f}_{L^\infty(X)} = \esssup_{x \in \cX}\norm{f}_{2}$. Note that $Y$ is always finite dimensional, and so all discrete $1 \leq q\leq \infty$ norms are equivalent.
    \item $\lip(g)$ refers to the Lipschitz constant of $f$.
    \item For $x \in \Rea^n$, $\bracketed{x}_i \in \Rea$ is the $i$'th component of $x$. Similarly, for matrix $A \in \Rea^{m\times n}$, $[A]_{ij}$ refers to the $j$'th element in the $i$'th column.
\end{enumerate}

\section{Detailed Comparison to Prior work}
\label{sec:comparison-to-others}

\subsection{Connection to Brehmer \& Cranmer (2020)}
\label{sec:comparison-to-others:brehmer}

In \citep{brehmer2020flows}, the authors introduce manifold-learning flows as an invertible method for learning probability density supported on a low-dimensional manifold. Their model can be written as

\begin{align}
    \label{eqn:brehmer-kramer-architecture}
    \cF = \cT_1^{m}\circ \cR^{n,m}\circ \cT_0^n
\end{align}
where $\cT^m_{1} \subset C(\Rea^m,\Rea^m)$, $\cT^m_{0}\subset C(\Rea^n,\Rea^n)$, and $\cR = \set{\bmat{I^{n\times n}\\ \bszero^{(m-n)\times n}}}$ is a zero-padding (R1). They invert $f \in \cF$ in two different ways. For manifold-learning flows ($\cM$-flows) they restrict $\cT_{1}^m$ to be an invertible flow, and for manifold-learning flows with separate encoder ($\cM_e$-flows) they place no such restrictions on $\cT^m_{1}$ and instead train a separate neural network $e$ to invert elements of $\cT^m_{1}$.

Our results apply out-of-the-box to the architectures used in Experiment A of \citep{brehmer2020flows}. The architecture described in Eqn. \ref{eqn:brehmer-kramer-architecture} is of the form of Eqn. \ref{eqn:network-def} where $L = 1$. Further, although they are not studied here, our analysis can also be applied to quadratic flows.

The network used in \citep[Experiment 4.A]{brehmer2020flows} uses coupling networks, (T1), where $\cT^m_{1}$ and $\cT^n_{0}$ are both 5 layers deep. 
For \citep[Experiments 4.B and 4.C]{brehmer2020flows} the authors choose expressive elements $\cT$ that are rational quadratic flows \citep{durkan2019neural} for both $\cT^m_{1}$ and $\cT^n_{0}$. In Experiment 4.B they let $T_{1}$ and $T_{0}$ again be 5 layers deep, 
and in 4.C they again let $T_{1}$ by 20 layers deep and $T_{0}$ 15 layers. 
For the final experiment, 4.D, the choose more complicated expressive elements that combine Glow \citep{kingma2018glow} and Real NVP \citep{dinh2016density} architectures. These elements include the actnorm, $1\times1$ convolutions and rational-quadratic coupling transformations along with a multi-scale transformation. 

The authors mention universality of their network without our proof, but our universality results in  Theorem \ref{thm:univ-qual} apply to their networks from Experiment A wholesale. Further in their work the authors describe how training can be split into a manifold phase and density phase, wherein the manifold phase $\cT^m_{1}$ is trained to learn the manifold, and in the density phase $\cT^m_{1}$ if fixed and $\cT^n_{0}$ is trained to learn the density thereupon. This statement is made formal and proven by our Cor. \ref{cor:decoupling}.

\subsection{Connection to Kothari et al. (2021)}
\label{sec:comparison-to-others:kothari}

In \citep{kothari2021trumpets}, the authors introduce the `Trumpet' architecture, for its architecture, which has many alternating flow networks \& expansive layers with many flow-networks in the low-dimensional early stages of the network, which gives the architecture a shape similar to the titular instrument.

The architecture studied in \citep{kothari2021trumpets} is precisely of the form of Eqn. \ref{eqn:network-def}, where the bijective flow networks are revnets \citep{gomez2017reversible,jacobsen2018revnet} architecture, and the expansive elements are $1\times 1$ convolutions, as in (R2). To out knowledge, there are no results that show that the revnets used are universal approximators, but if they revnets are substituted with either (T1) or (T2), then the, we could apply Theorem \ref{thm:univ-qual} to the resulting architecture.


\section{Proofs}
\subsection{Main Results}
\label{sec:proofs:preliminaries}

\subsubsection{Embedding Gap}
\label{sec:proofs:preliminaries:helper-lemma}
To aid all of our subsequent proofs, we first present the following lemma which present inequalities and identities for the embedding gap.

\begin{lemma}
    \label{lem:b-k-helper}
    For all of the following results, $f \in \emb(K,\Rea^m)$ and $g \in \emb(W,\Rea^m)$ and $n \leq o \leq m$.
    \begin{enumerate}
        \item 
            \begin{align}
                B_{K,W}(f,g) &\geq \sup_{x_n \in K}\inf_{x_o \in W} \norm{g(x_o) - f(x_n)}_2.
            \end{align}
        \item Let $X,Y \subset W$, let $g$ be Lipschitz on $W$, and $r \in \emb(X,Y)$.
        Then, there is a $r' \in \emb(g(X),g(Y))$ such that $g\circ r = r'\circ g$ and $\norm{I - r'}_{L^{\infty}(g(X))} \leq \norm{I - r}_{L^{\infty}(X)}\lip(g)$.
        \item 
            \begin{align}
                \norm{I - r}_{L^{\infty}(K)} = \norm{I - r^{-1}}_{L^{\infty}(r(K))}
            \end{align}

        \item
            Let $K\subset \R^n$, $X\subset \R^p$ and $W\subset \R^o$ be compact sets. Also, let $f\in \emb(K,W)$ and $h\in \emb(X,W)$, and let  $g\in \emb(W,\R^m)$ be
            a Lipschitz map. Then
            \begin{align}
                B_{K,X}(g\circ f,g\circ h) &\leq \lip(g)  B_{K,X}(f,h) .
          \end{align}
        \item $B_{K,W}(f,g) \leq \sup_{x \in K} \norm{g\circ h(x) - f(x)}_2$
        where $h \in \emb(K,\Rea^o)$  is a map satisfying $h(K)\subset W$.
        
        \item For any $X$ that is the closure of an open set , if $h \in \emb(X,W)$ then 
        \begin{align}
            B_{K,W}(f,g) \leq B_{K,X}(f,g\circ h)
        \end{align}
        
        \item For any $r \in \emb(f(K),\Rea^m)$, \begin{align}
            B_{K,W}(f,g) \leq \norm{I - r}_{L^{\infty}(f(K))} + B_{K,W}(r\circ f, g).
        \end{align}
        \item For any $r\in \emb(f(K),g(W))$ and $h \in \emb(X,W)$ where $X \subset \Rea^p$ is the closure of a set $U$ which is open in the subspace topology of some vector space of dimension $p$, where $n \leq p \leq o$ we have that 
        \begin{align}
            B_{K,X}(f,g\circ h) \leq \norm{I - r}_{L^{\infty}(f(K))} + \lip(g) B_{K,X}(g^{-1}\circ r \circ f, h)
        \end{align}
        where $\lip(g)$ denotes the Lipschitz constant of $g$.
        \item 
            For any $\mu_n \in \cP(K)$ there is a $\mu_o \in \cP(W)$ such that
            \begin{align}
                \label{eqn:wass-lower-bounded-by-b}
                \wasstwo{\pushf{f}{\mu_n}}{\pushf{g}{\mu_o}} \leq  B_{K,W}(f,g)
            \end{align}
   \end{enumerate}
\end{lemma}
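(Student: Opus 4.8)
The plan is to read the definition of $B_{K,W}(f,g)$ through the lens of optimal transport: each admissible embedding $r\in\emb(f(K),g(W))$ will simultaneously \emph{define} a base measure and \emph{serve} as a transport map realizing the desired Wasserstein bound, after which a compactness argument removes the infimum slack. First I would fix an arbitrary $r\in\emb(f(K),g(W))$ (the admissible set is nonempty by the remark following Definition~\ref{def:b-k}) and set $\mu_o^r \coloneqq \pushf{(g^{-1}\circ r\circ f)}{\mu_n}$. Since $g\in\emb(W,\Rea^m)$ has continuous inverse $g^{-1}\colon g(W)\to W$, and $r\circ f$ maps $K$ into the domain $g(W)$, the map $g^{-1}\circ r\circ f\colon K\to W$ is continuous, so $\mu_o^r\in\cP(W)$. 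The crucial observation is that $g\circ g^{-1}$ is the identity on $g(W)\supset (r\circ f)(K)$, giving
\begin{align*}
    \pushf{g}{\mu_o^r} = \pushf{(g\circ g^{-1}\circ r\circ f)}{\mu_n} = \pushf{(r\circ f)}{\mu_n} = \pushf{r}{(\pushf{f}{\mu_n})},
\end{align*}
so that $r$ is itself a transport map from $\pushf{f}{\mu_n}$ onto $\pushf{g}{\mu_o^r}$.

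Next I would convert this transport map into a Wasserstein bound. For any measurable $S$ with $\pushf{S}{\alpha}=\beta$, the graph coupling yields $\wasstwo{\alpha}{\beta}^2\le\int\norm{x-S(x)}_2^2\,d\alpha(x)$; applying this with $\alpha=\pushf{f}{\mu_n}$, $\beta=\pushf{g}{\mu_o^r}$ and $S=r$ gives
\begin{align*}
    \wasstwo{\pushf{f}{\mu_n}}{\pushf{g}{\mu_o^r}}^2 \le \int_{f(K)}\norm{y-r(y)}_2^2\,d(\pushf{f}{\mu_n})(y) \le \norm{I-r}_{L^\infty(f(K))}^2.
\end{align*}
The last inequality uses that $\pushf{f}{\mu_n}$ is a probability measure supported in $f(K)$ and that, by continuity of $I-r$ on the compact set $f(K)$, the $L^\infty$ norm equals the genuine supremum of $\norm{y-r(y)}_2$ there. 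Taking square roots, $\wasstwo{\pushf{f}{\mu_n}}{\pushf{g}{\mu_o^r}}\le\norm{I-r}_{L^\infty(f(K))}$ for every admissible $r$.

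Finally I would upgrade the infimum over $r$ to the claimed exact inequality. Choosing a minimizing sequence $r_k$ with $\norm{I-r_k}_{L^\infty(f(K))}\to B_{K,W}(f,g)$ and writing $\mu_k\coloneqq\mu_o^{r_k}$, compactness of $W$ makes $\cP(W)$ weak-$*$ sequentially compact (Prokhorov), so along a subsequence $\mu_{k_j}\rightharpoonup\mu_o$ for some $\mu_o\in\cP(W)$. Continuity of $g$ gives $\pushf{g}{\mu_{k_j}}\rightharpoonup\pushf{g}{\mu_o}$, and lower semicontinuity of $\wasstwo{\cdot}{\cdot}$ under weak convergence \citep{villani2008optimal} yields
\begin{align*}
    \wasstwo{\pushf{f}{\mu_n}}{\pushf{g}{\mu_o}} \le \liminf_{j\to\infty}\wasstwo{\pushf{f}{\mu_n}}{\pushf{g}{\mu_{k_j}}} \le \lim_{j\to\infty}\norm{I-r_{k_j}}_{L^\infty(f(K))} = B_{K,W}(f,g).
\end{align*}
The transport-map construction is elementary once one sees the identity $\pushf{g}{\mu_o^r}=\pushf{r}{(\pushf{f}{\mu_n})}$; the one genuinely delicate point is this final passage to the limit, which is what turns the ``$B+\epsilon$'' available for each fixed $r$ into the sharp bound. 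If the infimum defining $B_{K,W}$ is attained, this compactness step can be skipped entirely.
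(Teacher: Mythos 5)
Your argument for item 9 is correct and is essentially the paper's own proof of that item: the same pullback measure $\mu_o^r=\pushf{(g^{-1}\circ r\circ f)}{\mu_n}$, the same observation that $\pushf{g}{\mu_o^r}=\pushf{(r\circ f)}{\mu_n}$ so that $r$ itself is a transport map, the same graph-coupling estimate $\wasstwo{\pushf{f}{\mu_n}}{\pushf{g}{\mu_o^r}}\le\norm{I-r}_{L^\infty(f(K))}$, and the same Prokhorov compactness step on $\cP(W)$ to remove the infimum slack. The only divergence is in how you pass to the limit: you invoke lower semicontinuity of $\wasstwo{\cdot}{\cdot}$ under weak convergence, whereas the paper notes that weak convergence of measures supported in a fixed compact set implies convergence of second moments and hence convergence in the Wasserstein-2 metric (citing Ambrosio et al.). Both closures are valid; yours is marginally more economical since lower semicontinuity is off-the-shelf, while the paper's gives the slightly stronger conclusion that $\pushf{g}{\mu_{\epsilon_i}'}$ actually converges to $\pushf{g}{\mu_o}$ in $\wasstwoop$.

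The genuine shortfall is coverage: the statement you were asked to prove is a nine-part lemma, and your proposal addresses only part 9. Parts 1--8 --- the sup-inf lower bound, the conjugation $r'=g\circ r\circ g^{-1}$, the identity $\norm{I-r}_{L^\infty(K)}=\norm{I-r^{-1}}_{L^\infty(r(K))}$, the Lipschitz contraction bound for $B_{K,X}(g\circ f,g\circ h)$, the comparison-map upper bound, monotonicity under precomposition, and the two triangle-type inequalities (points 7 and 8) --- are each used repeatedly in the paper's other proofs (notably Lemma \ref{lem:comp-mep}, Theorem \ref{thm:univ-qual}, and Corollary \ref{cor:decoupling}), and the paper proves them by short direct manipulations of Definition \ref{def:b-k}. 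As submitted, your proof of the full lemma is therefore incomplete, even though the part you did prove is handled correctly and by the same method as the paper.
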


\begin{proof}
    \begin{enumerate}
        \item Let $r \in C(f(K),g(W))$, then 
        \begin{align*}
            \norm{I - r}_{L^{\infty}(f(K))} &= \sup_{x_n \in K}\norm{(I - r)f(x_n)}_{2} = \sup_{x_n \in K}\norm{f(x_n) - r\circ f(x_n)}_2 \\
            &= \sup_{x_n \in K}\norm{f(x_n) - g(x_o)}_2 \text{ where } x_o = g^{-1}\circ r\circ f(x_n)\\
            &\geq \sup_{x_n \in K}\inf_{x_o \in W}\norm{f(x_n) - g(x_o)}_2. 
        \end{align*}
        \item $g$ is injective on $X$, hence we can define $r'$ such that $r' = g\circ r \circ g^{-1}: g(X)\to g(r(X))\subset g(Y)$ such that $r'\in \emb(g(X),g(Y))$, and thus $\forall x \in X$, 
        \begin{align}
            \norm{\paren{I - r'}\circ g(x)}_2 = \norm{g(x) - g\circ r (x)}_2 \leq \lip(g) \norm{I - r}_{L^{\infty}(X)}
        \end{align}
        where we have used $\norm{r(x) - x}_2 \leq \norm{I - r}_{L^{\infty}(X)}$.
        
        \item For every $x \in r(K)$, we have a $y \in K$ such that $x = r(y)$, thus $\forall x \in r (K)$,
        \begin{align}
            \norm{\paren{I - r^{-1}}(x)}_{2} = \norm{\paren{r - I}(y)}_{2}.
        \end{align}
        But $r$ is clearly surjective onto it's range, hence taking the supremum over all $x \in X$ yields 
        \begin{align}
            \norm{I - r^{-1}}_{L^{\infty}(r(K))} = \norm{I - r}_{L^{\infty}(K)}
        \end{align}
        
        \item  As $g\in \emb(W,\R^m)$, the map $g:W\to g(W)$ is a homeomorphism
              and there is $g^{-1}\in \emb(g(W), W)$. 
              For a map $r\in \emb(g\circ f(K),g\circ h(X))$, we see
              that $\hat r=g^{-1}\circ r\circ g\in  \emb( f(K),h(X))$. Also, the opposite is valid as 
              if $\hat r\in  \emb( f(K),h(X))$ then $r=g\circ \hat r\circ g^{-1}\in \emb(g\circ f(K),g\circ h(X))$.
              Thus
             \begin{align*}
                B_{K,X}(g\circ f,g\circ h) &=\inf_{r\in \emb(g\circ f(K),g\circ h(X))}    \| I - r\|_{L^\infty(g\circ f(K))}\\
                 &=\inf_{r=g\circ \hat r\circ g^{-1} \in \emb(g\circ f(K),g\circ h(X))}    \| I- g\circ \hat r\circ g^{-1} \|_{L^\infty(g\circ f(K))}\\
                 &=\inf_{\hat r\in  \emb( f(K),h(X))}    \|  g\circ (I-\hat r)\circ g^{-1} \|_{L^\infty(g\circ f(K))}
\\
                 &\le \lip(g) \inf_{\hat r\in  \emb( f(K),h(X))}    \| (I-\hat r)\circ g^{-1} \|_{L^\infty(g\circ f(K))}
                 \\
                 &\le \lip(g) \inf_{\hat r\in  \emb( f(K),h(X))}    \| I-\hat r \|_{L^\infty(f(K))}
\\
                 &\le \lip(g)  \, B_{K,X}(f,h)
              \end{align*}
        \item If we let $r \coloneqq g\circ h \circ f^{-1}$, then $r \in \emb(f(K),g(W))$, and 
        \begin{align}
            B_{K,W}(f,g) &\leq \norm{\norm{\paren{I - r}\circ f(x)}_2}_{L^\infty(K)} \\
            &= \norm{\norm{f(x) -  g\circ h \circ f^{-1}\circ f(x)}_2}_{L^\infty(K)} \leq \sup_{x \in K} \norm{f(x) - g \circ h(x)}_2.
        \end{align}
        \item Given that $g\circ h(X) \subset g(W)$, we have that $\emb(f(K),g\circ h(X)) \subset \emb(f(K),g(W))$, thus the infimum in Eqn. \ref{eqn:b-k-def} is taken over a smaller set, thus $B_{K,W}(f,g) \leq B_{K,X}(f,g\circ h)$.
        \item Note that for any $r' \in \emb(r\circ f(K),g(W))$, $r' \circ r \in \emb(f(K),g(W))$, and so we have
        \begin{align}
            B_{K,W}(f,g) \leq \norm{I - r' \circ r}_{L^{\infty}(f(K))} &\leq \norm{I - r}_{L^{\infty}(f(K))} + \norm{r - r'\circ r}_{L^{\infty}(f(K))} \\
            &= \norm{I - r}_{L^{\infty}(f(K))} + \norm{I - r'}_{L^{\infty}(r\circ f(K))}
        \end{align}
        where we have used that $r$ is injective for the final equality. This holds for all possible $r'$, hence we have the result.
        \item 
        Recall that $f\in \emb(K,W)$, $g\in \emb(W,\R^m)$,
$h \in \emb(X,W)$ and  $r\in \emb(f(K),g(W))$. Then 
$g^{-1}\in \emb(g(W), W)$. As $ r\circ  f(K)\subset g(W)$, we see that
      \begin{align*}
          r\circ f=g\circ g^{-1}     \circ r \circ f.   
           \end{align*}
              Thus Lemma \ref{lem:b-k-helper} points 4 and 8 yield that
                    \begin{align*}
            B_{K,X}(f,g\circ h) &\leq \norm{I - r}_{L^{\infty}(f(K))} + B_{K,X}( r \circ f, g\circ h)\\
            &\leq \norm{I - r}_{L^{\infty}(f(K))} + B_{K,X}( g\circ g^{-1}     \circ r \circ f, g\circ h)\\
     &\leq \norm{I - r}_{L^{\infty}(f(K))} + \lip(g)\, B_{K,X}(g^{-1}     \circ r \circ f, h),
        \end{align*}
              which proves the claim.

        \item Let $r_\epsilon\in \emb(f(K),g(W))$ be such that $\norm{I - r_\epsilon}_{L^{\infty}\paren{\range(f)}} \leq B_{K,W}(f,g) + \epsilon$, then for every $x \in K$, there exists $y \in W$ such that $g(y) = r \circ f(x)$. From injectivity of $g$, we have that $y = g^{-1}\circ r_\epsilon \circ f(x)$. Note that $g^{-1}\circ r_\epsilon \circ f \in \emb(K,W)$, hence $K' \coloneqq g^{-1}\circ r_\epsilon \circ f(K) \subset W$ is compact. Define $\mu'_\epsilon \in \cP(K')$ where $\mu'_\epsilon \coloneqq \pushf{(g^{-1}\circ r_\epsilon \circ f)}{\mu}$. Clearly $\pushf{g}{\mu'_\epsilon} = \pushf{r_\epsilon \circ f}{\mu}$, and thus
            \begin{align}
                \wasstwo{\pushf{f}{\mu}}{\pushf{g}{\mu'_\epsilon}} = 
                \wasstwo{\pushf{ f}{\mu}}{\pushf{r_\epsilon\circ f}{\mu}}
            \end{align}
            and so
            \begin{align}
                \wasstwo{\pushf{f}{\mu}}{\pushf{g}{\mu'_\epsilon}} \leq \paren{\int_{K}\norm{I - r_\epsilon}_2^2 d\pushf{f}{\mu}}^{1/2} \leq B_{K,W}(f,g) + \epsilon.
            \end{align}
            As the set $W$ is compact, by Prokhoros's theorem, see \citep[Theorem 5.1]{Billingsley}, the set of probability measures $P(W)$ is a compact set in the topology of weak convergence. Thus there is a sequence $\epsilon_i\to 0$ such that the measures $\mu'_{\epsilon_i}$ converge weakly to a probability measure $\mu_o$. As $g:W\to K$ is a continuous function, the push-forward operation $\mu\to g_\# \mu$ is continuous $g_\# :P(W)\to P(K)$ and thus $ g_\#\mu'_{\epsilon_i} $ converge weakly to $ g_\#\mu_o$. Finally, as $g_\#\mu'_{\epsilon_i} $  are supported in a compact set $K$, their second moments converge to those of $g_\#\mu_o$ as $i\to \infty$. By \citep{Ambrosio1}, Theorem 2.7, see also Remark 28, the weak convergence and the convergence of the second moments imply the convergence in the  Wasserstein-2 metric.  Hence,  $ g_\#\mu'_{\epsilon_i} $ converge to $ g_\#\mu_o$ in Wasserstein-2 metric   and we see that
            \begin{align}
            \wasstwo{\pushf{f}{\mu}}{\pushf{g}{\mu_o}}  \leq B_{K,W}(f,g).
            \end{align}
    \end{enumerate}
\end{proof}




\subsection{Manifold Embedding Property}

\subsubsection{The Proof of Lemma \ref{lem:comp-mep}}
\label{sec:lem:comp-mep}

\begin{proof}[The proof of Lemma \ref{lem:comp-mep}]
    Let $f = F_2 \circ F_1$ where $F_2 \in \cF^{o,m}$ and $F_1 \in \cF^{n,o}$ and $\epsilon > 0$ be given, and let $E^{o,m}$. Clearly, $B_{K,W}(f,E) \leq B_{K,W}(F_2,E)$ and so by the $m,o,o$ MEP of $\cE^{o,m}$ with respect to $\cF^{o,m}$, we have the existence of an $r_m \in \emb(f(K),E^{o,m})$ such that $\norm{I - r}_{L^\infty(f(K))} < \epsilon$. $K_o \coloneqq \paren{E^{o,m}}^{-1}\circ r \circ f(K)$ is compact, hence $E^{o,m}$ is Lipschitz on $K_o$, so we can apply Lemma \ref{lem:b-k-helper} point 8, so 
    \begin{align}
        \label{eqn:comp-mep:1}
        B_{K,W}(f,E^{o,m} \circ E^{p,o}) \leq \norm{I - r}_{L^{\infty}(f(K))} + \lip(E^{o,m}) B_{K,W}(\paren{E^{o,m}}^{-1}\circ r \circ f, E^{p,o}).
    \end{align}
    But, because $f \in \cF^{o,m}\circ\cF^{n,o}$, we can choose a $E^{p,o}\in \cE_1^{p,o}$ so that $B_{K,W}(\paren{E^{o,m}}^{-1}\circ r \circ f, E^{p,o})\leq \frac{\epsilon}{2}\lip(E^{o,m})$ which, combined with Eqn. \ref{eqn:comp-mep:1}, proves the result. 
\end{proof}

\subsubsection{The Proof of Lemma \ref{lem:comp-mep-requires-mep}}
\label{sec:lem:comp-mep-requires-mep}

\begin{proof}[The proof of Lemma \ref{lem:comp-mep-requires-mep}]
Recall that $ \cF\subset \emb(\Rea^n,\Rea^m)$. Suppose that $\cE^{o,m}_2$ does not have the $m,n,o$ MEP with respect to $ \cF$, then there are some $\epsilon > 0$ and $f \in \cF$ 
so that
    \begin{align}
        \forall E^{o,m} \in \cE^{o,m}_2\
          \forall W_1\subset\subset \R^o
        , \quad B_{K,W_1}(f,E^{o,m}_2) \geq \epsilon.
    \end{align}
    From Lemma \ref{lem:b-k-helper} point 6, we have that
    \begin{align}\label{set W1W}
        \epsilon \leq B_{K,W_1}(f,E^{o,m}_2) \leq B_{K,W}(f,E^{o,m}_2\circ E^{p,o}_1)
    \end{align}
    for all $E^{p,o}_1 \in \cE^{p,o}_1$
    and for all compact sets $W \subset \R^p$ that satisfy $ E^{p,o}_1(W_1)\subset W$.
    We observe that if  $W'\subset \R^p$ is a compact set
    such that $W'\subset W$, we have
    $$
    B_{K,W}(f,E^{o,m}_2\circ E^{p,o}_1)
    \leq B_{K,W'}(f,E^{o,m}_2\circ E^{p,o}_1)
    $$
    Thus, inequality \eqref{set W1W} holds for
    all $E^{p,o}_1 \in \cE^{p,o}_1$
    and for all compact sets $W \subset \R^p$.
    Summarising, we have seen that there are $f\in \cF$ and $\epsilon>0$ such that 
    for all $E^{p,o}_1 \in \cE^{p,o}_1$ and
    for all 
    compact sets $W\subset \R^p$ we have $\epsilon \leq B_{K,W}(f,E^{o,m}_2\circ E^{p,o}_1)$. Hence
$\cE^{o,m}_2$ does not have the $m,n,o$ MEP with respect to $ \cF$, and   
     we have obtained a contradiction, which proves the result.
\end{proof}

\subsection{Topological Obstructions to Manifold Learning with Neural Networks}

\subsubsection{$S^1$ can not be Mapped Extendably to the Trefoil Knot}\label{sec:mapping-s-1-to-trefoil}

We first show that there are no maps $E \coloneqq T \circ R$ where $R \colon \Rea^2 \to \Rea^3$ such that $T$ is a homeomorphism and $E(S^1)$ is a trefoil knot. We use the fact that the trivial knot $S^1$ and the trefoil knot $\cM=f(S^1)$ are not equivalent, that is, there are no homeomorphisms in $\Rea^3$ that map $S^1$ to $\cM$. Indeed, by \citep[Section 3.2]{Murasugi}, the trefoil knot $\cM$ and its mirror image are not equivalent, whereas the trivial knot $S^1$ and its mirror image are equivalent. Hence, $\cM$ and $R(S^1)$ are not equivalent knots in $\R^3$. Thus by \citep[Definition\ 1.3.1 and Theorem 1.3.1]{Murasugi}, we see that there is no orientation preserving homeomorphism $T:\R^3\to \R^3$ such that $T(\R^3\setminus R(S^1))=\R^3\setminus \cM.$ As the orientation of the map $T$ can be changed by composing $T$ with the reflection $J:\R^3\to \R^3$ across the plane $\range(R)$ that defines a homeomorphism $J:\R^3\setminus R(S^1)\to \R^3\setminus R(S^1)$, we see that there is no homeomorphism $T:\R^3\to \R^3$ such that $T(\R^3\setminus R(S^1))=\R^3\setminus \cM.$ 

This example shows that the composition $E=T\circ R$ of a linear map $R$ and a coupling flow $T$ cannot have the property that $E(S^1)=f(S^1)$ for this embedding $f$. Moreover, the complement $\R^3\setminus E(S^1)$ is never homeomorphic to  $\R^3\setminus f(S^1)$ for any such map $E$.

{ We now construct another example, similar to Figure \ref{fig:knot}, where an annulus that is mapped to a knotted ribbon
in $\R^3$. To do this, replace the circle $S^1$ by an annulus
$K=\{x\in \R^2:\ 1/2\le |x| \le 3/2\}$, that in the polar coordinates is $\{(r,\theta):\ 1/2\le r \le 3/2\}$ and define 
a map $F:K\to \R^3$ by defining in the polar coordinates  
$$
F(r,\theta)=f(\theta)+a (r-1)v(\theta)
$$
where $f:S^1\to \Sigma_1\subset \R^3$ is an smooth embedding of $S^1$ to a trefoil knot $\Sigma_1$ and 
$v(\theta)\in \R^3$ is a unit vector
normal to $\Sigma_1$ at the point $f(\theta)$ such that $v(\theta)$ is a smooth function of $\theta$,
and $a>0$  is a small number. In this case, $M_1=F(K)$ is a 2-dimensional submanifold of $\R^3$
with boundary, which can visualizes $M_1$ as a knotted ribbon. 

We now show that there are no maps $E = T\circ R$ such that $E(K) = F(K)$ where $T \colon \Rea^3 \to \Rea^3$ is an embedding, and $R \colon \Rea^2 \to \Rea^3$ injective and linear. The key insight is that if such a $T$ existed, then this implies that the trefoil knot is equivalent to $S^1$ in $\Rea^3$, which is known to be false. 

Let $U_\rho(A)$ denote the $\rho$-neighborhood
of the set $A$ in $\R^3$. It is easy to see that $\R^2\setminus ( \{0\}\times [-1,1])$ is homeomorphic to 
$\R^2\setminus \overline B_{\R^2}(0,1)$, which is further  homeomorphic to $\R^2\setminus\{0\}$.
Thus, using tubular coordinates near $\Sigma_1$ and a sufficiently small $\rho>0$, we see that $\R^3\setminus M_1$  is homeomorphic to $\R^3\setminus U_\rho(\Sigma_1)$, which is further  homeomorphic to $\R^3\setminus \Sigma_1$.
Also, when $R:\R^2\to \R^3$ is an injective linear map, we see that $M_2=R(K)$  is a un-knotted band in $\R^3$
and $\R^3\setminus M_2$   is  homeomorphic to $\R^3\setminus \Sigma_2$.
If $\R^3\setminus M_1$  and $\R^3\setminus M_2$ would be homeomorphic, 
then also $\R^3\setminus \Sigma_1$ and $\R^3\setminus \Sigma_2$ would be homeomorphic
that is not possible by knot theory, see \citep[Definition\ 1.3.1 and Theorem 1.3.1]{Murasugi}. This shows that there are no
injective linear maps $R:\R^2\to \R^3$ and  homeomorphisms  $\Phi:\R^3\to \R^3$
such that $(\Phi\circ R)(K)=M_1$.

Similar examples can be obtained in a higher dimensional case by
using a knotted torus \citep{Sequin}\footnote{On the knotted torus, see \url{http://gallery.bridgesmathart.org/exhibitions/2011-bridges-conference/sequin}.} and their Cartesian products.}

{
\subsubsection{Linear Homeomorphism Composition}
\label{sec:linear-homeomorphism-composition}
In this subsection we prove that the topological obstructions to universality presented in Section \ref{sec:top-obstructions} still apply when the expansive elements are allowed to be $\hom(\R^3,\R^3) \circ \R^{3\times 2}$. This fact follows from the observation that $\hom(\R^3,\R^3) \circ \hom(\R^3,\R^3) = \hom(\R^3,\R^3)$, which yields that $\hat \cE = \cE$. 

}

\subsubsection{The Proof of Theorem \ref{thm:emb-and-extens-coincide}}
\label{sec:thm:emb-and-extens-coincide}

{

Given an $f \in {\emb}^{k}(K,\R^m)$, for $k \geq 1$, we first show that for $m \geq 2n+1$ there is always a diffeomorphism $\Psi \colon \Rea^m \to \Rea^m$ so that $\Psi\circ f \colon \R^n \to \set{0}^{n}\times \Rea^{m-n}$. The existence of such a $\Psi$ borrows ideas from Whitney's embedding theorem \citep[Theorems 3.4 \& 3.5]{hirsch2012differential} and is constructed by iteratively constructing an injective projection.

Next if $m-n \geq 2n+1$, then we can apply \citep[Lemma 7.6]{madsen1997calculus}, a result analogous to the Tietze extension theorem, to show that $\Psi\colon \cM \to \set{0}^{n}\times \R^{m-n}$ can be extended to a diffeomorphism on the entire space, $h \colon \Rea^m \to \Rea^m$. Hence $f(x) = \Psi^{-1}\circ h \circ R(x)$ for diffeomorphism $\Psi^{-1}\circ h \colon \Rea^m \to \Rea^m$ and zero-padding operator $R \colon \Rea^n\to \Rea^m$, and thus $f \in \cI^k(K,\R^m)$.
This fact that for $m$ sufficiently large compared to $n$ such a diffeomorphism can always be extended is related to the fact that in 4-dimensions, all knots can be opened. This can be contrasted with the case in Figure \ref{fig:knot}.

We now present our proof.
}

\begin{proof}
 {   Let us next prove \eqref{I-formula B} when $m\ge 3n+1$. Let
      \begin{align}\label{f in I-formula}
            f\in {\emb}^k(\R^n,\R^m)
        \end{align}
    be a $C^k$ map  and $\cM=f(\R^n)$ be an embedded submanifold of $\Rea^m$.
    
  {We have that $m \geq 3n+1 > 2n+1$}. Let $S^{m-1}$ be the unit sphere of $\R^m$ and let $$S\R^m=\{(x,v)\in \R^m\times \R^m:\ \|v\|=1\}$$ be the sphere bundle of $\R^m$ that is a manifold of dimension $2m-1$. By the proof's of Whitney's embedding theorem, by Hirsch,  \citep[Chapter 1, Theorems 3.4 and 3.5]{hirsch2012differential}, there is a set {of `problem points'} $H_1\subset S^{m-1}$ of Hausdorff dimension $2n$ such that for all $w\in \R^m\setminus H_1$ the orthogonal projection $$P_w:\R^m\to \{w\}^\perp= \{y\in \R^m:\ y\perp w\}$$ has a restriction $P_w|_\cM$ on $\cM$ defines an injective map $$P_w|_\cM:\cM\to   \{w\}^\perp.$$ Moreover, let 
  {$T_x\cM$} be the tangent space of manifold $\cM$ at the point $x$ and let {us define another set of `problem points' as}
    $$
    H_2=\{ v\in S^{m-1}:\ \exists x\in \cM, v\in T_x{\cM}\}.
    $$
    For $w\in S^{m-1}\setminus H_2$  the map $$P_w|_\cM:\cM\to   \{w\}^\perp\subset \R^m$$ is an immersion, that is, it has an injective differential. The sphere tangent bundle $S\cM$  of $\cM$  has dimension $2n-1$, and the set $H_2$ has the Hausdorff dimension at most $2n-1$. Thus $H=H_1\cup H_2$ has Hausdorff dimension at most $2n<m-1$ and hence the set $S^{m-1}\setminus H$ is non-empty. For $w\in  S^{m-1}\setminus H$    the map $P_w|_\cM:\cM\to   \{w\}^\perp$ is a $C^k$ injective immersion and thus $$\tilde N=P_w(\cM)\subset \{w\}^\perp$$ is a $C^k$ submanifold.
    
    Let $Z:P_w(\cM)\to \cM$ be the $C^k$ function defined by $$Z(y)\in \cM,\quad P_w(Z(y))=y,$$ that is it is the inverse of $P_w|_\cM:\cM\to  P_w(\cM),$ where $   P_w(\cM{)}\subset \{w\}^\perp$. Let $g:\tilde N=P_w(\cM)\to \R$ be the function $$g(y)=(Z(y)-y)\cdot w,\quad y\in P_w(\cM).$$ 
    Then $\tilde N$ is a $n$-dimensional $C^k$ submanifold of $(m-1)$-dimensional Euclidean space $H= \{w\}^\perp$ and $g$ is a $C^k$ function defined on it.
By definition of a $C^k$ submanifold of  $H$, any point $x\in \tilde N$ has a neighborhood
$U\subset H$ with local $C^k$ coordinates $\psi:U\to \R^m$ such that $\psi(\tilde N\cap U)
=(\{0\}^{m-1-n}\times \R^n)\cap\psi(U)$. Using these coordinates, we see that $g$ can be extended 
to a  $C^k$ function in $U$. Using a suitable partition of unity, we see that  
     there is a $C^k$ map $G: \{w\}^\perp\to \R$ that a $C^k$ extension 
      of $g$ that is, $G|_{\tilde N}=g$. 
      
    Then the map
    $$\Phi_1:\R^m\to \R^m,\quad \Phi_1(x)=x-G(P_w(x))w$$  
    is a $C^k$ diffeomorphism of $\R^m$ that maps $\cM$ to $m-1$ dimensional space $ \{w\}^\perp$, that is $$\Phi_1(\cM)\subset \{w\}^\perp.$$ In the case when  $m\ge 3n+1$, we can repeat this construction $n$ times. This is possible as $m-n\ge 2n+1$. Then we obtain $C^k$ diffeomorphisms $\Phi_j:\R^m\to \R^m$, $j=1,\dots,n$  such that their composition $\Phi_n\circ\dots\circ \Phi_1:\R^m\to \R^m$ is a $C^k$-diffeomorphism such that which $$\cM'=\Phi_n\circ\dots\circ \Phi_1(\cM)\subset Y',$$ where $Y'\subset \R^m$ is a $m-n$ dimensional linear space. {By letting $\Psi = Q\circ \Phi_n\circ\dots\circ \Phi_1$ for rotation matrix $Q \in \Rea^{m\times m}$, we have} that $Y\coloneqq Q(Y') =\{0\}^n\times \R^{m-n}$. Also, let $X=\R^n\times \{0\}^{m-n},$ $A=Q(\cM')\subset X$ and $\phi: X\to \R^m$ be the map
    $$
    \phi(x,0)=\Psi(f(x))\in Y,
    $$
    where $f$ is the function given in \eqref{f in I-formula}
    and $B=\Psi(f(A))\subset Y$. Then $A$ is a $C^k$-submanifold $X$, $B$ is a $C^k$-submanifold $Y$ and $\phi:A\to B$ is a $C^k$-diffeomorphism. We observe that  $m-n\ge 2n+1$ and so we can apply \citep[Lemma 7.6]{madsen1997calculus} to extend $\phi$ to  a $C^k$-diffeomorphism  $$h:\R^{m}\to \R^m$$ such that $h|_A=\phi$. Note that {\citep[Lemma 7.6]{madsen1997calculus}} concerns {an} extension of a homeomorphism, but as the extension $h$ is given by an explicit formula {which is locally a finite sum of $C^k$ functions}, the same proof gives a $C^k$-diffeomorphic extension $h$ {to} a  diffeomorphism $\phi$. Indeed, let  $A'\subset \R^{n}$ and  $B'\subset \R^{m-n}$ be such sets that $A=A'\times \{0\}^{m-n}$,
    and $B=\{0\}^n\times B'$. Moreover,
    let
    $\tilde \phi:A'\to \R^{n-m}$ and 
    $\tilde \psi:B'\to \R^n$ be such $C^k$-smooth maps that
    $\phi(x,0)=(0,\tilde \phi(x))$ for $(x,0)\in A$
     and $\phi^{-1}(0,y)=(\tilde \psi(y))$ for $(0,y)\in B$.
    As $A'$ and $B'$ are $C^k$-submanifolds, the map $\tilde \phi$ has a $C^k$-smooth extension 
    $f_1:\R^{n}\to \R^{n-m}$ and the map $\tilde \psi$ has a $C^k$-smooth extension 
    $f_2:\R^{n-m}\to \R^{n}$, that is, $f_1|_{A'}=\tilde \phi$ and 
    $f_2|_{B'}=\tilde \psi$. Following  {\citep[Lemma 7.6]{madsen1997calculus}}, we define the maps
    $h_1:\R^n\times \R^{m-n}\to \R^n\times \R^{m-n}$,
    $$
    h_1(x,y)=(x,y+f_1(x))
    $$
    and $h_2:\R^n\times \R^{m-n}\to \R^n\times \R^{m-n}$,
    $$
    h_2(x,y)=(x+f_2(y),y).
    $$
    Observe that $h_2$ has the inverse map $h_2^{-1}(x,y)=(x-f_2(y),y)$. Then the map $$h=h_2^{-1}\circ h_1:\R^n\times \R^{m-n}\to \R^n\times \R^{m-n}$$ is a
    $C^k$-diffeomorphism that satisfies $h|_A=\phi.$ 
        This technique is called the `clean trick'. 
    
   Finally, to obtain the claim, we observe that when $R:\R^n\to \R^m$, $R(x)=(x,0)\in \{0\}^n\times \R^{m-n}$ is the zero padding operator, we have
    $$
    f(x)=\Psi^{-1}(\phi(R(x))),\quad x\in \R^n.
    $$
    As $h|_X=\phi$ and $R(x)\in X$, this yields
    $$
    f(x)=\Psi^{-1}(h(R(x))),\quad x\in \R^n,
    $$
    that is, $$f=E\circ R$$ where $E=\Psi^{-1}\circ h:\R^m\to \R^m$ is a $C^k$ diffeomorphism. Thus $f\in {\mathcal I}^k(\R^n,\R^m)$. This proves  \eqref{I-formula B} when $m\ge 3n+1$.}
\end{proof}

\subsection{Universality}
\label{sec:proofs:universality-results}

\subsubsection{The Proof of Lemma \ref{lem:mep-implied-from-univ}}
\label{sec:lem:mep-implied-from-univ}

\begin{proof}[The proof of Lemma \ref{lem:mep-implied-from-univ}]
    \begin{itemize}
        \item[(i)] 
        Let us consider $\epsilon>0$, a compact set $K\subset \R^n$ and  $f \in  \emb(\R^n,\R^m)$. Let  $W=K\times \{0\}^{o-n}$ and $F:\R^o\to \R^m$ be the map given by $F(x,y)=f(x)$, $(x,y)\in \R^{n}\times \R^{o-n}$. Because $\cR^{o,m}\subset \emb(\R^o,\R^m)$ is a uniform universal approximator of $C(\Rea^n,\Rea^m)$, there is an $R \in \cR^{o,m}$ such that $\norm{F - R}_{L^\infty(W)} < \epsilon$. Then for the map $E = I\circ R$ we have that $ B_{K,W}(f,E) < \epsilon$. This is true for every $\epsilon > 0$, and so $\cE^{o,m}$ has the MEP property w.r.t. the family $ \emb(\R^n,\R^m)$.
        \item[(ii)] Recall that $f \coloneqq \Phi_0 \circ R_0$ for $\Phi_0 \in \hbox{Diff}^1(\Rea^m,\Rea^m)$ and linear $R_0 \colon \Rea^n \to \Rea^m$, and that $R \in \cR$ is such that $\restr{R}{\overline U}$ is linear for open $U$. We present the proof in the case when $n = o$, and we make the assumption that $\restr{R}{K}$ is linear.  In this case, we have the existence of an affine map $A \colon \Rea^m \to \Rea^m$ so that $R_0=A\circ R$ so that 
        $\tilde K \coloneqq R_0(K) = A(R(K))$. Let $\epsilon > 0$ be given. By \citep[Chapter 2, Theorem 2.7]{hirsch2012differential},  the space $\hbox{Diff}^2(\R^m,\R^m)$ is dense in the space $\hbox{Diff}^1(\R^m,\R^m)$, and so there is some $\Phi_1 \in \hbox{Diff}^2(\Rea^m, \Rea^m)$ such that 
        $$
           \| \Phi_1|_{\tilde K}-\Phi_0|_{\tilde K}\|_{L^\infty({\tilde K};\R^m)}<\frac\epsilon2.
        $$
        Then, let $T \in \cT^m$ be such that $ \|T - \Phi_1\circ A\|_{L^\infty({R(K)};\R^m)}<\frac{\epsilon}{2}$. Then we have that
        \begin{align*}
            \norm{T\circ R - f}_{L^\infty(K)} &= \norm{T\circ R - \Phi_0 \circ R_0}_{L^\infty(K)}\\
            &\leq \norm{T\circ R - \Phi_1 \circ A \circ R}_{L^\infty(K)} + \norm{ \Phi_1 \circ A \circ R - \Phi_0 \circ R_0}_{L^\infty(K)}\\
            &\leq \norm{T - \Phi_1 \circ A }_{L^\infty(R(K))} + \norm{\Phi_1 \circ A \circ R - \Phi_0 \circ A \circ R}_{L^\infty(K)} \\
            &< \frac\epsilon2 + \frac\epsilon2 = \epsilon.
        \end{align*}
        Hence, if we let $r = T\circ R \circ f^{-1} \in \emb(f(K),T\circ R(K))$ then we obtain that $B_{K,K}(f,T\circ R) < \epsilon$. This holds for any $\epsilon$, and hence we have that $\cT\circ \cR$ has the MEP for $\cI(\Rea^n,\Rea^m)$.
        
        The proof in the case that $o \geq n$ follows with minor modification, and applying Lemma \ref{lem:b-k-helper} point 5.
\end{itemize}
\end{proof}

\subsubsection{The Proof of Example \ref{examp:lots-of-archs-have-mep}}
\label{sec:examp:lots-of-archs-have-mep}
\begin{proof}
    \begin{itemize}
        \item[(i)] From \citep[Theorem 15]{puthawala2020globally} we have that $\cR^{o,m}$ can approximate any continuous function $f \in \emb(\R^n,\R^m)$. Further, clearly (T1) and (T2) both contain the identity map, thus Lemma \ref{lem:mep-implied-from-univ} (i) applies.
        \item[(ii)]  Let $\cT^{m}$ be the family autoregressive flows with sigmoidal activations defined in \citep{huang2018neural}. By  \citep[App.\ G,\ Theorem 1 and Proposition 7]{teshima2020coupling}, $\cT^{m}$ are $\sup$-universal approximators in the space $\hbox{Diff}^2(\R^m,\R^m)$ of $C^2$-smooth diffeomorphisms $\Phi:\R^m\to \R^m$. When $\cR^{o,m}$ is one of (R1) or (R2) the network is always linear, hence the conditions are satisfied. If $\cR^{o,m}$ is (R4), then $\cR^{o,m}$ contains linear mappings, and if (R3), then we can shift the origin, so that $R(x)$ is linear on $K$. In all cases, Lemma \ref{lem:mep-implied-from-univ} part (ii) applies.
        
    \end{itemize}
\end{proof}

\subsubsection{The Proof of Theorem \ref{thm:univ-qual}}
\label{sec:proofs:thm:univ-qual}

\begin{proof}[The proof of Theorem \ref{thm:univ-qual}]

{ First we prove the claim under the assumptions (i).}

{ First we prove the claim under assumption (i).

Let $W\subset \R^n$ be an open relatively compact set.
 From Lemma \ref{lem:comp-mep} we have that 
    \begin{align}
        \cE^{n,m} \coloneqq{\cE_L^{n_{L-1}, m}\circ \dots \circ \cE_1^{n, n_{1}}}    \end{align}
    has the $m,n,n$ MEP w.r.t. {$\cF \coloneqq  \cF_{L}^{n_{L-1}, m}\circ\dots\circ\cF_{1}^{n, n_{1}}$}. Thus for any ${\epsilon_1} > 0$, we have an ${\tilde E} \in \cE^{n,m}$ s.t. $B_{K,W}(f,{\tilde E}) < \epsilon_1$. 
    
    From Lemma \ref{lem:b-k-helper} point 9, we have the existence of a $\mu' \in \cP(W)$ so that $        \wasstwo{\pushf{f}{\mu}}{\pushf{{\tilde E}}{\mu'}} <{\epsilon_1}$. 
    By convolving $\mu'$ with a suitable mollifier $\phi$, we can obtain a measure $\mu''=\mu'*\phi \in \cP(W)$ that is absolutely continuous with respect to the Lebesgue measure so that 
       $$\wasstwo{\mu'}{\mu''} <\frac{\epsilon_1}{1+\lip(\tilde E)},$$ see
       \citep[Lemma 7.1.10.]{ambrosio2008gradient}, and so 
        $\wasstwo{\pushf{{\tilde E}}{\mu'}}{\pushf{\tilde E}{\mu''}} <{\epsilon_1}$. Hence,
    \begin{align}
        \wasstwo{\pushf{f}{\mu}}{\pushf{{\tilde E}}{\mu''}} < 2\epsilon_1.
    \end{align}
    Next, from universality of $\cT^n_0$ for any $\epsilon_2 > 0$, we have the existence of a $T_0 \in \cT^n_0$ so that $\wasstwo{\mu''}{\pushf{T_0}{\mu}} < \epsilon_2$. From Lemma \ref{lem:b-k-helper} points 7 and 8 we have that 
    \begin{align}
        \wasstwo{\pushf{f}{\mu}}{\pushf{{\tilde E}\circ T_0}{\mu}} \leq 2\epsilon_1 + \epsilon_2 \lip({\tilde E}).
    \end{align}
    For a given $\epsilon > 0$, choosing $\epsilon_1 < \frac\epsilon{{4}}$ and $\epsilon_2 < \frac{\epsilon}{2(1+\lip({\tilde E}))}$ yields that the map ${E}= {\tilde E} \circ T_0  \in {\cE}$  is such that $\wasstwo{\pushf{f}{\mu}}{\pushf{{E}}{\mu}} < \epsilon$. This yields the result.
 
    Next we prove the claim under the assumptions (ii).
    By our assumptions, in the weak topology of the space $C^2(\R^{n_{j}},\R^{n_{j}})$, the closure of the set
    $\cT^{n_{j}}\subset C^2(\R^{n_{j}},\R^{n_{j}})$ 
    contains the space of $\hbox{Diff}^2(\R^{n_{j}},\R^{n_{j}})$.
    Moreover, by our assumptions $\cR^{n_{j-1}, n_{j}}$ contains a linear map $R$. We observe that 
    as $\cR^{n_{j-1}, n_{j}}$ is a space of expansive elements, the map $R$ is injective.
    and hence by Lemma \ref{lem:mep-implied-from-univ},
    the family $$\cE_j^{n_{j-1},n_{j}}=\cT^{n_{j}}\circ \cR^{n_{j-1},n_{j}}$$
    has the MEP w.r.t. $\cF = \cI^1(\Rea^n,\Rea^m)$. By Theorem \ref{thm:emb-and-extens-coincide}, we have that $\cI^1(\Rea^n,\Rea^m)$ coincides with the space  $\emb^1(\Rea^n, \Rea^m)$. Finally, by  the assumption that  $\cT^{n_0}_0$ is dense in the space of $C^2$-diffeomorphism $\hbox{Diff}^2(\R^{n_\ell})$ implies that  $\cT^{n_0}_0$ is a $L^p$-universal approximator for the set of $C^\infty$-smooth triangular maps for all $p<\infty$. Hence by  Lemma 3 in Appendix A of \citep{teshima2020coupling},  $\cT^{n_0}_0$ is a distributionally universal. From these the claim in the case (ii) follows in the same way as the case (i) using the family $\cF= \emb^1(\Rea^n, \Rea^m)$.}
\end{proof}

\subsubsection{The Proof of Lemma \ref{lem:mep-is-nesc-for-universality}}

\label{sec:proofs:mep-is-nesc-for-universality}
\begin{proof}[The proof of Lemma \ref{lem:mep-is-nesc-for-universality}]
    The proof follows from taking the logical negation of the MEP for $\cF$. If the MEP is not satisfied, then there is some $f \in \cF$ so that $B_{K,W}(f,E)$ is never smaller than $\epsilon > 0$ for all $E \in \cE$. Applying the definition of $B_{K,W}(f,E)$ from Eqn. \ref{eqn:b-k-def} yields the result.
\end{proof}

\subsubsection{The Proof of Cor. \ref{cor:decoupling}}
\label{sec:cor:decoupling}

\begin{proof}[The proof of Cor. \ref{cor:decoupling}]
    {The proof of Eqn \ref{eqn:decoupling:later-layers-min:1} follows from the definition of the MEP. }
    
    From Eqn. \ref{eqn:decoupling:later-layers-min:1} for $i = 1,\dots $ we have the existence of a $\epsilon_i \coloneqq B_{K,W}(f,E_i)$, where $\lim_{i \to \infty} \epsilon_i = 0$, and a $r_i \in \emb(f(K),E_i(W))$ such that $\norm{I - r_i}_{L^\infty(f(K))} \leq 2 \epsilon_i$. Applying Lemma \ref{lem:b-k-helper} point 8, we have that for any $E'\in \cE^{n,o}(X,W)$
    \begin{align}
        B_{K,X}(f,E_i\circ E') \leq 2 \epsilon_i + \lip(E_i)B_{K,X}(E^{-1}_i\circ r_i \circ f,E').
    \end{align}
    Because $\cE^{n,o}(X,W)$ has the $o,n,n$ MEP, for each $i = 1,\dots$, we can find a $E'_i \in \cE^{n,o}(X,W)$ such that $B_{K,X}(E^{-1}_i\circ r_i \circ f,E'_i) \leq \frac{1}{1 + \lip(E_i)} \epsilon_i$, and so $B_{K,X}(f,E_i\circ E'_i) \leq 3\epsilon_i$. For this choice of $E_i'$, we have that $\lim_{i \to \infty} B_{K,X}(f,E_i\circ E_i') = 0$. 
    
    From Lemma \ref{lem:b-k-helper} point 9, we have that for any absolutely continuous $\mu \in \cP(K)$, there is a absolutely continuous $\mu' \in \cP(X)$ such that $\wasstwo{\pushf{f}{\mu}}{\pushf{E_i \circ E'_i}{\mu'}} \leq 3\epsilon$. By the universality of $\cT^n$, continuity of $E_i \circ E'_i$, and absolute continuity of $\mu$ and $\mu'$, we have the existence of $T_i \in \cT^n$ so that 
    \begin{align}
        \wasstwo{\pushf{f}{\mu}}{\pushf{E_i \circ E'_i\circ T_i}{\mu}} \leq 4\epsilon_i
    \end{align}
    for each $i = 1,\dots$. This proves the claim.
\end{proof}

\subsubsection{Further Discussion on Matching Topology Exactly vs Approximately}
\label{sec:approximate-vs-exact-matching}

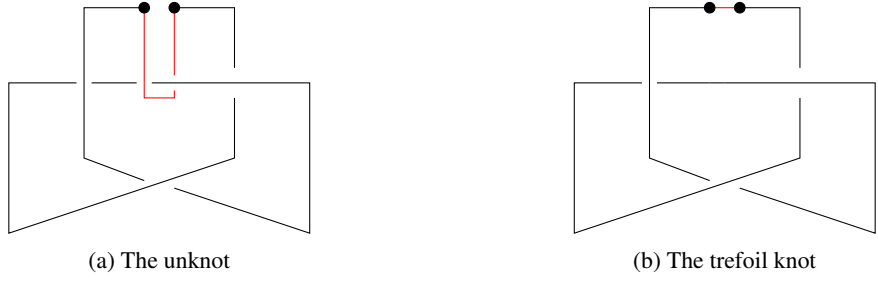
\begin{figure}
    \centering
    \begin{subfigure}{0.45\linewidth}
        \centering
        \begin{tikzpicture}
        \draw[red] (-.2,1)--(-.2,-.2)--(.2,-.2)--(.2,-.1);
        \draw[red] (.2,.1)--(.2,1);
        \filldraw[black] (-.2,1) circle (2pt);
        \filldraw[black]  (.2,1) circle (2pt);
        \draw[black] (.2,1)--(1,1)--(1,.2);
        \draw[black] (1,-.2)--(1,-1)--(-2,-2)--(-2,0)--(-1.1,0);
        \draw[black] (-.9,0)--(-.3,0);
        \draw[black] (-.1,0)--(2,0)--(2,-2)--(.2,-1.4);
        \draw[black] (-.2,-1.3)--(-1,-1)--(-1,1)--(-.2,1);
        \end{tikzpicture}
        \subcaption{The unknot}
        \label{fig:unknot-to-trefoil-knot:unknot}
    \end{subfigure}
    \begin{subfigure}{0.45\linewidth}
        \centering
        \begin{tikzpicture}
        \draw[red] (.2,1)--(-.2,1);
        \filldraw[black] (-.2,1) circle (2pt);
        \filldraw[black]  (.2,1) circle (2pt);
        \draw[black] (0,0)--(-.2,0);
        \draw[black] (.2,1)--(1,1)--(1,.2);
        \draw[black] (1,-.2)--(1,-1)--(-2,-2)--(-2,0)--(-1.1,0);
        \draw[black] (-.9,0)--(-.2,0);
        \draw[black] (0,0)--(2,0)--(2,-2)--(.2,-1.4);
        \draw[black] (-.2,-1.3)--(-1,-1)--(-1,1)--(-.2,1);
        \end{tikzpicture}
        \subcaption{The trefoil knot}
        \label{fig:unknot-to-trefoil-knot:trefoil-knot}
    \end{subfigure}
    \caption{{An example showing how the unknot (left) can be deformed to approximate the trefoil knot (right). The \hlblack{black} part of both knots are identical, and the \hlred{red} section can be made arbitrarily skinny by bringing the \hlblack{black} points together. This can be done while sending the measure of the \hlred{red} sections to zero, if the starting measure have no atoms. In this way, we can construct a sequence of diffeomorphisms $\paren{E_i}_{i=1,\dots}$ so that $\wasstwo{\pushf{E_i}{\mu}}{\nu} \to 0$ where $\mu$ is the uniform measure on $S^1$, and $\nu$ the uniform measure on the trefoil knot. 
    We would like to thank Reviewer 4 for suggesting this discussion and providing the figure (in tikz code!).}}
    \label{fig:unknot-to-trefoil-knot}
\end{figure}

{

In this section we discuss a theoretical gap between the positive approximation results of Theorem \ref{thm:univ-qual} and the negative exact mapping results of Lemma \ref{lem:mep-is-nesc-for-universality}. We show two main results. 

First we construct sequences of maps of the form $\cE = \cT \circ \cR$ that map the uniform measure on $S^1$ to the uniform measure on the trefoil knot. As discussed in Section \ref{sec:top-obstructions}, there are no mappings of this form which map $S^1$ to the trefoil knot exactly, but there are approximate mappings. This shows that there is some overlap between the two results, and extendable mappings may be approximated by non-extendable mappings.

Second we prove that sequences of functions that approximate non-extendable embeddings with extendable ones necessarily have unbounded gradients. This result shows that, when restricted to approximation by sequences with bounded gradients, either Theorem \ref{thm:univ-qual} or Lemma \ref{lem:mep-is-nesc-for-universality} can apply, but never both.


\begin{example}
    There is a sequence of extendable embeddings $\paren{E_i}_{i=1,\dots}$ that map the uniform measure on $S^1$, denoted $\mu$, to the uniform measure on the trefoil knot, denoted $\nu$, so that 
    \begin{align*}
        \lim_{i\to\infty}\wasstwo{\pushf{E_i}{\mu}}{\nu} = 0.
    \end{align*}
\end{example}

\begin{proof}
    The key idea of the construction is shown in Figure \ref{fig:unknot-to-trefoil-knot}. In that figure the unknot is bent so that it overlaps the trefoil knot, outside of an exceptional set (shown in \hlred{red} in Figure \ref{fig:unknot-to-trefoil-knot}) which can be made as small as desired. The result follows by constructing a sequence of functions which `squeeze' this red section as small as possible.

    Let $\mu$ be the uniform probability measure on $S^1\subset \R^2$, and $\nu$ the uniform probability measure on the trefoil knot, $\cM$. Let $R\colon \R^2 \to \R^3$ be a fixed linear map of the form $R(x) = (x,0)$. 
    
    We define a sequence $\paren{X_i}_{i=1,\dots}$ of unknots in the following way. For any choice of two points on the top of the trefoil knot as shown in \hlblack{black} in Figure \ref{fig:unknot-to-trefoil-knot:unknot}, we can replace the straight-line \hlred{red} section with a U-shaped section as shown in Figure \ref{fig:unknot-to-trefoil-knot:unknot} so that the resulting knot is the unknot. We obtain $X_1$ by letting the \hlblack{black} points be a distance 1 apart, $X_2$ by letting them be a distance $\frac12$ apart and so on, so that for $X_i$ the two points are a distance $\frac1i$ apart. Further, for each $X_i$, we define $A_i$ and $B_i$ where $A_i$ is the U-shaped piece of $X_i$ (in \hlred{red}), and $B_i = X_i \setminus A_i$. Observe that $B_i \subset \cM$.
    
    Let $\paren{T_i'}_{i=1,\dots}$ be a family of diffeomorphisms so that $E_i\colon \R^3 \to \R^3$ maps $S^1\times \set{0}$ to $X_i$. Further, let $\paren{T''_i}_{i=1,\dots}$ be such that $T''_i\colon X_i \to X_i$ so that $\chi_{B_i}\paren{T''_i\circ T'_i\circ R}_\#{\mu} = \chi_{B_i}\nu$ when $\chi_{B_i}$ is the characteristic function of the set $B_i$.
    
    Then we define $E_i \coloneqq T''_i\circ T'_i\circ R$ and compute
    \begin{align*}
        \wasstwo{E_i{}_\#{\mu}}{\nu} &\leq \wasstwo{\chi_{A_i}E_i{}_\#{\mu}}{\chi_{\cM \setminus B_i}\nu} + \wasstwo{\chi_{B_i}E_i{}_\#{\mu}}{\chi_{B_i}\nu}\\
        &= \wasstwo{\chi_{A_i}E_i{}_\#{\mu}}{\chi_{\cM \setminus B_i}\nu}.
    \end{align*}
    As $i$ increases, the length of $\cM \setminus B_i$ goes to zero, thus $ \nu(\cM\setminus B_i) = \mu(A_i)$ converges to zero. Hence taking limits yields 
    \begin{align*}
        \lim_{i \to \infty} \wasstwo{E_i{}_\#{\mu}}{\nu} \leq \lim_{i \to \infty} \wasstwo{\chi_{A_i}E_i{}_\#{\mu}}{\chi_{\cM \setminus B_i}\nu} = 0.
    \end{align*}
    Finally, $E_i$ is certainly an extendable embedding, as $R$ is linear, and $T''_i\circ T'_i$ are diffeomorphisms.
\end{proof}

The above proof also applies when $\nu$ or $\mu$ have finitely many atoms. The same construction works if $A_i$ is chosen so that it contains no atoms for sufficiently large $i$.

Next, we show that all function sequence for which implication of Theorem \ref{thm:univ-qual} and conditions of Lemma \ref{lem:mep-is-nesc-for-universality} apply are not uniformly Lipschitz. This implies that if they are differentiable they have unbounded gradients.

\begin{lemma}
    Let $f$ be continuous and $E_i$ be a sequence of continuous functions that are uniformly Lipschitz with constant $L$. Let $E_i$ be such that for all compact $K$ and $W$ subsets of $\R^n$, there is an $\epsilon > 0$, so $\forall i$ and $r \in \emb(f(K), E(W))$, $\norm{I - r}_{L^\infty(K)} \geq \epsilon$. If $\mu$ is the indicator function of $d$, then $\lim_{i \to \infty}\wasstwo{f_\#\mu}{E_i{}_\#\mu} > 0$.
\end{lemma}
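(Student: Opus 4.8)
The plan is to argue by contradiction. Suppose that, along a subsequence, $\wasstwo{\pushf{f}{\mu}}{\pushf{E_i}{\mu}} \to 0$; I will manufacture from this a near-identity embedding of $f(K)$ into $E_i(W)$ and so contradict the hypothesised lower bound $\norm{I - r}_{L^\infty(f(K))} \ge \epsilon$. Take $K$ to be the compact support of $\mu$ and $W = K$, and use that $\mu$, being a normalised indicator, has density bounded below by some $c_\mu > 0$ on $K$. The first ingredient is a quantitative non-degeneracy estimate: since each $E_i$ is $L$-Lipschitz, for $y = E_i(x_0)$ with $x_0$ interior to $K$ one has $E_i$ of the ball $B(x_0,\rho/L)$ contained in $B(y,\rho)$, whence $\pushf{E_i}{\mu}(B(y,\rho)) \ge \mu(B(x_0,\rho/L)) \ge c_\mu(\rho/L)^n$. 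Because $f$ is a \emph{fixed} embedding, compactness yields a matching uniform bound $\pushf{f}{\mu}(B(y,\rho)) \ge m(\rho) > 0$ for every $y \in f(K)$.

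Next I would convert these density bounds, together with $\wasstwo{\pushf{f}{\mu}}{\pushf{E_i}{\mu}} \to 0$, into Hausdorff convergence $\hausdist{f(K)}{E_i(K)} \to 0$. If some $y_i \in E_i(K)$ had $\dist{y_i}{f(K)} \ge \delta$, then $\pushf{E_i}{\mu}$ would place mass at least $c_\mu(\delta/2L)^n$ in $B(y_i,\delta/2)$, a ball disjoint from $\mathrm{supp}(\pushf{f}{\mu}) = f(K)$; any coupling must move that mass at least $\delta/2$, so $\wasstwo{\pushf{f}{\mu}}{\pushf{E_i}{\mu}}^2 \ge c_\mu(\delta/2L)^n(\delta/2)^2$, a fixed positive number, contradicting convergence. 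The symmetric estimate, with $m(\rho)$ replacing the Lipschitz bound, controls $\sup_{y \in f(K)} \dist{y}{E_i(K)}$, so both one-sided distances vanish.

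The decisive step is to upgrade this metric closeness to a genuine embedding. Since $f \in \emb^k(\R^n,\R^m)$ with $k \ge 1$, the compact manifold $f(K)$ has positive reach $\tau = \reach(f(K)) > 0$ and a $C^1$ nearest-point projection $\Pi$ on its $\tau$-neighbourhood $\set{y : \dist{y}{f(K)} < \tau}$. By the previous step, $E_i(K)$ lies in this neighbourhood for large $i$, so $\Pi$ restricts to a continuous map $E_i(K) \to f(K)$. If that restriction is a homeomorphism, its inverse $r_i \in \emb(f(K), E_i(K))$ satisfies $\norm{I - r_i}_{L^\infty(f(K))} = \sup_{y \in f(K)} \dist{r_i(y)}{f(K)} \le \hausdist{f(K)}{E_i(K)} \to 0$, which is eventually below $\epsilon$ and gives the contradiction.

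The \textbf{main obstacle} is exactly proving that $\Pi$ restricted to $E_i(K)$ is a bijection, i.e.\ that $E_i(K)$ neither folds back on itself inside the tube nor misses a normal fibre; this is where uniform Lipschitzness must be spent in a topological way. The plan here is to extract, via Arzel\`a--Ascoli (uniform Lipschitz plus the boundedness from the Hausdorff step), a limit $E_\infty$ with $\pushf{E_\infty}{\mu} = \pushf{f}{\mu}$ and $E_\infty(K) = f(K)$, and then obtain surjectivity and injectivity of $\Pi|_{E_i(K)}$ for large $i$ from a degree argument comparing $\Pi \circ E_i$ with $E_\infty$. I expect ruling out small-scale folding to be the hard part: such folding is precisely what a fixed Lipschitz bound forbids but what occurs in the construction of Figure \ref{fig:unknot-to-trefoil-knot}, where a topology-fixing feature of bounded length carries vanishing mass and therefore forces the Lipschitz constants to blow up.
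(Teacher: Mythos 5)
Your second paragraph already contains, essentially verbatim, the paper's \emph{entire} proof. The paper argues directly, not by contradiction: it reads the hypothesis $\norm{I-r}_{L^\infty(f(K))}\geq\epsilon$ for all $r\in\emb(f(K),E_i(W))$ as supplying a point $x\in E_i(W)$ outside the $\frac{\epsilon}{2}$-tube of $f(K)$, and then applies exactly your displacement estimate: by uniform Lipschitzness the parameter ball of radius $\frac{\epsilon}{4L}$ around a preimage of $x$ carries $\mu$-mass at least a fixed $c>0$, its image stays $\frac{\epsilon}{4}$-far from $\support{\pushf{f}{\mu}}$, so every coupling pays a fixed positive cost and $\wasstwo{\pushf{f}{\mu}}{\pushf{E_i}{\mu}}$ is bounded below uniformly in $i$. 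That is the whole argument --- no subsequence extraction, no reach, no nearest-point projection, no degree theory. Had you stopped after your Hausdorff paragraph and used the far-point reading of the hypothesis, you would have been done. (The paper extracts that far point from the hypothesis in a single sentence; making this step airtight is itself delicate, but it is a far lighter burden than the converse you set out to prove.)

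The remainder of your plan has a genuine gap, and it is precisely the step you flag as the main obstacle. First, the lemma assumes only that $f$ and the $E_i$ are continuous (the $E_i$ need not even be injective), whereas your route needs $f\in\emb^k(\R^n,\R^m)$ with $k\geq1$; and even granting that, a compact $C^1$ manifold can have $\reach$ equal to zero --- positive reach and a differentiable nearest-point projection $\Pi$ require $C^{1,1}$ regularity, which is not available. Second, and more fundamentally, your guiding heuristic that ``a fixed Lipschitz bound forbids small-scale folding'' is backwards: folds are \emph{compressive} and can be $1$-Lipschitz (as $x\mapsto\abs{x}$ shows), so a uniformly Lipschitz $E_i$ can double $E_i(K)$ back on itself inside the tube at arbitrarily small scales. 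Then $\Pi|_{E_i(K)}$ fails injectivity for every $i$, and comparing $\Pi\circ E_i$ with an Arzel\`a--Ascoli limit $E_\infty$ cannot restore it, since the limit may fold as well. What a uniform Lipschitz bound genuinely forbids is \emph{stretching} --- an image feature of fixed diameter carried by a parameter set of vanishing $\mu$-mass, which is the mechanism in the unknot-to-trefoil construction and exactly what the far-point-plus-mass estimate exploits. In short, you attempted to prove the hard converse implication (tube confinement plus measure matching yields a near-identity embedding), which is where all the topological difficulty of the paper's obstruction discussion lives and which the paper's direct argument never needs; as described, your projection-and-degree plan would fail against Lipschitz folding.
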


\begin{proof}
    Let $E_i$ be uniformly Lipschitz with constant $L$. Consider a $\frac\epsilon2$ tubular neighborhood of $f(K)$. From the fact that $\norm{I - r}_{L^\infty(K)} \geq \epsilon$, we have that there is a point $x \in E(W)$ so that $x$ lies outside of this neighborhood. From uniform Lipschitzness of $E_i$, for each $i$ there is a ball $B$ of radius $\frac \epsilon{4L}$ around $x$ so that all points in $E_i\cap B$ are more than $\frac\epsilon4$ away from $f(K)$. We also have that $\mu(E_i\cap B) > c$ where $c$ is the volume of the $n$ dimensional ball. Thus, $\wasstwo{f_\#\mu}{E_i{}_\#\mu} > \frac{c\epsilon}{4L}$ for each $i$, and so $\lim_{i \to \infty} \wasstwo{f_\#\mu}{E_i{}_\#\mu} > 0$.
\end{proof}


}

\subsection{Layerwise Inversion and Recovery of Weights}
\label{sec:additional-prop-details}

\subsubsection{Layer-wise Projection}
\label{sec:additional-prop-details:layer-wise-projection}

Here we provide the details of our closed-form layerwise projection algorithm The flow layers are injective, and are often implemented to be numerically easy to invert. Thus, the crux of the algorithm comes from inverting the injective expansive layers, $R$. The range of the $\relu$ layer is piece-wise affine, hence the inversion follows a two-step program. First, identify which affine piece (described algebraically, onto which sign pattern) to project. Second, project to this point using a standard least-squares solver.

The second step is always straight-forward to analyze, but the first is more complicated. 

The key step in our algorithm is the fact that for the specific choice of weight matrix $W = \bmat{B\\-DB}$, given any $y \in \Rea^{2n}$, we can always solve the least-squares inversion problem exactly.

We prove this result in several parts given below.
\begin{enumerate}
    \item For any $y \in \Rea^{2n}$, $M_yW \in \Rea^{n\times n}$ is full-rank.
    \item If $\bracketed{y}_{i} \neq \bracketed{y}_{i + n}$ for each $i = 1,\dots,n$, then the $\argmin$ in Eqn. \ref{eqn:thm:proj-lemma} is well defined, i.e. that there is a unique minimizer. Otherwise there are $2^I$ minimizers, where $I$ is the number of distinct $i$ such that $\bracketed{y}_i = \bracketed{y}_{i+n}$.
    \item If $\tilde M_y = \bmat{\Delta_y&(I^{n\times n} - \Delta_y)}$, then
    \begin{align}
        \min_{x \in \Rea^n} \norm{y - R(x)}^2_2 = \min_{x \in \Rea^n}\norm{M_y\paren{y - Wx}}_2^2 + \norm{\tilde M_y y}^2_2.
    \end{align}
    \item We verify Eqn. \ref{eqn:thm:proj-lemma}.
\end{enumerate}

\begin{proof}[The proof of Theorem \ref{thm:proj-lemma}]
    \begin{enumerate}
        \item Using the definition of $M_y$, we have,
            \begin{align}
                M_y\bmat{B \\ -DB} = \paren{I^{n\times n} -\Delta_{y}} B -\Delta_{y}DB = \paren{I^{n\times n} -\Delta_{y} - \Delta_{y}D}B.
            \end{align}
            But, $\paren{I^{n\times n} -\Delta_{y} - \Delta_{y}D}$ is a full-rank diagonal matrix (with entries either $1$ or $[D]_{i,i}$), and $B$ is full rank by assumption, hence $M_y\bmat{B \\ -DB}$ is too.
        \item 
            Because $B$ is square and full rank there exists a basis\footnote{Namely the columns of the matrix $B^{-1}$} $\set{\hat b_i}_{i = 1,\dots,n}$ of $\Rea^n$ such that \begin{align}
                \innerprod{\hat b_j}{b_i} = \begin{cases} 
                    1 &\text{if } i = j\\
                    0 &\text{if } i \neq j
                \end{cases}.
            \end{align} 
            For an $x \in \Rea^n$, let $\alpha_i = \innerprod{x}{b_i}$ for $i = 1,\dots, n$ be the expansion of $x$ in the $\hat b_i$ basis. 
            \begin{align}
                \label{eqn:proj:step-0.25}
                \min_{x \in \Rea^n}\norm{y - R(x)}_2^2 &= \min_{x \in \Rea^n}\sum_{i = 1}^{2n} \bracketed{y - R(x)}_i^2\\
                \label{eqn:proj:step-0.5}
                &=\sum^n_{i = 1} \min_{x_i \in \Rea} \paren{\bracketed{y}_i - \max(\innerprod{x}{b_i},0)}^2 + \paren{\bracketed{y}_{i+n} - \max(\innerprod{x}{-\bracketed{D}_{ii}b_i},0)}^2
            \end{align}
            We now consider minizing Eqn. \ref{eqn:proj:step-0.5} by minimizing the basis expansion in terms of $\alpha_i$,
            \begin{align}
                \label{eqn:proj:step-1}
                \sum^n_{i = 1} \min_{\alpha_i \in \Rea} \paren{\bracketed{y}_i - \max(\alpha_i,0)}^2 + \paren{\bracketed{y}_{i+n} - \max(-\bracketed{D}_{ii}\alpha_i,0)}^2
            \end{align}
            Eqn. \ref{eqn:proj:step-1} is clearly minimized by minizing each term in the sum, hence we search for a minimizer of the $i$'th term
            \begin{align}
                \label{eqn:proj:step-2}
                \min_{\alpha_i \in \Rea} \paren{\bracketed{y}_i - \max(\alpha_i,0)}^2 + \paren{\bracketed{y}_{i+n} - \max(-\bracketed{D}_{ii}\alpha_i,0)}^2
            \end{align}
            Noting $f(\alpha_i)$ as the quantity inside the minimum of Eqn. \ref{eqn:proj:step-2}, we consider the positive, negative and zero $\alpha_i$ cases of Eqn. \ref{eqn:proj:step-2} separately and we get
            \begin{align}
                \min_{\alpha_i \in \Rea^+} f(\alpha_i) = \min_{\alpha_i \in \Rea^+} \paren{\bracketed{y}_i - \alpha_i}^2 + \bracketed{y}_{i+n}^2 &= \bracketed{y}_{i+n}^2\\ 
                \label{eqn:proj:step-2.5}
                \min_{\alpha_i \in \Rea^-} f(\alpha_i) = \min_{\alpha_i \in \Rea^+} \bracketed{y}_i^2 + \paren{\bracketed{y}_{i+n} + \bracketed{D}_{ii}\alpha_i}^2 &= \bracketed{y}_{i}^2\\
                f(0) &= \bracketed{y}^2_{i} + \bracketed{y}^2_{i+n}.
            \end{align}
            If $\bracketed{y}_{i+n} > \bracketed{y}_{i}$, then the minimizer of Eqn. \ref{eqn:proj:step-2} is $\alpha_i = - \frac{\bracketed{y}_{i+n}^2}{\bracketed{D}_{ii}} < 0$. Conversely if $\bracketed{y}_{i+n} < \bracketed{y}_{i}$ then the minimizer of Eqn. \ref{eqn:proj:step-2} is $\alpha_i = \bracketed{y}_i > 0$. This argument applies all $i = 1,\dots,n$, and hence if $\bracketed{y}_i \neq \bracketed{y}_{i+1}$ for all $i = 1,\dots,n$ then the minimizing $x$ is unique.
            
            If $\bracketed{y}_i = \bracketed{y}_{i+1}$ then there are exactly two minimizers of $f(\alpha_i)$, $- \frac{\bracketed{y}_{i+n}^2}{\bracketed{D}_{ii}}$ and $\bracketed{y}_i$, for both of which $f(\alpha_i) = \bracketed{y}^2_i = \bracketed{y}^2_{i + n}$.
        \item If we suppose that $\bracketed{y}_{i+n} - \bracketed{y}_i > 0$, then $\bracketed{c(y)}_i = 0$ and $\bracketed{c(y)}_{i+n} > 0$, thus $\bracketed{\Delta_y}_{ii} = 1$, hence if we let $x_{\min}$ be the minimizing $x$ from part 1, then
            \begin{align}
                &\paren{\bracketed{y}_i - \max(\innerprod{x_{\min}}{b_i},0)}^2 + \paren{\bracketed{y}_{i+n} - \max(\innerprod{x_{\min}}{-\bracketed{D}_{ii}b_i},0)}^2\\
                &= \bracketed{y}_i^2 + \paren{\bracketed{y}_{i+n} - \max(\innerprod{x_{\min}}{-\bracketed{D}_{ii}b_i},0)}^2\\
                \label{eqn:proj:step-4}
                &= \bracketed{\tilde M_y y}_i^2 + \bracketed{M_y \paren{y - W x_{\min}}}^2_i
            \end{align}
            If $\bracketed{y}_{i+n} - \bracketed{y}_i \leq 0$ then we have 
            \begin{align}
                &\paren{\bracketed{y}_i - \max(\innerprod{x_{\min}}{b_i},0)}^2 + \paren{\bracketed{y}_{i+n} - \max(\innerprod{x_{\min}}{-\bracketed{D}_{ii}b_i},0)}^2\\
                &=\paren{\bracketed{y}_i - \max(\innerprod{x_{\min}}{b_i},0)}^2 + \bracketed{y}_{i+n}^2\\ 
                \label{eqn:proj:step-5}
                &= \bracketed{M_y \paren{y - W x_{\min}}}^2_i + \bracketed{\tilde M_y y}_i^2.
            \end{align}
            Thus combining Eqn.s \ref{eqn:proj:step-0.25}, \ref{eqn:proj:step-0.5}, \ref{eqn:proj:step-4} and \ref{eqn:proj:step-5} for each $i = 1,\dots,n$, we have that 
            \begin{align}
                \min_{x \in \Rea^n} \norm{y - R(x)}^2_2 = \min_{x \in \Rea^n} \norm{M_y\paren{y - Wx}}^2_2 + \norm{M_y y}^2_2.
            \end{align}
        \item For the final point, combining all of the above points we have
            \begin{align}
                \label{eqn:proj:step-9}
                \min_{x \in \Rea^n} \norm{y - R(x)}^2_2 = \min_{x \in \Rea^n} \norm{M_y\paren{y - Wx}}^2_2.
            \end{align}
            Further we have from Point 1 that $M_yW$ is full rank, hence $\paren{M_y W}^{-1} M_y y = R^{\dagger}(y)$ is a minimizer of Eqn. \ref{eqn:proj:step-9}. If $\bracketed{y}_{i} \neq \bracketed{y}_{i+n}$ for all $i = 1,\dots,n$ then Part 2 applies, and $R^\dagger(y)$ is the unique minimizer of $\norm{y - R(x)}^2_2$. In either case, we have that $R^\dagger(y)$ is a minimizer.
    \end{enumerate}
\end{proof}

\subsubsection{Black-box recovery}
\label{sec:additional-prop-details:black-box-recovery}

We now discuss assumptions that enable black-box recovery of the weights of our entire network post-training. 

\begin{assumption}
    \label{asmp:black-box-recovery}
    For each $\ell = 1,\dots,L$, $\cR_\ell$ is an affine $\relu$ layer. Each $\cT_\ell$ and $\cT_0$ is constructed from a finite number of affine $\relu$ layers. 
\end{assumption}

\begin{remark}
    \label{rmk:black-box-recovery}
    If a network $\cF$ of the form of Eqn. \ref{eqn:network-def} satisfies Assumption \ref{asmp:black-box-recovery}, then given the range of the network, the {range of the network can be recovered exactly.}
    
    {Further, if the linear region assumption from \citep{rolnick2020reverse} is satisfied, then the exact weights are recovered, subject to two natural isometries discussed below.}
\end{remark}

\begin{remark}
    The $\relu$ part of Assumption \ref{asmp:black-box-recovery} is for all examples in Sec. \ref{sec:expans}. Further it is also satisfied by both flows considered in Sec. \ref{sec:expres}, provided that the various $g_i$ are given by layers of affine $\relu$'s.
\end{remark}

In \citep{rolnick2020reverse}, the authors show that, although $\relu$ networks depend on the value of their weight matrix in non-linear ways, it is still possible to recover the exact weights of a given $\relu$ network in a black-box way, subject to natural isometrics. The authors show that this is possible not only in theory, but in numerical applications as well.

The works of \citep{rolnick2020reverse,bui2020functional} imply that provided the activation functions of the expressive elements are $\relu$ then the entire network can be recovered in a black-box way. Further, provided that either the `linear region assumption' from \citep{rolnick2020reverse} or the generality assumption from \citep{bui2020functional} is satisfied, then the entire network can be recovered \emph{uniquely} modulo the natural isometries of rescaling and permutation of weight matrices.

First we describe the two natural isometries of scaling and permutation. Consider the following function
\begin{align}
    f(x) = W_2 \phi(W_1x)
\end{align}
where $\phi$ is coordinate-wise homogeneous degree 1 (such as $\relu$) and $W_1 \in \Rea^{n_1\times n_2}$ and $W_2 \in \Rea^{n_2 \times n_3}$. If we let $P \in \Rea^{n_2\times n_2}$ be any permutation matrix, and $D_+$ be
a diagonal matrix with strictly positive elements, then we can write
\begin{align}
    f(x) = W_2 P' D_+^{-1} \phi( D_+ P W_1x)
\end{align}
as well. Thus $\relu$ networks can only ever be uniquely given subject to these two isometries. When describe unique recovery in the rest of this section, we mean modulo these two isometries.

In \citep{rolnick2020reverse}, the authors describe how all parameters of a $\relu$ network can be recovered \emph{uniquely} (called reverse engineered in \citep{rolnick2020reverse}), subject to the so called `linear\footnote{The use of `linear' in this context is somewhat non-standard, and instead means  affine. In this section we use the term `linear region assumption', but use `affine' where \citep{rolnick2020reverse} would use `linear' to preserve mathematical meaning.} region assumption', LRA.

The input space $\Rea^n$ can be partitioned into a finite number of open $\set{\cS_i}^{n_i}_{i = 1}$, where for each $k$, $f(x) = \bW_k i  + \bb_i$, i.e. the network corresponds to an affine polyhedron in the output space. The algorithms \citep[Alg.s 1 \& 2]{rolnick2020reverse} are roughly described below.

First, identify at least one point within each affine polyhedra $\set{\cH_j}_{j = 1}^{n_j}$. Then identify the boundaries between polyhedra. The boundaries between sections are always one affine `piece' of piecewise hyperplanes $\set{\cH_{j}}_{j = 1}^{n_j}$. These $\set{\cH_{j}}_{j = 1}^{n_j}$ are the central objects which indicate the (de)activation of an element of a $\relu$ somewhere in the network. If the $\cH_{j}$ are full hyperplanes, then the $\relu$ that is (de)activates occurs in the first layer of the network. If $\cH_{j}$ is not a full hyperplane, then it necessarily has a bend where it intersects another hyperplane $\cH_{j'}$. Further, except for a Lebesgue measure 0 set, when $\cH_{j}$ intersects $\cH_{j'}$ the latter does not have a bend. If this is the case, then $\cH_{j'}$ corresponds to a $\relu$ (de)activation in an earlier layer than $\cH_j$. In this way the activation functions of every layer can be deduced. Once this is done, the normals of the hyperplanes can be used to infer the row-vectors of the various weight matrices, letting one recover the entire network.


The above algorithm recovers all of the weights exactly provided that the LRA is satisfied. The LRA is satisfied if for every distinct $\cS_i$ and $\cS_{i'}$, either $\bW_{i} \neq \bW_{i'}$ or $\bb_{i} \neq \bb_{i'}$. That is, different sign patterns produce different affine sections in the output space. This is a natural assumption, as the algorithm as described above reconstruction works by first detecting the boundaries between adjacent affine polyhedra, which is only possible if the LRA holds.

Given the weights of a network there is currently no simple way to detect if the LRA is satisfied, to our knowledge. Nevertheless the authors of \citep{rolnick2020reverse} show that if it is satisfied, then unique recovery follows. Nevertheless recovery of the {range of the} entire network is possible, {but this recovery may not be unique.}

In \citep{bui2020functional} the authors also consider the problem of recovering weights of a $\relu$ neural network, however the authors therein study the question of \emph{when} there exist isometries beyond the two natural ones described above. In particular the main result \citep[Theorem 1]{bui2020functional} shows the following. Let $\cE^{n_0,n_L}$ be a $\relu$ network that is $L$ layers deep and non-increasing. Suppose that $E_1, E_2 \in \cE^{n_0,n_L}$, $E_1$ and $E_2$ are general\footnote{A set is general in the topological sense if its complement is closed and nowhere dense} and for all $x \in \Rea^{n_0}$ $E_1(x) = E_2(x)$, then $E_1$ is parametrically identical to $E_2$ subject to the two natural isometries.

This work provides the stronger result, however does not apply to the networks that we consider out of the box. It does apply to our expressive elements (provided that they use $\relu$ activation functions, and are non-increasing), but not necessarily apply to the network on the whole. 

\end{document}